\documentclass[ijoc,sglanonrev]{informs4}
\usepackage{eqndefns-left} % For checking the display equation width and equation environment definitions %
\RequirePackage{tgtermes}
\RequirePackage{newtxtext}
\RequirePackage{newtxmath}
\RequirePackage{bm}
\RequirePackage{endnotes}

\OneAndAHalfSpacedXII % Current default line spacing
\usepackage{algorithm}
\usepackage{tikz}
\usepackage{subfigure}
\usepackage{booktabs}
\usepackage{algorithmic}

\usepackage{natbib}
 \bibpunct[, ]{(}{)}{,}{a}{}{,}%
 \def\bibfont{\small}%

\def\N{\mathbb{N}}
\def\P{\mathbb{P}}
\def\p{\mathbb{P}}
\def\E{\mathbb{E}}

\def\R{\mathbb{R}}

\def\M{\mathcal{M}}

\def\H{\mathcal{H}}

\def\X{\mathcal{X}}

\def\d{\,\mathrm{d}}
\DeclareMathOperator*{\esssup}{ess\,sup}
\DeclareMathOperator*{\essinf}{ess\,inf}

\newcommand{\VaR}{\mathrm{VaR}}
\newcommand{\ES}{\mathrm{ES}}

\makeatletter  %算法跨页
\newenvironment{breakablealgorithm}
  {% \begin{breakablealgorithm}
   \begin{center}
     \refstepcounter{algorithm}% New algorithm
     \hrule height.8pt depth0pt \kern2pt%
     \renewcommand{\caption}[2][\relax]{% Make a new \caption
       {\raggedright\textbf{\ALG@name~\thealgorithm} ##2\par}%
       \ifx\relax##1\relax%
         \addcontentsline{loa}{algorithm}{\protect\numberline{\thealgorithm}##2}%
       \else%
         \addcontentsline{loa}{algorithm}{\protect\numberline{\thealgorithm}##1}%
       \fi%
       \kern2pt\hrule\kern2pt
     }
  }{% \end{breakablealgorithm}
     \kern2pt\hrule\relax%
   \end{center}
  }
\makeatother

\EquationsNumberedThrough    % Default: (1), (2), ...
\TheoremsNumberedThrough     % Preferred (Theorem 1, Lemma 1, Theorem 2)
\ECRepeatTheorems  %  

\MANUSCRIPTNO{IJOC-0001-2024.00}

\begin{document}
%%%%%%%%%%%%%%%%

% Outcomment only when entries are known. Otherwise leave as is and
%   default values will be used.
%\setcounter{page}{1}
%\VOLUME{00}%
%\NO{0}%
%\MONTH{Xxxxx}% (month or a similar seasonal id)
%\YEAR{0000}% e.g., 2005
%\FIRSTPAGE{000}%
%\LASTPAGE{000}%
%\SHORTYEAR{00}% shortened year (two-digit)
%\ISSUE{0000} %
%\LONGFIRSTPAGE{0001} %
%\DOI{10.1287/xxxx.0000.0000}%

% Author's names for the running heads
% Sample depending on the number of authors;
% \RUNAUTHOR{Jones}
% \RUNAUTHOR{Jones and Wilson}
% \RUNAUTHOR{Jones, Miller, and Wilson}
% \RUNAUTHOR{Jones et al.} % for four or more authors
% Enter authors following the given pattern:
%\RUNAUTHOR{}
\RUNAUTHOR{Yang Liu, Yuhao Liu and Yunran Wei}

% Title or shortened title suitable for running heads. Sample:
% \RUNTITLE{Predictive Maintenance in Manufacturing}
% Enter the (shortened) title:
\RUNTITLE{A Noise-Robust Elicit-to-Optimize Framework for Distortion Riskmetrics via Inverse Reinforcement Learning}

% Full title. Sample:
% \TITLE{Optimal Resource Allocation in Humanitarian Logistics: A Stochastic Programming Approach}
% Enter the full title:
\TITLE{A Noise-Robust Elicit-to-Optimize Framework for Distortion Riskmetrics via Inverse Reinforcement Learning}

% Block of authors and their affiliations starts here:
% NOTE: Authors with same affiliation, if the order of authors allows,
%   should be entered in ONE field, separated by a comma.
%   \EMAIL field can be repeated if more than one author
\ARTICLEAUTHORS{%
%\AUTHOR{John Doe,\textsuperscript{a} Jane Smith,\textsuperscript{b}}
%\AFF{\textsuperscript{a}Department of Industrial Engineering, University of XYZ, \EMAIL{john.doe@xyz.edu; \textsuperscript{b}Department of Computer Science, University of ABC, \EMAIL{jane.smith@abc.edu}} 
\AUTHOR{Yang Liu}
\AFF{School of Science and Engineering, The Chinese University of Hong Kong (Shenzhen), China. \EMAIL{yangliu16@cuhk.edu.cn}}

\AUTHOR{Yuhao Liu\textsuperscript{*}}
\AFF{School of Science and Engineering, The Chinese University of Hong Kong (Shenzhen), China. \EMAIL{liuyuhao@cuhk.edu.cn}\\
\textsuperscript{*}Corresponding author}

\AUTHOR{Yunran Wei}
\AFF{School of Mathematics and Statistics, Carleton University, Canada. \EMAIL{yunran.wei@carleton.ca}}

% Enter all authors
} % end of the block

\ABSTRACT{%
% Enter your abstract
We propose a noise-robust elicit-to-optimize framework that integrates inverse reinforcement learning (IRL) and reinforcement learning (RL) for eliciting agents' risk preferences and  optimizing the policies under a broad class of risk objectives, characterized by distortion riskmetrics. On the elicitation side, we propose an adaptive Bayesian IRL method that infers agents’ latent risk objectives from their noisy observed decisions, explicitly allowing agents to take stochastic and suboptimal actions. We establish the existence of a finite set of distinguishing questions that identifies the
preferred distortion riskmetrics within the candidate class and prove that the convergence rate of the algorithm is of order $O\left(\exp\left(-cm+O\left(\sqrt{m\log m}\right)\right)\right)$ under general settings, where $c>0$ is a constant and $m$ denotes the number of algorithm iterations. On the optimization side, we develop a model-free RL algorithm for optimizing policies under conditional distortion riskmetrics. By representing the objective as an integral of the conditional cost quantile function with respect to the distortion function, the method unifies all distortion-riskmetric objectives. We optimize diverse risk objectives by extending the Proximal Policy Optimization (PPO) algorithm with policy, value, and quantile neural networks, where the quantile network estimates the full conditional cost quantile function and enables numerical evaluation of the general risk objectives. A comprehensive
empirical study demonstrates the framework’s elicitation accuracy and effectiveness in complex financial environments.
}%

%\FUNDING{This research was supported by [grant number, funding agency].}

%Supplemental Material:
%Data Ethics & Reproducibility Note:

% Sample
%\KEYWORDS{Stochastic programming, Decision support,Uncertainty, Disaster response, Optimization}

% Fill in data. If unknown, outcomment the field
\KEYWORDS{inverse reinforcement learning,  risk-sensitive reinforcement learning, conditional distortion riskmetrics, elicit-to-optimize, quantile neural networks} 
%\HISTORY{Received: Month DD, YYYY; Accepted: Month DD, YYYY; Published Online: Month DD, YYYY}

\maketitle
%%%%%%%%%%%%%%%%%%%%%%%%%%%%%%%%%%%%%%%%%%%%%%%%%%%%%%%%%%%%%%%%%%%%%%

% Text of your paper here

\section{Introduction}
In risk-sensitive decision-making,  it is important to specify which risk objective the agent aims to optimize. This identification problem is crucial in personalized applications such as robo-advising (\cite{alsabah2021robo}, \cite{capponi2022personalized}) and autonomous driving (\cite{majumdar2017risk}), where agents may exhibit heterogeneous and complex risk preferences. Existing studies often focus on mean-variance objectives or coherent risk measures,
which cover only part of the range of risk preferences observed in practice. Real-world preferences may involve quantiles, deviation-based objectives, asymmetric tail concerns, or dispersion control, which often fall outside the convex or coherent class. In quantitative risk management, \cite{WWW20} proposed distortion riskmetrics as a unified framework covering many risk measures, deviation measures, and other functionals for financial risk. \cite{glynn2021computing} proposed a sensitivity estimator for the distortion risk
measure. \cite{DLZ22} further introduced conditional distortion risk measures and distortion risk contribution measures to quantify systemic risk. Despite the theoretical generality and unifying power of distortion riskmetrics, eliciting the appropriate risk objective remains challenging, since agents are often unable to precisely specify their own risk preferences. A common approach is to infer such preferences from agents’ observed behavior. Inverse reinforcement learning (IRL) estimates an agent's objective from its policy (\cite{arora2021survey}) and has been applied to autonomous driving (\cite{majumdar2017risk}, \cite{kuderer2015learning}, \cite{sadigh2016planning}), legged robot locomotion (\cite{zucker2010optimization}, \cite{park2013inverse}), and robo-advising (\cite{cheng2023eliciting}). While these frameworks are promising, they assume that agents’ choices are always consistent with their risk preferences, which may not hold in practice.

Once the risk objective is identified, the next step is to optimize decisions under that objective. Risk-sensitive reinforcement learning (RSRL) has attracted growing attention for decision making under uncertainty. Unlike traditional risk-neutral RL, which optimizes expected cumulative rewards, RSRL explicitly accounts for the variability and tail behavior of returns, aiming to mitigate potentially catastrophic outcomes. Prior work has demonstrated its effectiveness in option hedging (\cite{buehler2019deep}, \cite{peng2024risk}), statistical arbitrage (\cite{coache2024reinforcement}, \cite{han2025risk}), marketing (\cite{chow2018risk}), pandemic control (\cite{du2025transfer}), autonomous driving (\cite{stachowicz2024racer}, \cite{bernhard2019addressing}, \cite{kamran2020risk}), robotics and control (\cite{zhang2021mean}), and other domains. However, existing RSRL methods mainly focus on a limited set of risk measures or scoring functions, especially convex or coherent ones; see \cite{han2025risk}. This leaves a gap between the broad class of risk objectives that may be elicited from agents and the narrower objectives currently supported by RL algorithms. Integrating general distortion riskmetrics into RL therefore provides a natural way to connect risk-objective identification with risk-sensitive policy optimization.

In this paper, we develop a noise-robust elicit-to-optimize framework that integrates IRL and RL for risk-preference elicitation and risk-sensitive decision making under a broad class of distortion riskmetrics. The framework first elicits the agents' latent risk preferences from their noisy observed choices and then uses the elicited distortion riskmetrics as the objectives for subsequent policy optimization. To reduce the cognitive burden on agents, we collect agents' choices through binary-choice questions. An adaptive framework (\cite{buning2022interactive}, \cite{cheng2023eliciting}) is then applied, which dynamically adjusts the questions according to the previous responses. We establish the existence of a finite set of distinguishing questions, characterize their discrimination power, and derive an upper bound for the power. Since agents may act inconsistently with their true risk preferences, the observed choices are inherently noisy. To address this issue, we propose a Bayesian IRL method that identifies the agent's preferred distortion riskmetric from stochastic choices. We prove that, under mild conditions, 
%our algorithm effectively identifies the agent's risk preference from a very broad class of distortion riskmetrics despite stochastic choice-making in a general setting. In particular, 
our algorithm converges to the agents' preferred distortion riskmetrics at the rate $O\left(\exp\left(-cm+O\left(\sqrt{m\log m}\right)\right)\right)$, where $c>0$ is a constant and 
$m$ denotes the number of adaptive questions. Compared with \cite{cheng2023eliciting}, whose IRL method achieves an inverse-square-root convergence rate up to logarithmic factors because it identifies more structural components, our approach is complementary: it covers a broad class of distortion riskmetrics and explicitly accounts for stochastic decision noise.   Given the elicited risk objective,  we develop a model-free RL algorithm that minimizes costs under the corresponding conditional distortion riskmetric. By representing the objective as a definite integral of the cost quantile function with respect to the distortion function on $[0,1]$, the RL method unifies all distortion-riskmetric objectives. We extend the Proximal Policy Optimization (PPO) algorithm (\cite{schulman2017proximal}) by introducing three neural networks into the RL framework: a policy network, a value network, and a quantile network. Unlike \cite{peng2024risk}, which incorporates a VaR network to estimate a fixed tail quantile, our quantile network approximates the entire cost quantile function, allowing general distortion-riskmetric objectives to be evaluated by midpoint Riemann sums. By considering conditional distortion riskmetrics, the proposed RL method accounts for scenario-dependent losses and strengthens risk control across heterogeneous environments.

Our contribution can be summarized as follows: (i) We extend the conditional distortion risk measures studied in \cite{DLZ22} to the broader class of distortion riskmetrics by allowing the distortion function to be non-monotone.  (ii) We propose a Bayesian IRL framework for identifying a broad class of distortion riskmetrics through adaptive interactive questioning. The method accommodates stochastic choices and therefore applies to settings in which agents do not always act consistently with their latent risk preferences. Under mild conditions, we prove that the proposed IRL algorithm converges at the rate $O\left(\exp\left(-cm+O\left(\sqrt{m\log m}\right)\right)\right)$, where $c>0$ is a constant and 
$m$ denotes the number of adaptive questions. Extensive numerical experiments verify the theoretical results and effectiveness of our IRL algorithm. (iii) We develop a model-free RL algorithm based on an extension of the Proximal Policy
Optimization (PPO), for optimizing objectives defined by conditional distortion riskmetrics. Our algorithm accommodates a wide range of conditional distortion-riskmetric objectives, thereby capturing diverse attitudes toward downside risk, variability, and tail events within a unified framework. Moreover, by incorporating scenario dependence into the risk objective, the resulting policies perform effectively in complex and dynamic environments. Numerical experiments further validate the performance of the proposed RL algorithm.

The rest of the paper is organized as follows. Section \ref{sec: Conditional Distortion Riskmetrics} introduces the notion and properties of distortion riskmetrics and conditional distortion riskmetrics. Section \ref{sec:Estimation of the Distortion Function} establishes the IRL algorithm for eliciting the true underlying risk preference, provides important quantities for the theoretical results, and proves the convergence rate of our algorithm. Section \ref{sec: Risk-Sensitive Reinforcement Learning} develops an RL algorithm  to solve the decision optimization problem under different distortion-riskmetric objectives. Section \ref{sec:Numerical Experiment} illustrates the performances of our proposed algorithms by numerical experiments, and Section \ref{sec:conslusion} concludes the paper.

\section{Notation and Preliminaries}\label{sec: Conditional Distortion Riskmetrics}
Let $(\Omega,\mathcal{F}, \p)$ be an atomless probability space. Let $(\mathcal{F}_t)_{t\ge 0}$ denote the information available up to time $t$.  For any $p\in[1,\infty)$, $L^p$ is the space of random variables with finite $p$-th moment, and $L^\infty$ is the space of essentially bounded random variables.  
Let $\X\supset L^\infty$ be a convex cone and $\M$ denote the set of distribution functions of random variables in $\X$. For $F\in\M$,   $X\sim F$ means that $X\in\X$ has distribution $F$. Denote by $F_X$   the distribution function of the random variable $X$. Equality in distribution is denoted by $X \overset{\mathrm{d}}{=} Y$. Accordingly, $X \overset{\mathrm{d}}{\ne} Y$ indicates that $X$ and $Y$ have different distributions. We define the left-continuous generalized inverse of $F$ (left-quantile) as
 $$F^{-1}(t)=\inf\{x\in\R:F(x)\geq t\},\quad t\in (0,1],$$
 while the right-continuous generalized inverse of $F$ (right-quantile) is defined as
 $$F^{-1+}(t)=\inf\{x\in\R:F(x)> t\},\quad t\in [0,1).$$
 Conventionally, we define $F^{-1}(0)= F^{-1+}(0)$ and 
 $F^{-1+}(1)=F^{-1}(1)$. 
 
The conditional distortion riskmetric is defined in terms of the distortion riskmetric studied in \cite{WWW20} on a general space denoted by  
$$\H= \{h: h \text{ maps }[0,1] \text{ to } \R,~ h \text{ is of bounded variation, } h(0)=0 \}.$$
For convenience, we include the definition of the distortion riskmetric  below. 
 \begin{definition}\label{def:distortion-risk-measure}
  A functional $\rho_h:\X\to\R$, whose domain $\X\supset L^\infty$ is a law-invariant convex cone, is a \emph{distortion riskmetric} if there exists $h\in \H$ such that $\rho_h(X)=\int X\d h\circ\P$, where $\int X\d h\circ\P$ is a signed Choquet integral defined by
 	\begin{equation}\label{eq:distortion-riskmetric-unconditional}
 		\int X\d h\circ\P=\int_{-\infty}^{0}\left(h(\P(X\geq x))-h(1)\right)\d x+\int_0^\infty h(\P(X\geq x))\d x.
 		\end{equation}
 	The function $h$ is called the \emph{distortion function} of $\rho_h$.
    %We denote the distortion riskmetric with distortion function $h$ as $\rho_h$.
 \end{definition} 
For a given distortion riskmetric, the distortion function is unique. For example, if $h(t)=\mathbf{1}_{\{t>1-\alpha\}}$, the distortion riskmetric is Value-at-Risk (VaR$_\alpha$): $\rho_h(X)=F^{-1}_X(\alpha)$, where $\alpha\in(0,1)$.
More examples can be found in \cite{WWW20}. In the following, we provide the definition of a conditional distortion riskmetric.

\begin{definition}\label{cond-distortion}
Let $
\mathcal C:=\{C\in\mathcal F:\mathbb P(C)>0\}.
$
For
$C\in\mathcal C$, define
$
\mathbb P_C(A):=\mathbb P(A\mid C),~ A\in\mathcal F.$ A \textit{conditional distortion riskmetric}  $\rho_{h|\cdot}:\mathcal X\times\mathcal C\to\mathbb R$ is defined by
$$
\rho_{h|\cdot}(X,C)=\rho_{h|C}(X):=\int X\,\d h\circ\mathbb P_C.
$$
Equivalently,
$$
\begin{aligned}
\rho_{h|C}(X)
=
&
\int_{-\infty}^{0}
\left(
h\left(
\mathbb P(X\ge x\mid C)
\right)
-h(1)
\right)\d x +
\int_{0}^{\infty}
h\left(
\mathbb P(X\ge x\mid C)
\right)\d x .
\end{aligned}
$$

% For a fixed event $C$, we simply write
% $$
% \rho_{h|C}:\mathcal X\to\mathbb R.
% $$
% For $X\in\mathcal X$, the event-conditioned distortion riskmetric is defined by
% $$
% \rho_{h|C}(X)
% :=
% \int X\,dh\circ\mathbb P_C.
% $$
% Equivalently,
% $$
% \begin{aligned}
% \rho_{h|C}(X)
% =
% &
% \int_{-\infty}^{0}
% \left(
% h\left(
% \mathbb P(X\ge x\mid C)
% \right)
% -h(1)
% \right)dx 
% +
% \int_{0}^{\infty}
% h\left(
% \mathbb P(X\ge x\mid C)
% \right)dx .
% \end{aligned}
% $$

% If the conditioning event is generated by a condition process
% $Y_{0:T}$, then $C$ may be chosen as
% $$
% C=C_{y_{0:T}}
% :=
% \{Y_{0:T}=y_{0:T}\}
% =
% \{Y_0=y_0,\ldots,Y_T=y_T\}.
% $$
% For threshold-type conditions, one may instead take
% $$
% C=C_{y_{0:T}}^{\ge}
% :=
% \{Y_0\ge y_0,\ldots,Y_T\ge y_T\}.
% $$
\end{definition}

\begin{comment}
\begin{remark}
	A conditional distortion riskmetric $\rho_{h|y}$ can be equivalently expressed as
	\begin{equation}
	\label{eq:alt}
	\rho_{h|y}(X| Y)=\int_{-\infty}^{0}\left(h(\P(X> x|Y \ge y))-h(1)\right)\d x+\int_0^\infty h(\P(X> x|Y \ge y))\d x.
	\end{equation}
	Indeed, since $\P(X>x|Y \ge \rho_g(Y))=\P(X\ge x|Y \ge \rho_g(Y))$ almost everywhere on $\R$.
\end{remark}
\end{comment}
 
 \begin{comment}
 \begin{remark}
The conditional distortion riskmetric is a $\mathcal{Q}$-based risk measure defined in \cite{WZ21} with $\mathcal{Q}=\{\p[\cdot|Y\ge\rho_g(Y)]\}$.
 \end{remark} 
\end{comment}
Below we obtain the quantile representation of conditional distortion riskmetrics, which can be seen as the parallel results of Lemma 1 in \cite{WWW20}. 
For $X\in L^0$, denote by $F_{X|C}$ the conditional distribution function of
$X$ under $\mathbb P_C$.
\begin{lemma}\label{lemma:representation-of-conditional-distortion-riskmetrics}
For $h\in\mathcal H$, $C\in\mathcal C$, and $X\in L^0$ such that
$\rho_{h|C}(X)$ is well-defined, possibly taking values in $\pm\infty$, the
following statements hold.

(i) If $h$ is right-continuous, then
$$
\rho_{h|C}(X)
=
\int_0^1
F_{X|C}^{-1+}(1-u)\,\d h(u).
$$

(ii) If $h$ is left-continuous, then
$$
\rho_{h|C}(X)
=
\int_0^1
F_{X|C}^{-1}(1-u)\, \d h(u).
$$

(iii) If $F_{X|C}^{-1}$ is continuous on $(0,1)$, then
$$
\rho_{h|C}(X)
=
\int_0^1
F_{X|C}^{-1}(1-u)\, \d h(u)
=
\int_0^1
F_{X|C}^{-1+}(1-u)\, \d h(u).
$$
\end{lemma}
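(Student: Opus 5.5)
The plan is to reduce the statement to the unconditional quantile representation of Lemma 1 in \cite{WWW20}, applied on the probability space $(\Omega,\mathcal F,\mathbb P_C)$. By Definition \ref{cond-distortion}, $\rho_{h|C}(X)$ is exactly the signed Choquet integral (\ref{eq:distortion-riskmetric-unconditional}) with $\mathbb P$ replaced by $\mathbb P_C$. It therefore depends on $X$ only through the tail function $x\mapsto \mathbb P_C(X\ge x)$, which is determined by the distribution $F_{X|C}$. Concretely, take $\tilde X$ on an auxiliary atomless space with $\tilde X\sim F_{X|C}$, for instance $\tilde X=F_{X|C}^{-1}(U)$ with $U$ uniform on $(0,1)$. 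Then $\rho_{h|C}(X)=\rho_h(\tilde X)$, and Lemma 1 of \cite{WWW20} applied to $\tilde X$ gives (i), (ii) and (iii) at once, since $F_{\tilde X}=F_{X|C}$. The atomless assumption on the original space is not needed for this step, because the auxiliary space carries $\tilde X$. One does need to check that well-definedness (possibly $\pm\infty$) transfers; it does, since both expressions are the same integrals.

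To keep the argument self-contained, I would also sketch the direct proof in the WWW20 style. First split $h=h_1-h_2$ with $h_1,h_2$ increasing, $h_i(0)=0$, and with the same one-sided continuity as $h$. This uses the Jordan decomposition; in case (i) the jumps are assigned so that right-continuity is preserved. By linearity of both sides, assuming no $\infty-\infty$ arises, it suffices to treat increasing $h$. Next, for $X\ge 0$ write
\[
\int_0^\infty h(\mathbb P_C(X\ge x))\,\d x=\int_0^\infty\int_0^1 \mathbf 1_{\{u< \mathbb P_C(X\ge x)\}}\,\d h(u)\,\d x
\]
when $h$ is right-continuous. Exchange the integrals by Tonelli. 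The inner set $\{x\ge 0: \mathbb P_C(X\ge x)>u\}$ has Lebesgue measure $F_{X|C}^{-1+}(1-u)^+$, where the strict inequality pairs with the right quantile. For left-continuous $h$, use $\mathbf 1_{\{u\le \cdot\}}$ instead, which yields $F_{X|C}^{-1}(1-u)$. Treat the negative part on $(-\infty,0)$ symmetrically, using $h(1)-h(p)=\int \mathbf 1_{\{u\ge p\}}\d h$ or its strict version. Finally, combine the positive and negative parts.

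For (iii), continuity of $F_{X|C}^{-1}$ on $(0,1)$ means that $F_{X|C}^{-1}$ and $F_{X|C}^{-1+}$ differ only at countably many points. The task is to show that, when $h$ has neither one-sided continuity, the atoms of $\d h$ do not matter. I would decompose $h$ as a continuous part plus jumps and handle each jump at $u_0$ separately. Continuity of the quantile at $1-u_0$ gives $F_{X|C}^{-1}(1-u_0)=F_{X|C}^{-1+}(1-u_0)$. The same holds at the endpoints because of the conventions for $F^{-1}(0)$ and $F^{-1+}(1)$.

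The main obstacle is the bookkeeping in (iii) and in the endpoint conventions. The issue is that $\d h$ must be interpreted as a Lebesgue–Stieltjes measure of a function that is neither left- nor right-continuous. This includes possible atoms at $0$ and $1$, and the convention $h(0)=0$ interacts with them. Here I would rely on Lemma 1 of \cite{WWW20}, via the reduction in the first paragraph, rather than redo it.
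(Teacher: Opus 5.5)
Your first paragraph is the paper's proof. On $\mathbb P_C$, $\rho_{h|C}(X)$ is the ordinary signed Choquet integral and depends on $X$ only through $F_{X|C}$, so Lemma 1 of \cite{WWW20} gives (i)--(iii) directly. That argument is complete on its own. The auxiliary $\tilde X$ does no harm but is not needed, since $\mathbb P_C$ inherits atomlessness from $\mathbb P$.

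Your optional direct sketch, however, has both strict/non-strict pairings reversed, and the two errors happen to cancel:
\begin{itemize}
\item[] For increasing right-continuous $h$ with $h(0)=0$, one has $h(p)=\int_0^1\mathbf 1_{\{u\le p\}}\,\d h(u)$. The strict indicator gives $h(p-)$ instead.
\item[] The set $\{x\ge 0:\mathbb P_C(X\ge x)\ge u\}$, not the strict one, has Lebesgue measure $F_{X|C}^{-1+}(1-u)^+$. The strict set has measure $F_{X|C}^{-1}(1-u)^+$.
\end{itemize}
For example, take $h=\mathbf 1_{\{t\ge 1/2\}}$ and $X$ Bernoulli$(1/2)$ under $\mathbb P_C$. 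Your intermediate double integral equals $0$, while $\rho_{h|C}(X)=F_{X|C}^{-1+}(1/2)=1$. On the negative half-line the same care is needed: right-continuity calls for $h(1)-h(p)=\int_0^1\mathbf 1_{\{u>p\}}\,\d h(u)$.
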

\begin{proof}
\noindent
Note $\mathbb P_C$ is a probability measure on $(\Omega,\mathcal F)$.  The result follows by applying the unconditional quantile
representation (Lemma 1 in \cite{WWW20}) under the probability measure $\mathbb P_C$. In particular, replace
$\mathbb P$  by $\mathbb P_C$,  $F_X$ by $F_{X|C}$, and $
\rho_h$ by $\rho_{h|C}.$
This gives (i), (ii), and (iii).
\end{proof}

\section{Noise-Robust IRL Approach for Elicitation}\label{sec:Estimation of the Distortion Function}
\subsection{Setup}
In this section, we investigate the IRL problem with the goal of recovering the agent’s preferred distortion function. We consider a finite candidate set of distortion functions:
\begin{equation*}
    \hat{\mathcal{H}}:=\{h_1(t),h_2(t),\cdots,h_L(t) \}\subset \mathcal{H},
\end{equation*}
where $t\in [0,1]$ and $|\hat{\H}|=L$. The reason for the restriction to a finite set of distortion functions is that different distortion functions may induce the same ordering over the
available actions and are therefore indistinguishable from observed choices. Many applications naturally describe risk preferences through a finite set of representative types. For example, driving styles in autonomous driving are often classified as conservative, moderate, or aggressive. In robo-advising, clients are commonly assigned to a finite set of risk
profiles or portfolio types.
Thus it
is sufficient to recover a representative distortion function from a finite set that best
matches the observed behavior. 

%By iteratively querying the agents to elicit their choices, we select from this finite set $\hat{\mathcal{H}}$ the distortion function that best matches the agent's true preference. 
%Because distortion functions and distortion riskmetrics are in one-to-one correspondence, we identify the agent's preferred distortion function by identifying the agent's preferred distortion riskmetric.
We consider binary-choice problems with action space $\mathcal{A}'=\{a_1,a_2\}$.
\begin{comment}
The cost function is:
\begin{equation}
    c(S_t,A_t,S_{t+1})=S_t-S_{t+1}.
\end{equation}
\end{comment}
  Let $\widehat{Q}_l^{(n)}$ denote the posterior probability at round $n$, for $l=1,2,\cdots, L$. We initialize the prior as $\widehat{Q}_l^{(0)}=1/L$ for all $l$, and update the probability vector $\{\widehat{Q}_l^{(n)}:l=1,\ldots,L\}$ after each observed response. The binary-choice question at round $n$ is defined as: $G^{(n)}:=\left\{(X_n, Y_n): X_n\overset{\mathrm{d}}{\ne} Y_n, X_n,Y_n\in L^\infty\right\}$. Let $C(a^{(n)})\in \{X_n,Y_n\}$ denote the choice under $a^{(n)}\in\mathcal{A}'$. Hence, its risk value under the distortion risk measure $\rho_{h_l}$ is given by
\begin{equation}\label{eq:value-function-Terminal-any-action}
    \begin{aligned}
    \rho_{h_l}(C(a^{(n)}))=  \int_{-\infty}^{0}\left(h_l\left(\P\left( C(a^{(n)})\geq x \right)\right)-h_l(1)\right)\d x 
    +\int_0^\infty h_l \left(\p \left(C(a^{(n)}) \geq x \right)\right) \d x,
    \end{aligned}
\end{equation}
 Binary-choice questions arise naturally in many applications. For example, in the context of portfolio investment, a typical question with two choices to identify the risk preference of an investor takes the form in Example \ref{example:question}.

\begin{comment}
Then we can calculate the probability that the true distortion function is $h_l(\cdot)$:
\begin{equation}\label{eq:probability}
    Q_l^{(n)}=\frac{\exp \left(-\eta \sum^{n}_{p=1}\Phi(a^{(p)};G^{(p)},h_l)\right)}{\sum_{l=1}^{L}\exp \left(-\eta \sum^{n}_{p=1}\Phi(a^{(p)};G^{(p)},h_l)\right)},
\end{equation}
where $\eta$ is the learning rate. When $n=0$, we set $Q_l^{(0)}=\frac{1}{L}$ for $l\in \{1,2,\cdots,L\}$.
\end{comment}

\begin{example}\label{example:question}
    \textit{Here are two portfolios A and B (see Tables \ref{tab:sales_q1}-\ref{tab:sales_q2}) in the current financial market. The first row of each table shows the cumulative probability, and the second row shows the corresponding loss quantile. Negative losses represent gains. 
 Which of the two portfolios would you choose in the current market environment?}
 \begin{table}[htbp]
    \centering
    
    \caption{Portfolio A}
    \label{tab:sales_q1}
    \begin{tabular}{@{}l*{11}{r}@{}}
        \toprule
        Cumulative probability & 0\% & 10\% & 20\% & 30\% & 40\% & 50\% & 60\% & 70\% & 80\% & 90\% & 100\% \\
        \midrule
        Loss (units) & -10 & -5 & -3.7 & -2.5 & -1 & 0 & 3 & 5 & 7 & 9 & 13 \\
        \bottomrule
    \end{tabular}
    
    \vspace{0.8em} % 控制上下表格之间的垂直间距
    
    \caption{Portfolio B}
    \label{tab:sales_q2}
    \begin{tabular}{@{}l*{11}{r}@{}}
        \toprule
        Cumulative probability & 0\% & 10\% & 20\% & 30\% & 40\% & 50\% & 60\% & 70\% & 80\% & 90\% & 100\% \\
        \midrule
        Loss (units) & -12 & -11 & -6 & -1.4 & -0.3 & 0.5 & 4 & 8 & 9 & 9.5 & 10 \\
        \bottomrule
    \end{tabular}
\end{table}
 \begin{comment}
    \begin{table}[htbp]
    \centering % Center the entire group of tables
    \begin{minipage}{0.48\textwidth} % First table container
        \centering % Center the content inside the minipage
        \caption{Portfolio A}
        \label{tab:sales_q1}
        \begin{tabular}{@{}ccc@{}} % @{} removes extra space at the edges
            \toprule
            Cumulative probability & Loss ($C$)  \\
            \midrule
            0\%  & -100  \\
            10\% & -50  \\
            20\%  & -37  \\
            30\% & -25   \\
            40\% & -10   \\
            50\% & 0   \\
            60\% & 30   \\
            70\% & 50   \\
            80\% & 70   \\
            90\% & 90   \\
            100\% & 130   \\
            \bottomrule
        \end{tabular}
    \end{minipage}
    \hfill % Crucial: Fills the horizontal space between the minipages
    \begin{minipage}{0.48\textwidth} % Second table container
        \centering
        \caption{Portfolio B}
        \label{tab:sales_q2}
        \begin{tabular}{@{}ccc@{}}
            \toprule
            Cumulative probability & Loss ($C$) \\
            \midrule
            0\%  & -120  \\
            10\% & -110  \\
            20\%  & -60  \\
            30\% & -14   \\
            40\% & -3   \\
            50\% & 5   \\
            60\% & 40   \\
            70\% & 80   \\
            80\% & 90   \\
            90\% & 95   \\
            100\% & 100   \\
            \bottomrule
        \end{tabular}
    \end{minipage}
\end{table}
\end{comment}
\end{example}

If an agent is perfectly rational, he or she will always choose an action that minimizes the risk value induced by his or her preferred distortion riskmetric:
\begin{equation*}
    a^{(n)}= \underset{a\in \mathcal{A}'}{\arg\min} \rho_{h^*}(C(a)),
\end{equation*}
where $\rho_{h^*}$ is the agent's preferred distortion riskmetric. 
%In Example \ref{example:question}, suppose the distortion risk measure preferred by an absolutely rational agent is VaR at confidence level 0.8, the agent will invariably choose portfolio A that exhibits a smaller $\text{VaR}_{0.8}$ of expected loss. In such scenarios, the agent's observed policy faithfully aligns with his or her own risk preferences, allowing the IRL algorithm to converge to the true distortion function.
%However, in practice, various factors lead to agents not behaving fully rationally when making decisions: \begin{itemize}\item As time goes by, agents may experience fatigue and select the suboptimal options with non-negligible probability.\item Agents often do not have full awareness of their own risk preferences, which typically become clearer only gradually through the interactive querying process.\item If the risk values of the two options under agents' preferred risk measure differ only marginally, they may not make choices strictly according to their preferred measure.\end{itemize}
%Frequent suboptimal choices by the agent inject misleading information into the observations, systematically biasing the traditional IRL procedure away from recovering the true distortion function. Therefore, when we construct the IRL algorithm, we will explicitly account for the uncertainty inherent in the agent's decision-making. 
However, in practice, observed choices may deviate from this deterministic optimality rule. Such deviations may arise from fatigue during repeated questioning, limited awareness of latent risk preferences, or near indifference when the two choices have similar risk values under $\rho_{h^*}$. Therefore, agents' responses should be modeled as stochastic rather than perfectly rational choices. This motivates the development of noise-robust IRL algorithms for recovering the preferred distortion riskmetric from uncertain choice data.

\subsection{Distinguishing Method}
\begin{comment}
In each round $n$, two distortion functions $h_i$ and $h_j$ are drawn from the set $\hat{\H}$, yielding the corresponding distortion risk measures $\rho_{h_i}$ and $\rho_{h_j}$. We then construct $G^{(n)}$ consisting of alternatives $X_n$ and $Y_n$ that receive different risk evaluations under $\rho_{h_i}$ and $\rho_{h_j}$, and ask the client to select between them. The risk values of the alternatives under both $\rho_{h_i}$ and $\rho_{h_j}$ are provided in the question.
\end{comment}
We measure the distance between two distortion functions by $L^\infty$ distance:
\begin{equation*}
||h_i-h_j||_\infty=\sup_{t\in[0,1]}|h_i(t)-h_j(t)|.
\end{equation*} 
The following proposition establishes the quantitative relationship between the $L^\infty$ distance of distortion functions and the $L^\infty$ distance of distortion riskmetrics.  
\begin{proposition}\label{prop:bound-of-risk-measure}
    For all $h_i, h_j\in \mathcal{H}$ and $X\in L^\infty$, we have:
    \begin{equation*}
        |\rho_{h_i}(X)-\rho_{h_j}(X)|\le (|U|+2|M|)||h_i-h_j||_\infty,
    \end{equation*}
    where $U\ge\max\{0, \esssup X\}$ and $M\le\min\{0, \essinf X\}$.
    \end{proposition}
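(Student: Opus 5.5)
Work directly from the signed Choquet integral in Definition \ref{def:distortion-risk-measure}. Since $X\in L^\infty$, we have $M\le X\le U$ almost surely with $M\le 0\le U$. Hence $\P(X\ge x)=1$ for $x\le M$ and $\P(X\ge x)=0$ for $x>U$. Because $h(0)=0$ and $h(1)-h(1)=0$, the integrands in \eqref{eq:distortion-riskmetric-unconditional} vanish outside $[M,U]$. This gives
\begin{equation*}
\rho_{h}(X)=\int_{M}^{0}\left(h(\P(X\ge x))-h(1)\right)\d x+\int_0^{U} h(\P(X\ge x))\d x
\end{equation*}
for every $h\in\mathcal H$. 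It is only a finite-interval integral of a bounded function, since $h$ has bounded variation. So the truncation is legitimate, and subtracting the representations for $h_i$ and $h_j$ is allowed.

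Take the difference and write $\Delta(t)=h_i(t)-h_j(t)$, so that $\sup_{t\in[0,1]}|\Delta(t)|=\|h_i-h_j\|_\infty$. We get
\begin{equation*}
\rho_{h_i}(X)-\rho_{h_j}(X)=\int_M^0\left(\Delta(\P(X\ge x))-\Delta(1)\right)\d x+\int_0^U\Delta(\P(X\ge x))\d x.
\end{equation*}
Then bound each term with the triangle inequality. On $[M,0]$ the integrand is at most $|\Delta(\P(X\ge x))|+|\Delta(1)|\le 2\|h_i-h_j\|_\infty$, so that piece is at most $2|M|\,\|h_i-h_j\|_\infty$. On $[0,U]$ the integrand is at most $\|h_i-h_j\|_\infty$, so that piece is at most $|U|\,\|h_i-h_j\|_\infty$. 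Adding the two yields $(|U|+2|M|)\|h_i-h_j\|_\infty$.

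The argument has essentially no obstacle. The one point that needs care is the truncation step: we must check that the integrands vanish identically outside $[M,U]$. On the negative side this relies on $h(1)-h(1)=0$ when $\P(X\ge x)=1$; on the positive side it relies on $h(0)=0$. We must also use the essential (almost sure) bounds rather than pointwise ones, which is harmless because only $\P(X\ge x)$ enters. The factor $2$ on $|M|$ reflects that the $-h(1)$ term is not cancelled by the sup-norm bound. No continuity of $h$ is needed, so Lemma \ref{lemma:representation-of-conditional-distortion-riskmetrics} is not required.
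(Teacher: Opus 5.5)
Your proposal is correct and follows essentially the same route as the paper: truncate the signed Choquet integral to $[M,0]$ and $[0,U]$, subtract the two representations (equivalently, use linearity in $h$), and bound the two pieces by $2|M|\,\|h_i-h_j\|_\infty$ and $|U|\,\|h_i-h_j\|_\infty$ with the triangle inequality. Your explicit check that the integrands vanish outside $[M,U]$ (using $h(0)=0$ and $h(1)-h(1)=0$) spells out a step the paper leaves implicit.
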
  
\begin{proof} {Proof of Proposition \ref{prop:bound-of-risk-measure}}
    Without loss of generality, we assume that $M<0$. We set $\Delta h_{ij}=h_i-h_j$. According to \eqref{eq:distortion-riskmetric-unconditional}, we have that
    \begin{equation*}
    \begin{aligned}
        \rho_{\Delta h_{ij}}(X)&=\int_{-\infty}^{0}\left(\Delta h_{ij}(\P(X\geq x))-\Delta h_{ij}(1)\right)\d x+\int_0^\infty \Delta h_{ij}(\P(X\geq x))\d x\\
        &=\int_{M}^{0}\left(\Delta h_{ij}(\P(X\geq x))-\Delta h_{ij}(1)\right)\d x+\int_0^{U} \Delta h_{ij}(\P(X\geq x))\d x.
        \end{aligned}
    \end{equation*}
    Note that 
    $
        ||h_i-h_j||_\infty=\sup_{t\in[0,1]}|h_i(t)-h_j(t)|
        =\sup_{t\in[0,1]}|\Delta h_{ij}(t)|.
    $
    So
    \begin{equation*}
    \begin{aligned}
        \int_0^{U} \Delta h_{ij}(\P(X\geq x))\d x&\le \bigg|\int_0^{U} \Delta h_{ij}(\P(X\geq x))\d x  \bigg|\\
        &\le \int_0^{U}||h_i-h_j||_\infty\d x
        =|U| ||h_i-h_j||_\infty.
    \end{aligned}
    \end{equation*}
    Since
    \begin{equation*}
    \begin{aligned}
        \int_{M}^{0}\left(\Delta h_{ij}(\P(X\geq x))-\Delta h_{ij}(1)\right)\d x&\le \int_{M}^{0}\Big|\Delta h_{ij}(\P(X\geq x))-\Delta h_{ij}(1)\Big|\d x\\
        &\le \int_{M}^{0}\left(\Big|\Delta h_{ij}(\P(X\geq x))\Big|+\Big|\Delta h_{ij}(1)\Big|\right)\d x\\
        &\le \int_{M}^{0} 2||h_i-h_j||_\infty\d x
        =2|M|||h_i-h_j||_\infty,
    \end{aligned}
    \end{equation*}
    we have
    $|\rho_{h_i}(X)-\rho_{h_j}(X)|\le (|U|+2|M|)||h_i-h_j||_\infty$.
%which leads to $\sup_{X\in L^\infty}|\rho_{h_i}(X)-\rho_{h_j}(X)|\le (|U|+2|M|)||h_i-h_j||_\infty.$

\end{proof}

Proposition \ref{prop:existence-of-distinguishing-question} guarantees the existence of a distinguishing question for any pair of non-proportional distortion functions.
\begin{proposition}\label{prop:existence-of-distinguishing-question}
    For all $h_i$, $h_j\in \H$, if $h_i\ne c h_j$ for all $c\ge 0$ and $h_j\not\equiv 0$, there exist $X,Y\in L^\infty$, such that
\begin{equation*}
        \rho_{h_i}(X)<\rho_{h_i}(Y), \quad
    \rho_{h_j}(X)>\rho_{h_j}(Y).
    \end{equation*}
\end{proposition}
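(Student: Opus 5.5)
The plan is to reduce the statement to a finite-dimensional separation problem by testing the two riskmetrics on simple comonotonic random variables, where $\rho_h$ is linear in the values of $h$ at finitely many points. The reduction has three ingredients.

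\emph{Indicator identity.} Since $(\Omega,\mathcal F,\P)$ is atomless, for any $0< t_1<\dots<t_n\le 1$ we can choose nested events $A_1\subset A_2\subset\cdots\subset A_n$ with $\P(A_k)=t_k$. For an indicator $\mathbf 1_A$ with $\P(A)=t$, Definition \ref{def:distortion-risk-measure} gives $\P(\mathbf 1_A\ge x)=t$ for $x\in(0,1]$ and the integrand vanishes elsewhere. Hence $\rho_h(x\mathbf 1_A)=x\,h(t)$ for every $x\ge0$.

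\emph{Comonotonic additivity.} For $a_k\ge0$, the variable $X=\sum_k a_k\mathbf 1_{A_k}$ is a sum of comonotonic terms. Its upper level sets are exactly the $A_k$, so a direct computation of the Choquet integral (or comonotonic additivity of the signed Choquet integral) gives
\[
\rho_h(X)=\sum_{k=1}^n a_k\,h(t_k).
\]
Now take $Y=\sum_k b_k\mathbf 1_{A_k}$ with $b_k\ge0$. Then $\rho_h(X)-\rho_h(Y)=\sum_k d_k h(t_k)$ with $d_k=a_k-b_k$. Since we may set $a_k=\max(d_k,0)$ and $b_k=\max(-d_k,0)$, every real vector $d$ is achievable. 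Both $X$ and $Y$ are bounded.

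\emph{Reduced problem.} It therefore suffices to find points $t_k\in(0,1]$ and reals $d_k$ with
\[
\sum_k d_k h_i(t_k)<0<\sum_k d_k h_j(t_k).
\]
These inequalities automatically force $X\overset{\mathrm d}{\ne}Y$.

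\emph{Choosing the points.} Since $h_j\not\equiv0$ and $h_j(0)=0$, pick $t_0\in(0,1]$ with $h_j(t_0)\neq0$, and set $c=h_i(t_0)/h_j(t_0)$.
\begin{itemize}
\item[(a)] If $c<0$, then $h_i(t_0)$ and $h_j(t_0)$ have strictly opposite signs. A single point works: take $d=\operatorname{sign}(h_j(t_0))$.
\item[(b)] If $c\ge0$, the hypothesis $h_i\ne c\,h_j$ gives a point $t_1$ with $h_i(t_1)\ne c\,h_j(t_1)$; necessarily $t_1\ne t_0$ and $t_1\neq 0$. Let $u=(h_i(t_0),h_i(t_1))$ and $w=(h_j(t_0),h_j(t_1))$ in $\R^2$. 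Then $w\neq0$, and $u$ is not a nonnegative multiple of $w$, since the first coordinate forces the multiple to be $c$ and the second coordinate rules $c$ out.
\end{itemize}

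\emph{Separation step, the main point to verify.} Suppose no $d$ satisfies $d\cdot u<0<d\cdot w$. Then $d\cdot w>0$ implies $d\cdot u\ge0$. Because $w\ne0$, the open half-space $\{d\cdot w>0\}$ is dense in $\{d\cdot w\ge0\}$, so by continuity $d\cdot w\ge0$ implies $d\cdot u\ge0$. Farkas' lemma (a one-generator cone) then yields $u=\lambda w$ with $\lambda\ge0$, a contradiction. In $\R^2$ one can also write $d$ explicitly, e.g.\ $d=w-su$ or a rotation-based vector, but the Farkas argument is cleaner.

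Translating the resulting $d$ back into $X$ and $Y$ via the comonotonic construction completes the proof. The only genuinely delicate points are the comonotonic additivity computation for a signed (non-monotone) $h$ and the Farkas step.
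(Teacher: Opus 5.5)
Your proof is correct, and it takes a genuinely different route from the paper's.

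\textbf{The paper's route.} The paper argues in infinite dimensions. It treats $\rho_h$ as the linear functional $L_h(q)=\int_0^1 q(1-u)\,\d h(u)$ on $C[0,1]$. A kernel-inclusion argument shows that if no $r$ satisfies $L_{h_i}(r)<0<L_{h_j}(r)$, then $L_{h_i}=cL_{h_j}$ for some $c\ge 0$, and hence $h_i=ch_j$. It then realizes a separating $r\in C^1[0,1]$ as $q_X-q_Y$ with $q_Y(t)=Mt$ and $q_X(t)=Mt+r(t)$. Both questions therefore have continuous, strictly increasing quantile functions.

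\textbf{Your route.} You restrict to simple comonotonic variables on nested events, where $\rho_h$ collapses exactly to $\sum_k a_k h(t_k)$. The problem then becomes a two-point separation in $\R^2$. Your level-set computation, $\{X\ge y\}=A_k$ for $y\in(s_{k+1},s_k]$ with $s_k=\sum_{m\ge k}a_m$, already proves this identity for every $h\in\H$. So the comonotonic-additivity step you flag as delicate needs no outside result.

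\textbf{What your route buys.} Besides being more elementary, your argument uses the hypothesis $h_i\neq ch_j$ pointwise, so it covers all of $\H$. The paper's $L_h$ only sees the Stieltjes measure of $h$, which determines $h$ everywhere except at interior jump points. For example, $h_i=\mathbf 1_{\{t>1/2\}}$ and $h_j=\mathbf 1_{\{t\ge 1/2\}}$ (left and right $\VaR_{1/2}$) satisfy the hypothesis. Yet $L_{h_i}=L_{h_j}$ on $C[0,1]$. Moreover, $\rho_{h_i}(X)=F_X^{-1}(1/2)=F_X^{-1+}(1/2)=\rho_{h_j}(X)$ whenever $F_X^{-1}$ is continuous, so no pair of the paper's type can separate them. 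Likewise, $h_j=\mathbf 1_{\{t=1/2\}}\not\equiv 0$ has $L_{h_j}\equiv 0$ on $C[0,1]$, so the paper's claim that $L_{h_j}\not\equiv 0$ fails there.

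Your construction handles these cases. With $(t_0,t_1)=(1/2,1)$ and $d=(2,-1)$ you get $X=2\mathbf 1_A$ with $\P(A)=1/2$ and $Y=1$. For that pair, any separating $X$ must have a quantile function that jumps at $1/2$, so leaving the class of continuous quantiles is unavoidable.

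\textbf{What the paper's route buys.} Where it applies, it produces questions with smooth, strictly increasing quantile functions.

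\textbf{One cosmetic fix.} In case (b), relabel $t_0,t_1$ in increasing order before building the nested events.
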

\begin{proof}{Proof of Proposition \ref{prop:existence-of-distinguishing-question}}
For any distortion function $h$, we define the linear functional $L_h$ on quantile functions $q(u)$ by
\begin{equation*}
L_h(q)=\int_0^1 q(1-u)\,\d h(u).
\end{equation*}
By the quantile representation, if a random variable $Z$ has the quantile function $q_Z$, then $\rho_h(Z)=L_h(q_Z).$
We first show that there exists a bounded continuous function $r$ such that $L_{h_i}(r)<0$ and $L_{h_j}(r)>0.$
If not, then every $r$ satisfying $L_{h_j}(r)>0$ must also satisfy $L_{h_i}(r)\ge 0$. 
Since \(h_j\not\equiv 0\), the functional \(L_{h_j}\) is not identically zero.
Hence there exists \(g\in C[0,1]\) such that $L_{h_j}(g)>0.$ Now take any \(f\in \ker L_{h_j}\), so that \(L_{h_j}(f)=0\). We claim that
\(L_{h_i}(f)=0\).  If \(L_{h_i}(f)>0\), we set $r_\lambda=g-\lambda f$. Then for sufficiently large
\(\lambda>0\), we have
\[
L_{h_j}(r_\lambda)=L_{h_j}(g)-\lambda L_{h_j}(f)=L_{h_j}(g)>0, \quad L_{h_i}(r_\lambda)=L_{h_i}(g)-\lambda L_{h_i}(f)<0,
\]
which contradicts the assumption. Similarly, if \(L_{h_i}(f)<0\), then for
sufficiently large \(\lambda>0\), by setting $r_\lambda=g+\lambda f$, there is also a contradiction.
Hence, we have $\ker L_{h_j}\subseteq \ker L_{h_i},$ and 
there exists a constant $c\ge 0$ such that
\begin{equation*}
L_{h_i}=cL_{h_j},
\end{equation*}
which implies $h_i=ch_j$.
This contradicts the assumption that $h_i\ne c h_j$ for all $c\ge 0$. Therefore, there exists a bounded continuous function $r\in C[0,1]$ such that $L_{h_i}(r)<0$ and $L_{h_j}(r)>0$.  Since \(C^1[0,1]\) is dense in \(C[0,1]\) and the above inequalities are
strict,
we may choose such an \(r\) in \(C^1[0,1]\) such that $L_{h_i}(r)<0$ and $L_{h_j}(r)>0.$
Choose a constant $M>\|r'\|_\infty$ and define
\begin{equation*}
q_Y(t)=Mt,\qquad
q_X(t)=Mt+r(t),
\qquad t\in[0,1].
\end{equation*}
Then $q_Y$ is strictly increasing. Moreover, we have
\begin{equation*}
q_X'(t)=M+r'(t)\ge M-\|r'\|_\infty>0,
\end{equation*}
so $q_X$ is also strictly increasing. Hence $q_X$ and $q_Y$ are valid bounded quantile functions.
We define 
$X=q_X(U)$ and 
$Y=q_Y(U)$, where $U\sim \mathcal{U}(0,1)$.
Then $X,Y\in L^\infty$, and their quantile functions are $q_X$ and $q_Y$.
Using the quantile representation, we obtain
\begin{equation*}
\rho_{h_i}(X)-\rho_{h_i}(Y)
=
L_{h_i}(q_X)-L_{h_i}(q_Y)
=
L_{h_i}(q_X-q_Y)
=
L_{h_i}(r)<0.
\end{equation*}
Therefore, we have $\rho_{h_i}(X)<\rho_{h_i}(Y)$.
Similarly, we can show that $\rho_{h_j}(X)>\rho_{h_j}(Y)$. Hence, we complete the proof.
\end{proof}

Theorem \ref{thm:finite-time-existence-of-the-question} guarantees finite identifiability: for all $h_i\in\hat{\mathcal{H}}$, a finite set of distinguishing questions is sufficient to recover $h_i$ within the candidate class $\hat{\mathcal{H}}$.
\begin{theorem}\label{thm:finite-time-existence-of-the-question}
\begin{comment}
Let $PC^1([0,1])$ stand for the set of all continuous functions on $[0,1]$ which are $C^1$ on finitely many sub-intervals. We define
\begin{align}
    \mathcal{H}_c=\left\{h\in \mathcal H\cap PC^1([0,1]): 
 \exists q>1,\ \exists M<\infty
\text{ such that }
\sup_h\|h'\|_{L^q([0,1])}\le M\right\},
\end{align}
and $\H_d$ as a finite set of discontinuous functions in $\H$. Let $\H'=\H_c\cup \H_d$.
\end{comment}
    We define 
    \begin{equation*}
        K_{\delta,i}=\left\{h\in \hat{\H}: \delta\le ||h-h_i||_{\infty}, h\ne ch_i\ \text{for every} \ c\ge 0\right\}
    \end{equation*}
    for all $\delta>0$ such that $K_{\delta,i}\ne \emptyset$.
    For each distortion function $h_{i}\in \hat{\H}$ and $h_i\not\equiv 0$, there exists a constant $M\in \mathbb{N}^*$, for all $h\in K_{\delta,i}$,  there exist $k,l\in \{1,2,\cdots,M\}$ and $k\ne l$, such that
    \begin{equation*}
        \rho_{h}(X_k)<\rho_{h}(X_l),\ \rho_{h_{i}}(X_k)>\rho_{h_{i}}(X_l).
    \end{equation*}
\end{theorem}
\begin{comment}
    Define
    \begin{align}
        \phi(h)=D_{L^\infty}(h,h_{i}).
    \end{align}
    Since
    \begin{align}
        |\phi(h)-\phi(g)|=|||h-h_{i}||_\infty-||g-h_{i}||_\infty|
        \le ||h-g||_\infty,
    \end{align}
    $\phi(h)$ is continuous, and the inverse function $\phi^{-1}([\delta,\infty))$ on $[\delta,\infty)$ is closed. Since
    \begin{align}
        K_{\delta,i}=\hat{\H}\cap \phi^{-1}([\delta,\infty)),
    \end{align}
    $K_{\delta,i}$ is a closed subset in $\hat{\H}$. Since $\hat{\H}$ is compact,
\end{comment}
\begin{proof}{Proof of Theorem \ref{thm:finite-time-existence-of-the-question}}
    %By the one-dimensional Sobolev embedding, $\H_c$ is equicontinuous. So by Arzela-Ascoli Theorem, $\H_c$ is compact. Because $\H_d$ is a finite set, it is also compact. 
    Since $\hat{\H}$ is a finite set,  
    $K_{\delta,i}$ is finite and compact. Because $h\in K_{\delta,i}$, 
    %following the proof of Theorem \ref{thm:existence-of-the-question}, 
    we can construct $X_h, Y_h\in L^\infty$, such that $\rho_{h}(X_h)<\rho_{h}(Y_h)$ and $\rho_{h_{i}}(X_h)>\rho_{h_{i}}(Y_h)$ by Proposition \ref{prop:existence-of-distinguishing-question}.
    We fix $X_h$ and $Y_h$, and we set 
        $\Delta_h=\min \{|\rho_{h}(X_h)-\rho_{h}(Y_h)|,|\rho_{h_{i}}(X_h)-\rho_{h_{i}}(Y_h)| \}.$
    By Proposition \ref{prop:bound-of-risk-measure}, there exists $\eta_h>0$ such that $||g-h||_\infty<\eta_h$, and
    \begin{equation*}
        |\rho_{g}(X_h)-\rho_{h}(X_h)|<\frac{\Delta_h}{4}, \ |\rho_{g}(Y_h)-\rho_{h}(Y_h)|<\frac{\Delta_h}{4}.
    \end{equation*}
    Note that 
    \begin{equation*}
        \big|\rho_{g}(X_h)-\rho_{g}(Y_h)-(\rho_{h}(X_h)-\rho_{h}(Y_h))\big|\le |\rho_{g}(X_h)-\rho_{h}(X_h)|+|\rho_{g}(Y_h)-\rho_{h}(Y_h)|
        <\frac{\Delta_h}{2}.
    \end{equation*}
    We have
    \begin{equation*}
        \rho_{g}(X_h)-\rho_{g}(Y_h)<\rho_{h}(X_h)-\rho_{h}(Y_h)+\frac{\Delta_h}{2}
        \le -\Delta_h+\frac{\Delta_h}{2}
        <0.
    \end{equation*}
    We construct an open set
        $U_h=\{g\in K_{\delta,i}:||g-h||_\infty<\eta_h\}.$
    Hence, for all $g\in U_h$, there exists $X_h, Y_h\in L^\infty$, such that $\rho_{g}(X_h)<\rho_{g}(Y_h)$ and $\rho_{h_{i}}(X_h)>\rho_{h_{i}}(Y_h)$.
    Since $\{U_h\}_{h\in K_{\delta,i}}$ is an open cover of $K_{\delta,i}$ and $K_{\delta,i}$ is compact, there exist $h_1$, $h_2$, $\cdots$, $h_N\in K_{\delta,i}$, such that
    \begin{equation}\label{eq:union-question}
        K_{\delta,i} \subset \bigcup_{j=1}^{N} U_{h_j}.
    \end{equation}
    %Since $\hat{\H}$ is finite, we can find a global positive integer $N$ such that \eqref{eq:union-question} holds for all $h_i\in \hat{\H}$. 
    We set $M=2N$ and construct 
    \begin{equation*}
        \{X_1,X_2,\cdots,X_M\}=\{X_{h_j},Y_{h_j}:\rho_{h_j}(X_{h_j})<\rho_{h_j}(Y_{h_j}),\ \rho_{h_{i}}(X_{h_j})>\rho_{h_{i}}(Y_{h_j}),\ j=1,2,\cdots,N\}.
    \end{equation*}
    For all $h\in K_{\delta,i}$, we can find a set $U_{h_j}$, $j\in \{1,2,\cdots,N\}$, such that $h\in U_{h_j}$. And we can find $X_k$, $X_l\in \{X_1,X_2,\cdots,X_M\}$ where $k\ne l$, such that $\rho_{h}(X_k)<\rho_{h}(X_l)$ and $\rho_{h_{i}}(X_k)>\rho_{h_{i}}(X_l)$.
\end{proof}

According to Theorem \ref{thm:finite-time-existence-of-the-question}, we can construct a finite set of questions $\mathcal{K}$ to identify the agent's preferred distortion function:
\begin{equation*}
    \mathcal{K}:=\left\{(X_i,Y_i): X_i\overset{\mathrm{d}}{\ne}Y_i, X_i,Y_i\in L^\infty, i=1,2,\cdots,N\right\}.
\end{equation*}
We define the regret of the action $a^{(n)}$ for each distortion function $h_l$ under the binary-choice question $G^{(n)}\in \mathcal{K}$ at round $n$ as
\begin{equation}\label{eq:regret-of-action}
    \Phi(a^{(n)};G^{(n)},h_l)=\rho_{h_l}(C(a^{(n)}))-\min_{a\in\mathcal{A}'}\rho_{h_l}(C(a)).
\end{equation}
The regret $\Phi$ quantifies how well the distortion function $h_l$ aligns with the agent's observed choices. To quantify the discriminative power of the question,
we define the distinguishing power of $G^{(n)}$ as:
\begin{equation}
\begin{aligned}\label{eq:distinguish-power-of-the-environment}
\Psi(G^{(n)},i,j)=\Phi(a^{(n), *,i};G^{(n)},h_j)\Phi(a^{(n), *,j};G^{(n)},h_i),
    \end{aligned}
\end{equation}
where $a^{(n), *,i}$ and $a^{(n), *,j}$ represent the optimal actions for the distortion functions $h_i(t)$ and $h_j(t)$ respectively. It is straightforward to verify that
\begin{equation*}
    \Psi(G^{(n)},i,j)=\left(\rho_{h_i}(C(a^{(n), *,j}))-\rho_{h_i}(C(a^{(n), *,i}))\right)\left(\rho_{h_j}(C(a^{(n), *,i}))-\rho_{h_j}(C(a^{(n), *,j}))\right).
\end{equation*}
Obviously, $\Psi(G^{(n)},i,j)>0$ if and only if $a^{(n), *,i}\ne a^{(n), *,j}$.
If $ \Psi(G^{(n)}, i, j) = 0 $ for some pair $i \neq j  $, then $G^{(n)}$ cannot separate the distortion functions $  h_i  $ and $  h_j  $. 
To ensure that all distortion functions in $\hat{\H}$ can be distinguished, we impose the following condition on every pair $(X,Y)\in \mathcal{K}$.
\begin{assumption}\label{assumption:unique-optimal-action}
For all $h_i\in \hat{\H}$ and for all $(X,Y)\in \mathcal{K}$, we have $\rho_{h_i}(X) \ne \rho_{h_i}(Y)$.
\end{assumption}
\begin{comment}
    and whenever this condition is satisfied, the following inequality holds:
    \begin{align}
        J \leq \frac{\rho_{h_i}(X) - \rho_{h_i}(Y)}{\rho_{h_j}(Y) - \rho_{h_j}(X)} \leq \frac{1}{J},
    \end{align}
    where $1\ge J>0$ is independent of $h_i$, $h_j$, $X$ and $Y$.
\end{comment}
\begin{comment}
\begin{assumption}\label{assumption:lower-bound-of-diff-risk-measures}
    For all $h\in \hat{\H}$ and $(X,Y)\in \mathcal{K}$, there exist constants $\hat{\Delta}$ and $D$ such that 
    \begin{align}
        D\ge|\rho_{h}(X)-\rho_{h}(Y)|\ge \hat{\Delta}>0.
    \end{align}
\end{assumption}
\end{comment}
To improve the efficiency of the algorithm, we adopt an adaptive questioning method with exploration rate $\epsilon>0$. At the end of round $n-1$, we select the question for round $n$. With probability $1-\epsilon$, we select the question that best distinguishes the two distortion functions $h_{i^*}$ and $h_{j^*}$ with the highest and second-highest posterior probabilities until round $n-1$:
\begin{equation}\label{eq:environment-selection}
    G^{(n)}= \underset{G\in \mathcal{K}}{\arg \max} \Psi(G,i^*,j^*).
\end{equation}
With exploration rate $\epsilon$, we instead randomly choose two distinct distortion functions $h_i$ and $h_j$ from $\hat{\H}$ and select the question that best distinguishes them.
To quantify how effectively the question $G$ can discriminate between different distortion riskmetrics, we derive an upper bound on the distinguishing power $\Psi(G,i,j)$ given two different distortion functions in Proposition \ref{prop:upper-bound-of-distingushing-power}.

\begin{proposition}\label{prop:upper-bound-of-distingushing-power}
    Under Assumption \ref{assumption:unique-optimal-action}, let $(X,Y)\in \mathcal K$ and define $\ell=\max\{||X||_\infty, ||Y||_\infty\}$.
    For all $h_i, h_j\in \hat{\H}$, if there exists a constant $m\in \N^*$ and a partition: $0=t_0<t_1<\cdots<t_m=1$,
    such that $h_i(t)-h_j(t)$ is monotone on each interval $[t_{r-1},t_r]$ for $r=1,\cdots, m$,
    we have 
    \begin{equation*}
        \Psi(G,i,j)\le 4m^2\ell^2||h_i-h_j||_\infty^2.
    \end{equation*}
\end{proposition}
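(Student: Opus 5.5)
The plan is to first dispose of the trivial case, then reduce $\Psi$ to the signed riskmetric $\rho_{h_i-h_j}$ and bound that through the total variation of $h_i-h_j$.

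\emph{Trivial case.} Write $G=(X,Y)$. By Assumption \ref{assumption:unique-optimal-action} the optimal actions $a^{*,i}$ and $a^{*,j}$ are uniquely defined. If they coincide, then $\Psi(G,i,j)=0$ and there is nothing to prove.

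\emph{Reduction to $\rho_{h_i-h_j}$.} Otherwise, let $A$ denote the choice optimal for $h_i$ and $B$ the choice optimal for $h_j$, so $\{A,B\}=\{X,Y\}$. By the product formula following \eqref{eq:distinguish-power-of-the-environment},
\[
\Psi(G,i,j)=\alpha\beta,\qquad \alpha:=\rho_{h_i}(B)-\rho_{h_i}(A)>0,\quad \beta:=\rho_{h_j}(A)-\rho_{h_j}(B)>0 .
\]
The signed Choquet integral \eqref{eq:distortion-riskmetric-unconditional} is linear in the distortion function, so $\rho_{h_i}-\rho_{h_j}=\rho_{\Delta}$ with $\Delta:=h_i-h_j\in\H$. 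Adding the two factors gives
\[
\alpha+\beta=\rho_{\Delta}(B)-\rho_{\Delta}(A).
\]
By AM--GM, $\alpha\beta\le \tfrac14(\alpha+\beta)^2\le \tfrac14\bigl(|\rho_\Delta(A)|+|\rho_\Delta(B)|\bigr)^2$. It therefore suffices to show
\[
|\rho_\Delta(Z)|\le 2m\ell\|\Delta\|_\infty \quad\text{for } Z\in\{X,Y\}.
\]
This yields $\Psi\le \tfrac14(4m\ell\|\Delta\|_\infty)^2=4m^2\ell^2\|\Delta\|_\infty^2$, as claimed.

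\emph{Bounding $\rho_\Delta(Z)$.} The main step is to express $\rho_\Delta(Z)$ as a Stieltjes integral against $\Delta$ and bound it by $\ell$ times the total variation of $\Delta$. On each $[t_{r-1},t_r]$ the function $\Delta$ is monotone, so its variation there equals $|\Delta(t_r)-\Delta(t_{r-1})|\le 2\|\Delta\|_\infty$. Summing over the $m$ pieces gives
\[
\mathrm{TV}_{[0,1]}(\Delta)\le 2m\|\Delta\|_\infty .
\]
If $\Delta$ is left- or right-continuous, Lemma \ref{lemma:representation-of-conditional-distortion-riskmetrics} (with $C=\Omega$), i.e. Lemma 1 of \cite{WWW20}, gives $\rho_\Delta(Z)=\int_0^1 q_Z(1-u)\,\d\Delta(u)$. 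Since $|q_Z|\le \|Z\|_\infty\le\ell$, this immediately yields $|\rho_\Delta(Z)|\le \ell\,\mathrm{TV}(\Delta)\le 2m\ell\|\Delta\|_\infty$.

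\emph{Main obstacle: general $\Delta$.} The difficulty is that $\Delta$ need not be one-sided continuous, so the quantile representation is not directly available. I would avoid it and work with the Choquet form directly. Using $\Delta(0)=0$ and the fact that $Z\in[-\ell,\ell]$ a.s.,
\[
\rho_\Delta(Z)=\int_{-\ell}^{\ell}\Delta(S(x))\,\d x-\ell\,\Delta(1),\qquad S(x):=\P(Z\ge x).
\]
The function $x\mapsto \Delta(S(x))$ is a BV function composed with the nonincreasing map $S$, which runs from $1$ at $x=-\ell$ down to $0$ beyond $\ell$. Hence
\[
\Delta(S(x))-\Delta(1)=\int_{(S(x),1]}\d\Delta\quad\text{or}\quad \int_{[S(x),1)}\d\Delta,
\]
with the choice of endpoints immaterial for the final bound. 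A Fubini exchange then rewrites $\rho_\Delta(Z)$ as $\int \tilde q(u)\,\d\Delta(u)$ for some function $\tilde q$ lying between the left and right quantiles of $Z$, so $|\tilde q|\le\ell$, plus endpoint terms. Again $|\rho_\Delta(Z)|\le \ell\,\mathrm{TV}(\Delta)$.

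\emph{Fallback.} If this Fubini bookkeeping becomes messy, a cruder route is available. Approximate the integral over $[-\ell,\ell]$ by Riemann sums at levels $x_k$. Apply summation by parts, writing $\sum_k \Delta(S(x_k))(x_{k+1}-x_k)$ as a sum of increments of $\Delta$ weighted by points of $[-\ell,\ell]$. Each such sum is bounded by $\ell\,\mathrm{TV}(\Delta)$, and the bound passes to the limit.
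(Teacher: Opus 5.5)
Your proof is correct and follows essentially the same route as the paper. Both arguments dispose of the case of a common optimal choice, apply AM--GM to $\Psi=\alpha\beta$, identify $\alpha+\beta$ with a difference of values of $\rho_{h_i-h_j}$, and use piecewise monotonicity to get $\mathrm{TV}_{[0,1]}(h_i-h_j)\le 2m\|h_i-h_j\|_\infty$; the only variation is how you obtain $\alpha+\beta\le 2\ell\,\mathrm{TV}_{[0,1]}(\Delta)$. The paper cites $\rho_g(X)-\rho_g(Y)\le \esssup|X-Y|\cdot\mathrm{TV}_g[0,1]$ from \cite{WWW20} together with $\esssup|X-Y|\le 2\ell$, whereas you bound $|\rho_\Delta(X)|$ and $|\rho_\Delta(Y)|$ separately by $\ell\,\mathrm{TV}_{[0,1]}(\Delta)$, which your summation-by-parts fallback justifies for arbitrary $\Delta\in\H$ (even more directly, $|\Delta(S(x))-\Delta(1)|\le\mathrm{TV}_{[S(0),1]}(\Delta)$ for $x\le 0$ and $|\Delta(S(x))|\le\mathrm{TV}_{[0,S(0)]}(\Delta)$ for $x>0$, so the Fubini bookkeeping is unnecessary).
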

\begin{proof}{Proof of Proposition \ref{prop:upper-bound-of-distingushing-power}}
    Let $A=\rho_{h_i}(X) - \rho_{h_i}(Y)$, $B=\rho_{h_j}(Y) - \rho_{h_j}(X)$, and $g=h_i-h_j$. If $AB<0$, then we have $\Psi(G,i,j)=0$, and the proposition holds. If $AB>0$, without loss of generality, we let $A>0$ and $B>0$. 
    By \cite{WWW20}, we have
    \begin{equation*}
        A+B
        =\rho_g(X)-\rho_g(Y)
        \le\esssup |X-Y|\cdot  \text{TV}_g[0,1]
        \le 2\ell \cdot  \text{TV}_g[0,1],
    \end{equation*}
    where $\text{TV}_g[0,1]$ is the total variation of $g$ on [0,1]. Then we have
    \begin{equation*}
        AB\le \frac{(A+B)^2}{4}
        \le\frac{4\ell^2 (\text{TV}_g[0,1])^2}{4}
        =\ell^2 (\text{TV}_g[0,1])^2.
    \end{equation*}
    Since $g$ is of bounded variation on [0,1] and monotone on each interval $[t_{r-1},t_r]$ for $r=1,\cdots, m$, by additive property of total variation, 
    \begin{equation*}
        \text{TV}_g[0,1]=\sum^m_{r=1}\text{TV}_g[t_{r-1},t_r]
        =\sum^m_{r=1}|g(t_r)-g(t_{r-1})|
        \le \sum^m_{r=1}2||g||_\infty
        =2m ||h_i-h_j||_\infty,
    \end{equation*}
    where $\text{TV}_g[t_{r-1},t_r]$ is the total variation of $g$ on $[t_{r-1},t_r]$.
    We have
    \begin{equation*}
        \Psi(G,i,j)=(\rho_{h_i}(X) - \rho_{h_i}(Y))(\rho_{h_j}(Y) - \rho_{h_j}(X))
        \le 4m^2\ell^2 ||h_i-h_j||_\infty^2.
    \end{equation*}
\end{proof}

Proposition \ref{prop:upper-bound-of-distingushing-power} suggests that the upper bound of the distinguishing power decays quadratically with respect to the distance between the distortion functions. Hence, candidates with similar induced risk preferences are intrinsically more difficult to separate and we therefore construct $\hat{\mathcal{H}}$ as a set of representative and well-separated distortion functions.

\subsection{Probabilistic Models for the Agent's Decision-Making Process}
We define $h_{\hat{l}}$ as the distortion function that best approximates the agent's risk preference. 
Since the agent may not always choose the option that is optimal under his or her preference, we propose the following decision-making models of an agent.
\subsubsection{The Specific Decision-Making Model}\label{subsubsection-The-Specific-Decision-Making-Model}
We introduce a parameter $\beta > 0$ reflecting the degree of rationality in the agent's decision-making process, which is assumed to be unknown.
We model the probability of choosing action $  a_1  $ under question $  G^{(n)}  $ and the preferred distortion function $  h_{\hat{l}}  $ as:
\begin{equation}\label{eq:probability-agent-a1}
    \widehat{P}(a_1|G^{(n)},h_{\hat{l}})=\frac{\exp{(-\beta\Phi(a_1;G^{(n)},h_{\hat{l}}))}}{\exp{(-\beta\Phi(a_1;G^{(n)},h_{\hat{l}}))}+\exp{(-\beta\Phi(a_2;G^{(n)},h_{\hat{l}}))}}.
\end{equation}
The probability of choosing the action $a_2$ can be computed as:
\begin{equation}\label{eq:probability-agent-a2}
    \widehat{P}(a_2|G^{(n)},h_{\hat{l}})=1-\widehat{P}(a_1|G^{(n)},h_{\hat{l}}).
\end{equation}

The above model is reasonable for the following reasons. First, the model assigns a probability greater than $50\%$ to the choice preferred under the agent's distortion riskmetric. Second, the model captures the behavioral pattern that a larger risk-value gap between the two
choices leads to a higher probability of selecting the option with lower risk under the
agent’s preferred distortion riskmetric. Moreover, the inclusion of the parameter $  \beta  $ enables the model to flexibly reflect real-world behaviors. 
If $\beta$ is large, the agent behaves with more confidence and is more likely to choose the optimal choice. If $\beta$ is small, the agent's choice becomes more uncertain, increasing the likelihood of selecting a suboptimal choice. If $\beta=0$, the agent exhibits no risk preferences and is indifferent between the two choices. When $\beta\rightarrow{\infty}$, the agent consistently selects the optimal choice. 
\begin{comment}
The limiting case $\beta\rightarrow{\infty}$ has been studied in \cite{cheng2023eliciting}. 
\end{comment}
The value of $  \beta  $ is influenced by various factors including the user's fatigue and the extent to which they have insight into their own risk preferences.
\subsubsection{A General Decision-Making Model}
As for the general model, we assume that there exists a constant $\bar{P}\in (\frac{1}{2},1]$, such that the agent selects the optimal choice $a^*\in \mathcal{A}'$ given his or her preferred distortion riskmetric at round $n$ with a stochastic probability 
\begin{equation}\label{eq:fixed-probability-agent-a1}
    \widehat{P}(a^*|G^{(n)},h_{\hat{l}})=\widetilde{P}_n,
\end{equation}
where $\widetilde{P}_n\in [\bar{P},1]$, while the other choice is selected with probability 
\begin{equation}\label{eq:fixed-probability-agent-a2}
    1-\widehat{P}(a^*|G^{(n)},h_{\hat{l}})=1-\widetilde{P}_n.
\end{equation}
The above model represents a more general case, since it does not impose a specific functional form. The model in Section \ref{subsubsection-The-Specific-Decision-Making-Model} can be viewed as a special case of this general decision-making model.

\subsection{Estimation Method}
We establish a Bayesian-style framework to update the probability vector of the distortion functions. The Bayesian paradigm suggests that 
\begin{equation}
    \text{Posterior Probability}\propto \text{Likelihood} \times \text{Prior Probability}.
\end{equation}
For any distortion function $h_i\in\hat{\H}$,  we introduce the normalized regrets of selecting different actions $a_1$ and $a_2$ under the question $G^{(n)}$ as:
\begin{equation*} 
\begin{aligned}
&\widetilde\Phi(a_1;G^{(n)},h_i)=\frac{\Phi(a_1;G^{(n)},h_i)}{\max\{\Phi(a_1;G^{(n)},h_i), \Phi(a_2;G^{(n)},h_i)\}}, \\ &\widetilde\Phi(a_2;G^{(n)},h_i)=\frac{\Phi(a_2;G^{(n)},h_i)}{\max\{\Phi(a_1;G^{(n)},h_i), \Phi(a_2;G^{(n)},h_i)\}}.
\end{aligned}
\end{equation*}
If $a_1$ is the optimal action, we have $\widetilde\Phi(a_1;G^{(n)},h_i)=0$ and $\widetilde\Phi(a_2;G^{(n)},h_i)=1$. Conversely, if $a_2$ is optimal, then $\widetilde{\Phi}(a_2;G^{(n)},h_i)=0$ and $\widetilde{\Phi}(a_1;G^{(n)},h_i)=1$. The pseudo likelihood of selecting action $a_1$ under $G^{(n)}$ is computed as:
\begin{equation}\label{eq:likelihood-agent-a1}
P(a_1|G^{(n)},h_i)=\frac{\exp{(-\kappa\widetilde \Phi(a_1;G^{(n)},h_i))}}{\exp{(-\kappa\widetilde\Phi(a_1;G^{(n)},h_i))}+\exp{(-\kappa\widetilde\Phi(a_2;G^{(n)},h_i))}},
\end{equation}
where $\kappa>0$ is the learning rate. 
Meanwhile, the pseudo likelihood of action $a_2$ is
\begin{equation}\label{eq:likelihood-agent-a2}
P(a_2|G^{(n)},h_i)=1-P(a_1|G^{(n)},h_i).
\end{equation}
Based on the agent's choice $  a^{(n)} \in \{a_1, a_2\}  $ in round $n$, we update the cumulative likelihood of each distortion function $h_i$ in $\hat{\H}$ by
\begin{equation}\label{eq:update-process}
    Q^{(n)}_i=P(a^{(n)}|G^{(n)},h_i)Q^{(n-1)}_{i}, \quad Q^{(0)}_i=\widehat{Q}_i^{(0)}, \quad n=1,\cdots, N.
\end{equation}
We normalize the cumulative likelihood across different distortion functions to obtain the set of posterior probabilities for the distortion functions: $\{\widehat{Q}_i^{(n)}: i=1,2,\cdots,L\}$, where
\begin{equation*}
    \widehat{Q}_i^{(n)}=\frac{Q_i^{(n)}}{\sum^L_{j=1}Q_j^{(n)}}.
\end{equation*}
Once the update process has stabilized or the final round is reached, we select the
distortion function with the highest posterior probability as the estimate of the agent’s
preferred distortion function. 
The details for the estimation process are provided in Algorithm \ref{alg:estimate-distortion-function}.

\begin{breakablealgorithm}
\caption{Estimation of the distortion function}
\label{alg:estimate-distortion-function}
\begin{algorithmic}
\REQUIRE Early stopping parameter $q$, total number of rounds $N$, number of questions $S$, learning rate $\kappa$, exploration rate $\epsilon$, question pool  $\mathcal{K}$, initial cumulative likelihood set $\{Q^{(0)}_i: i=1,2,\cdots,L \}$.
%initial probability set $\{\widehat{Q}^{(0)}_i: i=1,2,\cdots,L \}$
 \FOR{each epoch $n = 1$ to $N$}
% \FOR{$s = 1$ to $S$}
% \FOR{$m = 1$ to $M$}
%\STATE Select two distortion functions $h_{i}(\cdot)$ and $h_{j}(\cdot)$ according to the probability set $\{Q^{(n)}_l: l=1,2,\cdots,L\}$ randomly. 
%\STATE Compute $d^{(m)}=\Psi(G^{(n)}_s,i,j)$.
%\ENDFOR
%\STATE Calculate $e_s^{(n)}=\frac{1}{M}\sum_{m=1}^{M}d^{(m)}$.
%\ENDFOR
%\STATE Obtain $s^*=\arg \min_{s\in \{1,2,\cdots, S\}}e_s^{(n)}$.
%\ELSE
\STATE Generate a random number $u \sim \mathcal{U}(0,1)$.
\IF{$u < 1-\epsilon$}
\STATE Select two distortion functions $h_{i}$ and $h_{j}$ with the largest and the second largest probability.
\ELSE
\STATE Select two distortion functions $h_{i}$ and $h_{j}$ randomly.
\ENDIF
\STATE Choose the question $G^{(n)}_{s^*}$ that best distinguishes $h_i$ and $h_j$ and collect the action $a^{(n)}$ chosen by the agent.
\FOR{$l = 1$ to $L$}
\STATE Obtain $a'\in \mathcal{A}'\setminus \{a^{(n)}\}$.
\STATE Compute
\begin{equation*}
P(a^{(n)}|G^{(n)},h_l)=\frac{\exp{(-\kappa\widetilde\Phi(a^{(n)};G^{(n)}_{s^*},h_l))}}{\exp{(-\kappa\widetilde\Phi(a^{(n)};G^{(n)}_{s^*},h_l))}+\exp{(-\kappa\widetilde\Phi(a';G^{(n)}_{s^*},h_l))}}.
\end{equation*}
\STATE Update the cumulative likelihood:
\begin{equation*}
    Q^{(n)}_l=P(a^{(n)}|G^{(n)}_{s^*},h_l)Q^{(n-1)}_{l}.
\end{equation*}
\ENDFOR
\STATE Obtain $l^{*,(n)}=\arg \max_{l\in \{1,2,\cdots,L\}}{Q}_l^{(n)}$. 
\IF{$n+1\ge q$}
\STATE Compute $\mu=\frac{1}{q}\sum^{q-1}_{j=0}{Q}_{l^{*,(n-j)}}^{(n-j)}$ and $\sigma=\sqrt{\frac{1}{q}\sum^{q-1}_{j=0}({Q}_{l^{*,(n-j)}}^{(n-j)}-\mu)^2}$.
\IF{$\sigma < 10^{-3}$}
        \STATE \textbf{break}  % 使用\textbf{break}
    \ENDIF
    \ENDIF
\ENDFOR
\STATE Obtain the distortion function $h_{l^*}$ where $l^{*,(n)}=\arg \max_{l\in \{1,2,\cdots,L\}}{Q}_l^{(n)}$.
\end{algorithmic}
\end{breakablealgorithm}

\subsection{Convergence Rate Analysis}
We establish the convergence rate of the algorithm under the case $h_{\hat{l}}\in \hat{\H}$ and the general setting \eqref{eq:fixed-probability-agent-a1} and \eqref{eq:fixed-probability-agent-a2}. We introduce a quantity:
\begin{equation*}
        Z_{i,n}(a^{(n)})=\log \frac{P(a^{(n)}|G^{(n)},h_i)}{P(a^{(n)}|G^{(n)},h_{\hat{l}})},
    \end{equation*}
    where $a^{(n)}$ is the action chosen by the agent at round $n$. Here $Z_{i,n}(a^{(n)})$ represents the advantage of the likelihood of the distortion function $  h_i  $ over the likelihood of the agent's preferred distortion function $  h_{\hat{l}}  $, given that the agent selects action $  a^{(n)}  $. 
    \begin{comment}
    It is worth noting that at round $n$,
    \begin{align*}
        \widehat {E}[Z_{i,n}(a^{(n)})| \mathcal{F}_{n-1}]=\text{KL}(p_n||q_{\hat{l},n})-\text{KL}(p_n||q_{i,n}), 
    \end{align*}
    where $\widehat {E}[Z_{i,n}(a^{(n)})| \mathcal{F}_{n-1}]$ is the expectation of $Z_{i,n}(a^{(n)})$, $\text{KL}(p||q)$ is the KL-divergence of $p$ and $q$, 
    $p_n$ denotes the true probability of the agent's decision, and $q_{i,n}$ represents the likelihood \eqref{eq:likelihood-agent-a1} and \eqref{eq:likelihood-agent-a2} under the distortion function $h_i$.
    \end{comment}
    To simplify the notation, we set $d_j(n)=\rho_{h_j}(X_n)-\rho_{h_j}(Y_n)$. We define $\sigma(u)=\frac{1}{1+e^{-u}}$ and the set of distinguishing rounds of the distortion function pair $\{h_i, h_{\hat{l}}\}$ by $\mathcal{N}_i=\left\{t\ge 1:\ d_{\hat l}(t)d_i(t)<0\right\}$. We also define $\mathcal{N}_i^e$ as the set of distinguishing rounds when distinguishing questions between $h_i$ and $h_{\hat l}$ are generated by the exploration step, and $\mathcal{N}_i^e\subset \mathcal{N}_i$.
\begin{comment}
\begin{proposition}\label{prop:existence-rounds-distinguishing-problem}
For all $i\in \{1,2,\cdots, L\}\setminus  \{\hat{l}\}$, 
%there exists a collection of rounds $\mathcal{N}_i\subset \{1,2,\cdots,n\}$ such that $d_{\hat{l}}(t)d_{i}(t)<0$ if $t\in \mathcal{N}_i$, and $d_{\hat{l}}(t)d_{i}(t)>0$ if $t\notin \mathcal{N}_i$. 
there exist constants $\pi_0\in (0,1]$ and $n_0\in \mathbb{N}^+$, such that
    \begin{align*}
        |\mathcal{N}_i^e\cap \{1,2,\cdots, n\}|\ge \pi_0n, \quad \forall n\ge n_0.
    \end{align*} \end{proposition}
\begin{proof}[Proof of Proposition \ref{prop:existence-rounds-distinguishing-problem}]
Let $E_{i,t}$ be the event that, at round $t$, the algorithm enters the
exploration step and the randomly selected pair of distortion functions is
$\{h_{\hat l},h_i\}$. 
%On the event $E_{i,t}$, under Assumption \ref{assumption:bound-of-ratio-of-difference-risk-measure}, there exists $G=\{X,Y\}\in \mathcal{K}$ such that $d_{\hat l}(t)d_i(t)<0$.
Let $I_{i,t}:=\mathds 1_{E_{i,t}}$ and $S_i(n):=\sum_{t=1}^n I_{i,t}.$ 
By the strong law of large numbers, $\frac{S_i(n)}{n}\longrightarrow \frac{2\epsilon}{L(L-1)}$ almost surely.
Choose $\pi_0:=\frac{\epsilon}{L(L-1)}.$
Then, for each $i\neq \hat l$, there exists a finite $n_{0,i}$ such that for all $n\ge n_{0,i}$,
\begin{align}
S_i(n)
\ge
\pi_0 n, \quad a.s.
\end{align}
Finally, since there are finitely many $i\neq \hat l$, taking $n_0:=\max_{i\neq \hat l} n_{0,i}$
yields the desired result simultaneously for all
$i\in\{1,2,\ldots,L\}\setminus\{\hat l\}$.
\end{proof}
\end{comment}
The following lemma shows the upper bound of $Z_{i,n}$.
\begin{lemma}\label{lemma:upper-bound-of-log-ratio-probability}
     Under Assumption \ref{assumption:unique-optimal-action}, for all $i\in \{1,2,\cdots, L\}\setminus  \{\hat{l}\}$ and every round $n$, we have 
    \begin{equation*}
        |Z_{i,n}(a^{(n)})|\le \kappa.
    \end{equation*}
\end{lemma}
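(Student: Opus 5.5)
Under Assumption \ref{assumption:unique-optimal-action}, the normalized regrets are well defined and take only the values $0$ and $1$. The plan is to first make the pseudo likelihood explicit in terms of $\kappa$ and then bound the log-ratio directly.

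\textbf{Step 1: normalized regrets.} Fix a round $n$ with question $G^{(n)}=(X_n,Y_n)\in\mathcal K$ and any $h_l\in\hat{\H}$. By Assumption \ref{assumption:unique-optimal-action}, $\rho_{h_l}(X_n)\ne\rho_{h_l}(Y_n)$, so exactly one action in $\mathcal A'$ is optimal for $h_l$. The regret \eqref{eq:regret-of-action} is therefore $0$ for that action and $|\rho_{h_l}(X_n)-\rho_{h_l}(Y_n)|>0$ for the other. In particular the maximum in the normalization is strictly positive, so $\widetilde\Phi$ is well defined, and $\{\widetilde\Phi(a_1;G^{(n)},h_l),\widetilde\Phi(a_2;G^{(n)},h_l)\}=\{0,1\}$, as already noted after the definition.

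\textbf{Step 2: the pseudo likelihood takes two values.} Substituting into \eqref{eq:likelihood-agent-a1}--\eqref{eq:likelihood-agent-a2}, for every $l$ and every observed action $a^{(n)}$,
\[
P(a^{(n)}|G^{(n)},h_l)\in\Bigl\{\tfrac{1}{1+e^{-\kappa}},\ \tfrac{e^{-\kappa}}{1+e^{-\kappa}}\Bigr\}=\{\sigma(\kappa),\sigma(-\kappa)\}.
\]
The first value occurs when $a^{(n)}$ is optimal for $h_l$, the second when it is not. This holds in particular for $l=i$ and $l=\hat l$.

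\textbf{Step 3: case split on the log-ratio.} $Z_{i,n}(a^{(n)})$ is the log of a ratio of two numbers from this set.
\begin{itemize}
\item If $h_i$ and $h_{\hat l}$ agree on whether $a^{(n)}$ is optimal, the ratio is $1$ and $Z_{i,n}=0$.
\item Otherwise the ratio is $\sigma(\kappa)/\sigma(-\kappa)=e^{\kappa}$ or its reciprocal, so $Z_{i,n}=\pm\kappa$.
\end{itemize}
In all cases $|Z_{i,n}(a^{(n)})|\le\kappa$.

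\textbf{Main obstacle.} There is no real obstacle. The only point needing care is Step 1: one must invoke Assumption \ref{assumption:unique-optimal-action} for both $h_i$ and $h_{\hat l}$ so that the normalization never divides by zero. The bound holds deterministically, for every realization of the agent's choice and for both the exploitation and exploration selection rules, since it only uses that $G^{(n)}\in\mathcal K$.
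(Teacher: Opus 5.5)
Your proof is correct and takes essentially the same approach as the paper: under Assumption~\ref{assumption:unique-optimal-action} the normalized regrets are $0$ or $1$, so every pseudo likelihood equals $\sigma(\kappa)$ or $\sigma(-\kappa)$, and hence $|Z_{i,n}(a^{(n)})|\le \log\bigl(\sigma(\kappa)/\sigma(-\kappa)\bigr)=\kappa$. You are slightly more explicit than the paper in two places: you check that the normalization never divides by zero, and you show $Z_{i,n}\in\{0,\pm\kappa\}$ directly.
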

\begin{proof}{Proof of Lemma \ref{lemma:upper-bound-of-log-ratio-probability}}
    Without loss of generality, let $a_1^{(n)}$ denote the action of choosing the option $X_n$, and $a_2^{(n)}$ the action of selecting option $Y_n$. According to the likelihood \eqref{eq:likelihood-agent-a1} and \eqref{eq:likelihood-agent-a2}, %If $\rho_h(X_n)< \rho_h(Y_n)$, then $\Phi(a_1^{(n)};G^{(n)},h)=0$, and $\Phi(a_2^{(n)};G^{(n)},h)=\rho_h(Y_n)-\rho_h(X_n)=\Delta^n_h$. If $\rho_h(X_n)> \rho_h(Y_n)$,$\Phi(a_1^{(n)};G^{(n)},h)=\rho_h(X_n)- \rho_h(Y_n)=-\Delta^n_h$, $\Phi(a_2^{(n)};G^{(n)},h)=0$. 
    if $d_i(n)<0$ and $a_1^{(n)}$ is the optimal choice, we have
    \begin{equation*}
        P(a_1^{(n)}|G^{(n)},h)=\sigma(\kappa), \ P(a_2^{(n)}|G^{(n)},h)=\sigma(-\kappa).
    \end{equation*}
    Hence, for $a^{(n)}\in\{a_1^{(n)},a_2^{(n)}\}$ and $i\ne \hat{l}$, we have
    \begin{equation*}
        |Z_{i,n}(a^{(n)})|=\left|\log \frac{P(a^{(n)}|G^{(n)},h_i)}{P(a^{(n)}|G^{(n)},h_{\hat{l}})}\right|\le \log \frac{\sigma(\kappa)}{\sigma(-\kappa)}=\kappa. 
    \end{equation*}
\end{proof}
The following proposition demonstrates that distinguishing questions can increase the expected log-likelihood ratio of the distortion function preferred by the agent more than that of other distortion functions. By contrast, for non-distinguishing questions, a wrong risk preference will not gain any advantage over the true one from these questions.

\begin{proposition}\label{prop:negative-upper-bound-of-the-ratio}
Under Assumption \ref{assumption:unique-optimal-action}, for all $\kappa>0$ and $i\in \{1,\cdots, L\}\setminus  \{\hat{l}\}$, given the question $(X_n, Y_n)$, if $(\rho_{h_{\hat{l}}}(X_n)-\rho_{h_{\hat{l}}}(Y_n))(\rho_{h_{i}}(X_n)-\rho_{h_{i}}(Y_n))<0$, 
%if an agent makes choices according to \eqref{eq:probability-agent-a1} and \eqref{eq:probability-agent-a2}, and $0<\kappa\le \beta$, then we have
under the general decision-making model \eqref{eq:fixed-probability-agent-a1} and \eqref{eq:fixed-probability-agent-a2},  the expectation of $Z_{i,n}(a^{(n)})$ satisfies
\begin{equation*}
    \widehat E[Z_{i,n}(a^{(n)})| \mathcal{F}_{n-1}]\le -\gamma, 
\end{equation*}
where $\gamma=(2\bar{P}-1)\kappa>0$. If $(\rho_{h_{\hat{l}}}(X_n)-\rho_{h_{\hat{l}}}(Y_n))(\rho_{h_{i}}(X_n)-\rho_{h_{i}}(Y_n))>0$, we have
\begin{equation*}
    \widehat E[Z_{i,n}(a^{(n)})| \mathcal{F}_{n-1}]=0.
\end{equation*}
%where $\gamma=\kappa \hat{\Delta}\tanh{\frac{\beta\hat{\Delta}}{2}}>0$. 
%Similarly, if an agent makes choices according to \eqref{eq:fixed-probability-agent-a1} and \eqref{eq:fixed-probability-agent-a2}, and $0<\kappa\le \frac{1}{D}\log \frac{\bar{P}}{1-\bar{P}}$, then $\gamma=(2\bar{P}-1)\kappa \hat{\Delta}>0$.
\end{proposition}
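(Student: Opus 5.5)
The plan is to compute the conditional expectation directly. Conditional on $\mathcal{F}_{n-1}$, the question $(X_n,Y_n)$ is fixed, and the only randomness is the agent's binary choice. First I will use Assumption \ref{assumption:unique-optimal-action} to evaluate the pseudo likelihoods explicitly. The assumption gives $d_j(n)\ne 0$ for every $j$. Hence the normalized regrets $\widetilde\Phi(\cdot;G^{(n)},h_j)$ take the values $0$ on the action optimal for $h_j$ and $1$ on the other action. By \eqref{eq:likelihood-agent-a1}--\eqref{eq:likelihood-agent-a2} it follows that $P(a|G^{(n)},h_j)=\sigma(\kappa)$ if $a$ is $h_j$-optimal and $\sigma(-\kappa)$ otherwise, as already noted in the proof of Lemma \ref{lemma:upper-bound-of-log-ratio-probability}. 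Note also that $\log(\sigma(\kappa)/\sigma(-\kappa))=\kappa$.

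For the distinguishing case $d_{\hat l}(n)d_i(n)<0$, let $a^*$ be the $h_{\hat l}$-optimal action; it is then the $h_i$-suboptimal action, and the other action $a'$ is $h_i$-optimal and $h_{\hat l}$-suboptimal. This gives
\[
Z_{i,n}(a^*)=\log\frac{\sigma(-\kappa)}{\sigma(\kappa)}=-\kappa,\qquad Z_{i,n}(a')=\log\frac{\sigma(\kappa)}{\sigma(-\kappa)}=\kappa.
\]
Under the general model \eqref{eq:fixed-probability-agent-a1}--\eqref{eq:fixed-probability-agent-a2}, $a^*$ is chosen with probability $\widetilde P_n\ge\bar P$. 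Therefore
\[
\widehat E[Z_{i,n}(a^{(n)})|\mathcal{F}_{n-1}]=-\kappa\widetilde P_n+\kappa(1-\widetilde P_n)=-(2\widetilde P_n-1)\kappa\le-(2\bar P-1)\kappa=-\gamma .
\]
Positivity of $\gamma$ follows from $\bar P>1/2$ and $\kappa>0$.

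In the non-distinguishing case $d_{\hat l}(n)d_i(n)>0$, the two distortion functions share the same optimal action. The pseudo likelihoods therefore coincide for both actions, so $Z_{i,n}\equiv 0$ pointwise and its conditional expectation is $0$.

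The main subtlety is measurability. We must check that $\widetilde P_n$ and the choice of question are $\mathcal{F}_{n-1}$-measurable, or at least that the bound holds for every realization of $\widetilde P_n\in[\bar P,1]$. The latter makes the inequality valid regardless: if needed, we condition additionally on the question and then use the tower property. Everything else is routine algebra.
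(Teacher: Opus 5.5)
Your proof is correct and essentially the paper's argument: you obtain the pseudo-likelihoods $\sigma(\pm\kappa)$ from the normalized regrets, get $Z_{i,n}=\mp\kappa$ on the two actions in the distinguishing case, and average with weights $\widetilde P_n$ and $1-\widetilde P_n$ to reach $-(2\widetilde P_n-1)\kappa\le-(2\bar P-1)\kappa$, with $Z_{i,n}\equiv 0$ in the non-distinguishing case. Your remark about additionally conditioning on the selected question is consistent with how the paper later applies the result through $\mathcal{G}_t$ in Corollary \ref{corollary:KL-divergence-sum}.
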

\begin{proof}{Proof of Proposition \ref{prop:negative-upper-bound-of-the-ratio}}
Without loss of generality, let $\rho_{h_{\hat{l}}}(X_n)<\rho_{h_{\hat{l}}}(Y_n)$ and $\rho_{h_{i}}(X_n)>\rho_{h_{i}}(Y_n)$. Let $a_1^{(n)}$ denote the action of choosing $X_n$, and $a_2^{(n)}$ denote the action of choosing $Y_n$. 
%Then $\Phi(a_1^{(n)},G^{(n)},h_{\hat{l}})=0$, $\Phi(a_2^{(n)},G^{(n)},h_{\hat{l}})\ge \hat{\Delta}$, $\Phi(a_2^{(n)},G^{(n)},h_{i})=0$, and $\Phi(a_1^{(n)},G^{(n)},h_{i})\ge \hat{\Delta}$. 
To simplify the notation, we set $p_n:=\widetilde P_n\in[\bar P,1]$ according to \eqref{eq:fixed-probability-agent-a1}.
Under the general decision-making model, we have $\widehat P(a^{(n)}_1\mid G^{(n)},h_{\hat l})=p_n$ and $\widehat P(a^{(n)}_2\mid G^{(n)},h_{\hat l})=1-p_n.$ 
By the likelihood \eqref{eq:likelihood-agent-a1} and \eqref{eq:likelihood-agent-a2}, we have
%\begin{align}P(a^{(n)}_1\mid G^{(n)},h_{\hat l})=\sigma(\kappa d),\ P(a^{(n)}_2\mid G^{(n)},h_{\hat l})=1-\sigma(\kappa d),\ P(a^{(n)}_1\mid G^{(n)},h_i)=1-\sigma(\kappa e), \ P(a^{(n)}_2\mid G^{(n)},h_i)=\sigma(\kappa e).\end{align}
%$P(a^{(n)}_1\mid G^{(n)},h_{\hat l})=\sigma(\kappa d)$, $P(a^{(n)}_2\mid G^{(n)},h_{\hat l})=1-\sigma(\kappa d)$, $P(a^{(n)}_1\mid G^{(n)},h_i)=1-\sigma(\kappa e),$ and $P(a^{(n)}_2\mid G^{(n)},h_i)=\sigma(\kappa e).$
\begin{equation*}
\begin{aligned}
\widehat E[Z_{i,n}(a^{(n)})| \mathcal{F}_{n-1}]
&=
p_n\log\frac{1-\sigma(\kappa )}{\sigma(\kappa )}
+(1-p_n)\log\frac{\sigma(\kappa )}{1-\sigma(\kappa )} \\
&=p_n\log\frac{\sigma(-\kappa )}{\sigma(\kappa )}
+(1-p_n)\log\frac{\sigma(\kappa )}{\sigma(-\kappa )}\\
&=-p_n\kappa+(1-p_n)\kappa\\
&=-(2p_n-1)\kappa \le -(2\bar P-1)\kappa.
\end{aligned}
\end{equation*}
If $\rho_{h_{\hat{l}}}(X_n)<\rho_{h_{\hat{l}}}(Y_n)$ and $\rho_{h_{i}}(X_n)<\rho_{h_{i}}(Y_n)$, we have 
\begin{equation*}
    \widehat E[Z_{i,n}(a^{(n)})| \mathcal{F}_{n-1}]=p_n\log\frac{\sigma(\kappa )}{\sigma(\kappa )}
+(1-p_n)\log\frac{1-\sigma(\kappa )}{1-\sigma(\kappa )}=0.
\end{equation*}
\end{proof}

\begin{corollary}\label{corollary:KL-divergence-sum}
Under Assumption \ref{assumption:unique-optimal-action}, we have
    \begin{equation*}
    \sum_{t=1}^n \widehat {E}[Z_{i,t}(a^{(t)})| \mathcal{F}_{t-1}]
    \le -(2\bar P-1)\kappa \pi_0n,
    \end{equation*}
    where $\pi_0=\frac{2\epsilon}{L(L-1)}$.
\end{corollary}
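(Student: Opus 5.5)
The plan is to prove a per-round bound and then sum it. Specifically, I will show that for every round $t$,
\begin{equation*}
\widehat E[Z_{i,t}(a^{(t)})\mid \mathcal F_{t-1}]\le -(2\bar P-1)\kappa\,\pi_0 .
\end{equation*}
Summing over $t=1,\dots,n$ then gives the corollary.

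Here $\mathcal F_{t-1}$ contains the history up to the end of round $t-1$: past questions, past answers and the posterior vector. It does \emph{not} contain the round-$t$ randomness of Algorithm \ref{alg:estimate-distortion-function}, namely the uniform draw $u$ and the random pair. I will therefore condition further on the question-selection outcome at round $t$. Write $\mathcal G_t\supseteq\mathcal F_{t-1}$ for the $\sigma$-field that additionally contains the selected question $G^{(t)}=(X_t,Y_t)$. The tower property gives
\begin{equation*}
\widehat E[Z_{i,t}\mid\mathcal F_{t-1}]=\widehat E\big[\widehat E[Z_{i,t}\mid\mathcal G_t]\mid\mathcal F_{t-1}\big].
\end{equation*}
Given $\mathcal G_t$, the question is fixed. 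The agent's answer then follows the general decision model \eqref{eq:fixed-probability-agent-a1}--\eqref{eq:fixed-probability-agent-a2}, so Proposition \ref{prop:negative-upper-bound-of-the-ratio} applies directly:
\begin{itemize}
\item on $\{t\in\mathcal N_i\}$ we have $\widehat E[Z_{i,t}\mid\mathcal G_t]\le-(2\bar P-1)\kappa$;
\item on $\{t\notin\mathcal N_i\}$ the conditional expectation equals $0$. Under Assumption \ref{assumption:unique-optimal-action}, $d_{\hat l}(t)d_i(t)\neq0$, so these two cases exhaust all possibilities.
\end{itemize}
Hence $\widehat E[Z_{i,t}\mid\mathcal G_t]\le -(2\bar P-1)\kappa\,\mathbf 1_{\{t\in\mathcal N_i\}}$. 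Taking $\mathcal F_{t-1}$-conditional expectations yields
\begin{equation*}
\widehat E[Z_{i,t}\mid\mathcal F_{t-1}]\le -(2\bar P-1)\kappa\,\widehat{\mathbb P}(t\in\mathcal N_i\mid\mathcal F_{t-1}).
\end{equation*}

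It remains to show that $\widehat{\mathbb P}(t\in\mathcal N_i\mid\mathcal F_{t-1})\ge\pi_0$ for every $t$. Let $E_{i,t}$ be the event that round $t$ enters the exploration branch and the random unordered pair drawn is $\{h_i,h_{\hat l}\}$. The draw is independent of $\mathcal F_{t-1}$, and there are $L(L-1)/2$ unordered pairs, so
\begin{equation*}
\widehat{\mathbb P}(E_{i,t}\mid\mathcal F_{t-1})=\epsilon\cdot\frac{2}{L(L-1)}=\pi_0 .
\end{equation*}
On $E_{i,t}$, the algorithm picks $G^{(t)}=\arg\max_{G\in\mathcal K}\Psi(G,i,\hat l)$. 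By Theorem \ref{thm:finite-time-existence-of-the-question}, together with Proposition \ref{prop:existence-of-distinguishing-question} applied to the non-proportional pair, the pool $\mathcal K$ contains a question with $\Psi(G,i,\hat l)>0$. The maximizer therefore satisfies $\Psi>0$, which means $a^{(t),*,i}\neq a^{(t),*,\hat l}$, i.e.\ $d_{\hat l}(t)d_i(t)<0$. Thus $E_{i,t}\subseteq\{t\in\mathcal N^e_i\}\subseteq\{t\in\mathcal N_i\}$. The exploitation branch and the other exploration pairs can only add mass to $\{t\in\mathcal N_i\}$, and in any case they contribute a nonpositive term, so the inequality is preserved.

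Combining the two bounds gives the per-round inequality, and summing over $t\le n$ gives
\begin{equation*}
\sum_{t=1}^n\widehat E[Z_{i,t}\mid\mathcal F_{t-1}]\le-(2\bar P-1)\kappa\pi_0 n .
\end{equation*}

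The main obstacle is the step $E_{i,t}\subseteq\{t\in\mathcal N_i\}$. It requires the pool $\mathcal K$ to genuinely contain a distinguishing question for the pair $(h_i,h_{\hat l})$. I will take this from the construction of $\mathcal K$ via Theorem \ref{thm:finite-time-existence-of-the-question}, stating explicitly that the candidate pairs are non-proportional, which is the standing hypothesis under which $\mathcal K$ was built. A secondary point is to state clearly that $\mathcal F_{t-1}$ excludes the round-$t$ selection randomness; without this convention the per-round bound $\pi_0$ would not be available.
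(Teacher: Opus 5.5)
Your proposal is correct and follows essentially the same route as the paper. Both condition on $\mathcal G_t=\mathcal F_{t-1}\vee\sigma(G^{(t)})$, use Proposition \ref{prop:negative-upper-bound-of-the-ratio} to bound $\widehat E[Z_{i,t}\mid\mathcal G_t]$ by $-(2\bar P-1)\kappa$ times the indicator of a distinguishing round, and lower-bound that round's conditional probability by the exploration probability $2\epsilon/(L(L-1))$ before summing over $t$. Where you differ, you are slightly more careful than the paper: you use the $\mathcal G_t$-measurable event $\{t\in\mathcal N_i\}$ rather than the paper's $V_{i,t}$, and you spell out two points the paper leaves implicit, namely why exploring $\{h_i,h_{\hat l}\}$ yields a distinguishing question and that this needs the candidate pair to be non-proportional.
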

\begin{proof}{Proof of Corollary \ref{corollary:KL-divergence-sum}}
We define $\mathcal{G}_t=\mathcal{F}_{t-1}\vee \sigma(G^{(t)})$ as the $\sigma-$algebra containing all information  available up
to round $t-1$, together with the question selected for round $t$.
Let $V_{i,t}$ be the event that the selected question for round $t$ is to distinguish the pair of distortion functions $\{h_{\hat l},h_i\}$. According to Proposition \ref{prop:negative-upper-bound-of-the-ratio}, we have 
\begin{equation*}
\widehat {E}[Z_{i,t}(a^{(t)})| \mathcal{G}_{t}]\le -(2\bar P-1)\kappa\mathbf{1}_{V_{i,t}}. 
\end{equation*}
Since $\widehat{P}(V_{i,t}|\mathcal{F}_{t-1})\ge \frac{2\epsilon}{L(L-1)}$, we have
\begin{equation*}
\begin{aligned}
    \widehat {E}[Z_{i,t}(a^{(t)})| \mathcal{F}_{t-1}]&=\widehat {E}[\widehat {E}[Z_{i,t}(a^{(t)})| \mathcal{G}_{t}]| \mathcal{F}_{t-1}]\\
    &\le -(2\bar P-1)\kappa \widehat{P}(V_{i,t}|\mathcal{F}_{t-1})\le -(2\bar P-1)\kappa\cdot \frac{2\epsilon}{L(L-1)}.
\end{aligned}
\end{equation*}
Hence, we complete the proof.
\end{proof}

Theorem \ref{theorem:client-choose-P-convergence} establishes an upper bound on the convergence rate of our algorithm under the general setting \eqref{eq:fixed-probability-agent-a1} and \eqref{eq:fixed-probability-agent-a2}. 
%It shows that within a finite number of rounds, the probability of the agent’s preferred distortion function has a high probability of ranking first and will remain so thereafter. 
The convergence rate is of order $O\left(\exp\left(-cm+O\left(\sqrt{m\log m}\right)\right)\right)$ for some constant $c>0$, where $m$ is the number of questions.
\begin{theorem}\label{theorem:client-choose-P-convergence}
Under Assumption \ref{assumption:unique-optimal-action}, if $\kappa>0$, under the general model setting \eqref{eq:fixed-probability-agent-a1} and \eqref{eq:fixed-probability-agent-a2}, for all $\delta\in(0,1)$ and all $m\ge 1$, we have
\begin{equation*}
\widehat P\left(\left|\widehat{Q}_{\hat{l}}^{(m)}-1\right|\le r(m)\right)\ge 1-\delta,
\end{equation*}
where 
$r(m)=(L-1)\exp\left(-g_0\pi_0m+ 2\kappa\sqrt{2ma_{m}}\right)$, $g_0=(2\bar P-1)\kappa$, $\pi_0=\frac{2\epsilon}{L(L-1)}$, 
and $a_{m}=\log\left(\frac{(L-1)\pi^2m^2}{6\delta}\right)$.
\begin{comment}
$\forall \delta_1, \delta_2>0$, there exists some constant $\Delta>0$ and $n_0\in\mathbb N_+$ such that
{\color{yellow}\begin{align*}
\P\left(A_{\hat{l}}^{(G)}\right)\ge 1-\delta_1, \ \P\left(B_{\hat{l}}^{(H,\infty)}\right)\ge 1-\delta_2,
    \end{align*}}
    where $G=\left\lceil \max \left\{n_0,  \frac{C_0}{g_0\pi_0}+1, \frac{8B^2}{\Delta^2}\log \frac{L-1}{\delta_1} \right\} \right\rceil$, $H=\bigg\lceil \max \Big\{n_0, \frac{C_0}{g_0\pi_0}+1, \frac{8B^2}{\Delta^2}\Big(\log \frac{L-1}{\delta_2}+\log\frac{1}{1-\exp(-\Delta^2/(8B^2)}\Big) \Big\} \bigg\rceil$, $g_0=(2\bar{P}-1)\kappa \hat{\Delta}$, and  $B=\kappa D$ for some constant $D>0$. 
Moreover, $\forall \delta>0$, $\forall m\ge 1$, we have 
    \begin{align*}
\P\left(\left|\widehat{Q}_{\hat{l}}^{(m)}-1\right|\le \varepsilon(m)\right)\ge 1-\delta,
    \end{align*}
    where 
        $\varepsilon(m)=(L-1)\exp\left(-m\Delta+ 2B\sqrt{2ma_{m}}\right),$
    and $a_{m}=\log\left(\frac{(L-1)\pi^2m^2}{6\delta}\right)$.
\end{comment}
\end{theorem}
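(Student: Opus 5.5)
We work with the log-likelihood ratios. Since the prior is uniform, $Q_i^{(m)}/Q_{\hat l}^{(m)}=\exp(S_{i,m})$ with $S_{i,m}:=\sum_{t=1}^m Z_{i,t}(a^{(t)})$. Then
\begin{equation*}
1-\widehat Q_{\hat l}^{(m)}=\frac{\sum_{i\ne\hat l}Q_i^{(m)}}{\sum_j Q_j^{(m)}}\le \sum_{i\ne\hat l}\frac{Q_i^{(m)}}{Q_{\hat l}^{(m)}}=\sum_{i\ne \hat l}\exp(S_{i,m}).
\end{equation*}
Also $\widehat Q_{\hat l}^{(m)}\le 1$. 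It therefore suffices to show that, with probability at least $1-\delta$, every $i\ne\hat l$ satisfies
\begin{equation*}
S_{i,m}\le -g_0\pi_0 m+2\kappa\sqrt{2ma_m}.
\end{equation*}
Summing the $L-1$ exponentials then gives exactly $r(m)$.

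\textbf{Step 1: martingale decomposition.} Split $S_{i,m}=M_{i,m}+A_{i,m}$. Here $A_{i,m}=\sum_{t\le m}\widehat E[Z_{i,t}\mid\mathcal F_{t-1}]$, and $M_{i,m}$ is the sum of the increments $D_{i,t}=Z_{i,t}-\widehat E[Z_{i,t}\mid\mathcal F_{t-1}]$. Corollary \ref{corollary:KL-divergence-sum} gives $A_{i,m}\le -g_0\pi_0 m$ deterministically. By Lemma \ref{lemma:upper-bound-of-log-ratio-probability}, $|Z_{i,t}|\le\kappa$, so each increment lies in an interval of length at most $2\kappa$. This holds because conditional on $\mathcal F_{t-1}$, $Z_{i,t}$ lies in $[-\kappa,\kappa]$.

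\textbf{Step 2: concentration.} Apply Azuma--Hoeffding to the martingale difference sequence $D_{i,t}$ with bounded ranges. This gives
\begin{equation*}
\widehat P(M_{i,m}\ge x)\le \exp\!\left(-\frac{2x^2}{m(2\kappa)^2}\right).
\end{equation*}
If only the cruder bound $|D_{i,t}|\le 2\kappa$ is used, the exponent becomes $-x^2/(8\kappa^2 m)$. Choose $x=2\kappa\sqrt{2ma_m}$. The crude bound then yields failure probability $e^{-a_m}=6\delta/((L-1)\pi^2m^2)$, which matches the constant in the statement. A union bound over the $L-1$ indices $i\ne\hat l$ gives total failure probability $6\delta/(\pi^2m^2)\le\delta$ for fixed $m$. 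The $\pi^2m^2/6$ factor additionally lets us union over all $m\ge1$, since $\sum_m 6/(\pi^2m^2)=1$. So the bound even holds uniformly in $m$, which is stronger than required.

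\textbf{Step 3: combine.} On the good event, $S_{i,m}\le -g_0\pi_0m+2\kappa\sqrt{2ma_m}$ for every $i\ne\hat l$. Hence $|\widehat Q^{(m)}_{\hat l}-1|\le r(m)$, which yields the $O(\exp(-cm+O(\sqrt{m\log m})))$ rate with $c=g_0\pi_0$.

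\textbf{Main obstacle.} The delicate point is justifying Corollary \ref{corollary:KL-divergence-sum} under adaptivity. The question is chosen from past posteriors plus exploration randomness. On the exploration step, the pair $\{h_{\hat l},h_i\}$ is drawn with probability at least $\epsilon\cdot 2/(L(L-1))$, and then a question from $\mathcal K$ is used that actually separates the pair. Such a question must exist, by Theorem \ref{thm:finite-time-existence-of-the-question} and the maximization of $\Psi$ together with Assumption \ref{assumption:unique-optimal-action}. On all other rounds, Proposition \ref{prop:negative-upper-bound-of-the-ratio} gives nonpositive drift.

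The filtration must be set up so that $Z_{i,t}$ is adapted and the tower property is valid. We take $\mathcal F_t$ to be generated by the questions, the exploration coins and the responses up to $t$. Here it is essential that the agent's response probabilities depend only on the current question.
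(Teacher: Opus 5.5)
Your proposal is correct and follows the paper's proof. Both arguments split the log-likelihood ratio into a drift term, bounded via Corollary~\ref{corollary:KL-divergence-sum}, plus a martingale with increments bounded by $2\kappa$; both then apply Azuma--Hoeffding with $x_m=2\kappa\sqrt{2ma_m}$ and take a union bound over $i\ne\hat l$ and over all $m$ with $\delta_m=6\delta/(\pi^2m^2)$. The only difference is cosmetic: the paper tracks $-Z_{i,t}$ rather than $Z_{i,t}$, and it likewise obtains the bound uniformly in $m$.

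Two of your side remarks are also correct. The range form of Azuma--Hoeffding would sharpen the constant. And the argument does need a separating question in $\mathcal K$ for each pair $\{h_{\hat l},h_i\}$; this tacitly requires the candidates in $\hat{\mathcal H}$ to be pairwise non-proportional, as in Theorem~\ref{thm:finite-time-existence-of-the-question}.
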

\begin{proof}{Proof of Theorem \ref{theorem:client-choose-P-convergence}}

   Let $S_i(m):=\log \frac{Q_{\hat{l}}^{(m)}}{Q_i^{(m)}}$. We have
   \begin{equation*}
\widehat{Q}_{\hat{l}}^{(m)}=\frac{Q_{\hat{l}}^{(m)}}{\sum_{j=1}^LQ_{j}^{(m)} }
       =\frac{1}{1+\sum_{i\ne \hat{l}}\frac{Q_{i}^{(m)}}{Q_{\hat{l}}^{(m)}}}
       =\frac{1}{1+\sum_{i\ne \hat{l}}e^{-S_i(m)}},
   \end{equation*}
   and
       $1-\widehat{Q}_{\hat{l}}^{(m)}=\frac{\sum_{i\ne \hat{l}}e^{-S_i(m)}}{1+\sum_{i\ne \hat{l}}e^{-S_i(m)}}\le \sum_{i\ne \hat{l}}e^{-S_i(m)}.$
   %Suppose $\tau$ is the first time such that $\hat{l}=i^{*,\tau}$. Obviously $\tau\ge n_0$. Let $R_{\tau}=\sum_{i\ne \hat{l}}e^{-S_i(\tau)}$. Since $\hat{l}\in \{i^{*,\tau},j^{*,\tau}\}$, for any $i\notin \{i^{*,\tau},j^{*,\tau}\}$, $\frac{Q_i^{(\tau)}}{Q_{\hat{l}}^{(\tau)}}\le 1$. %Since $\widehat{Q}_{\hat{l}}^{(\tau)}\ge \frac{1}{L}$, $\frac{Q_{i^{*,\tau}}^{(\tau)}}{Q_{\hat{l}}^{(\tau)}}=\frac{\widehat{Q}_{i^{*,\tau}}^{(\tau)}}{\widehat{Q}_{\hat{l}}^{(\tau)}}\le \frac{1-\widehat{Q}_{\hat{l}}^{(\tau)}}{\widehat{Q}_{\hat{l}}^{(\tau)}}\le L-1$.
   %So $R_\tau=\sum_{j\ne \hat{l}}\frac{Q_{j}^{(n)}}{Q_{\hat{l}}^{(n)}}\le (L-1)$.
   We set $\widehat{Z}_{i,t}(a^{(t)})=-{Z}_{i,t}(a^{(t)})$. According to the update process \eqref{eq:update-process}, we have $S_i(t)=S_i(t-1)+\widehat{Z}_{i,t}(a^{(t)})$.  We define $\widehat{\eta}_{i,t}:=\widehat{Z}_{i,t}(a^{(t)})-\widehat E[\widehat{Z}_{i,t}(a^{(t)})|\mathcal{F}_{t-1}]$ and have $\widehat E[\widehat{\eta}_{i,t}|\mathcal{F}_{t-1}]=0$. We define
   \begin{equation*}
       \widehat{M}_{i,m}:=\sum_{t=1}^m \widehat{\eta}_{i,t}, \quad \widehat{M}_{i,0}:=0.
   \end{equation*}
   Hence, $\widehat{M}_{i,m}$ is a martingale with respect to $\mathcal{F}_{m}$. According to Lemma \ref{lemma:upper-bound-of-log-ratio-probability}, we have $\left|\widehat E[\widehat Z_{i,t}(a^{(t)})|\mathcal{F}_{t-1}]\right|\le \kappa$.  Hence,
    \begin{equation*}
        |\widehat \eta_{i,t}|\le \left|\widehat Z_{i,t}(a^{(t)})\right|+\left|\widehat E[\widehat Z_{i,t}(a^{(t)})|\mathcal{F}_{t-1}]\right|\le 2\kappa.
    \end{equation*}
   Noting that $S_i(0)=0$, we have
   \begin{equation*}
       S_i(m)=\sum_{t=1}^{m}\widehat{Z}_{i,t}(a^{(t)})=\sum_{t=1}^{m}\widehat{\eta}_{i,t}+\sum_{t=1}^{m}\widehat E[\widehat{Z}_{i,t}(a^{(t)})|\mathcal{F}_{t-1}]
       = -\sum_{t=1}^m\Delta_{i,t}+\widehat{M}_{i,m},
   \end{equation*}
   where $\Delta_{i,t}=-\widehat E[\widehat{Z}_{i,t}(a^{(t)})|\mathcal{F}_{t-1}]$.
   %$p_t$ denotes the true probability of the agent's decision  \eqref{eq:fixed-probability-agent-a1} and \eqref{eq:fixed-probability-agent-a2}, and $q_{i,t}$ represents the likelihood \eqref{eq:likelihood-agent-a1} and \eqref{eq:likelihood-agent-a2} under the distortion function $h_i$. 
   According to Corollary \ref{corollary:KL-divergence-sum},  we have $-\sum_{t=1}^m\Delta_{i,t}\ge g_0\pi_0m>0,$
    where $g_0=(2\bar{P}-1)\kappa$ and $\pi_0=\frac{2\epsilon}{L(L-1)}$.
   Since $\widehat M_{i,m}$ is a martingale and $|\widehat \eta_{i,t}|\le 2\kappa$, by Azuma-Hoeffding inequality, for any $x>0$,
   \begin{equation*}
       \widehat P(\widehat{M}_{i,m}\le -x)\le \exp\left(-\frac{x^2}{2\sum_{t=1}^{m}4\kappa^2}\right)=\exp\left( -\frac{x^2}{8\kappa^2 m}\right).
   \end{equation*}
   Let $\delta_{m}=\frac{6\delta}{\pi^2(m)^2}$, $a_{m}=\log\left(\frac{(L-1)\pi^2(m)^2}{6\delta}\right)$, $x_{m}=2\kappa \sqrt{2ma_{m}}$. Then
   \begin{equation*}
        \widehat P(\widehat{M}_{i,m}\le -x_m)\le \exp(-a_{m})=\frac{\delta_{m}}{L-1}. 
   \end{equation*}
   By Boole’s inequality,
   \begin{equation*}
       \widehat P\left(\exists i\ne\hat{l}: \widehat{M}_{i,m}\le -x_{m}\right)\le \sum_{i\ne \hat{l}}\P \left(\widehat{M}_{i,m}\le -x_{m} \right)\le (L-1)*\frac{\delta_{m}}{L-1}=\delta_{m}.
   \end{equation*}
   Since 
       $\sum^\infty_{m=1}\delta_{m}=\frac{6\delta}{\pi^2}\sum^\infty_{m=1}\frac{1}{m^2}=\frac{6\delta}{\pi^2}\cdot \frac{\pi^2}{6}=\delta,$
   by Boole’s inequality,
   \begin{equation*}
       \widehat P\left(\exists m\ge 1, \exists i\ne\hat{l}: \widehat{M}_{i,m}\le -x_{m} \right)\le \sum_{m=1}^\infty \widehat P\left(\exists i\ne\hat{l}: \widehat{M}_{i,m}\le -x_{m}\right)=\delta.
   \end{equation*}
   Hence,
    $\widehat P\left(\forall m\ge 1, \forall i\ne\hat{l}: \widehat{M}_{i,m}\ge -x_{m} \right)\ge 1-\delta.$
   In the event $\Theta :=\{\forall m\ge 1, \forall i\ne\hat{l}: \widehat{M}_{i,m}\ge -x_{m}\}$, we have
       $\sum_{t=1}^{m}\widehat{Z}_{i,t}(a^{(t)})\ge g_0\pi_0m-x_{m}.$
   Since $S_i(m)=\sum^{m}_{t=1}\widehat{Z}_{i,t}(a^{(t)}),$
   in the event $\Theta$, we have
   \begin{equation*}
   \begin{aligned}
       S_i(m)&\ge g_0\pi_0m-2\kappa \sqrt{2ma_{m}},\\
       %e^{-S_i(\tau+m)}&\le e^{-S_i(\tau)}\exp\left(-m\Delta+2\kappa D\sqrt{2ma_{m}}\right)\\
       \sum_{i\ne \hat{l}}e^{-S_i(m)}&\le \sum_{i\ne \hat{l}}\exp\left(-g_0\pi_0m+2\kappa \sqrt{2ma_{m}}\right),\\
       \sum_{i\ne \hat{l}}e^{-S_i(m)}&\le (L-1)\exp\left(-g_0\pi_0m+2\kappa \sqrt{2ma_{m}}\right).
   \end{aligned}
   \end{equation*}
   Since $1-\widehat{Q}_{\hat{l}}^{(m)}\le \sum_{i\ne \hat{l}}e^{-S_i(m)}$, we have
   \begin{equation*}
\widehat P\left(\left|\widehat{Q}_{\hat{l}}^{(m)}-1\right|\le (L-1)\exp\left(-g_0\pi_0m+2\kappa \sqrt{2ma_{m}}\right)\right)\ge 1-\delta.
    \end{equation*}
\end{proof}
Theorem \ref{theorem:client-choose-P-convergence} shows that regardless of the specific pattern the agent follows during the selection process, as long as the agent does not choose options completely at random, we can identify the distortion riskmetric that reflects the agent’s underlying preferences by setting an appropriate learning rate $\kappa$. 
\begin{remark}
    Although any learning rate $\kappa>0$ is sufficient for the convergence analysis, it should be chosen carefully in practice when the agent's level of uncertainty is unknown: an overly large value of $\kappa$ may lead to overconfident updates of the posterior probabilities, whereas an excessively small value may slow the update of the posterior probabilities.  An appropriate learning rate should align the likelihood \eqref{eq:likelihood-agent-a1} and \eqref{eq:likelihood-agent-a2} with the agent’s true probability. Since the general decision-making model assumes $\widetilde P_n\in[\bar P,1]$, we choose $\kappa$ conservatively by minimizing
the worst-case expected negative log-likelihood:
\begin{equation*}
    \min_{s\in(1/2,1)}
    \sup_{p\in[\bar P,1]}
    \left[-p\log s-(1-p)\log(1-s)\right], 
\end{equation*}
where $s=\sigma(\kappa)$. Since $f(s,p):=-p\log s-(1-p)\log(1-s)$ is decreasing in $p$, we have
\begin{equation*}
    \min_{s\in(1/2,1)}
    \sup_{p\in[\bar P,1]}
    \left[-p\log s-(1-p)\log(1-s)\right]\iff \min_{s\in(1/2,1)} \left[-\bar P\log s-(1-\bar P)\log(1-s)\right].
\end{equation*}
If $\bar P<1$, the minimizer of $f(s,\bar P)$ is that 
$s=\bar P$, and we have $\kappa=\log\frac{\bar P}{1-\bar P}$. If $\bar P=1$, then $\kappa$ can be chosen arbitrarily large. In practice, we can set $\kappa=\log\frac{0.55}{0.45}\approx 0.2$, which corresponds to assuming that the minimum probability with which agents choose their preferred choices is 0.55.
\end{remark}

\section{Elicit-to-Optimize Reinforcement Learning}\label{sec: Risk-Sensitive Reinforcement Learning}
\subsection{Formulation}
In this section, we solve the risk-sensitive reinforcement learning problems after identifying the agent’s preferred distortion function. Let  $T \in \mathbb{N}$ be a finite time horizon, and $\mathcal{L} := \{0, 1, 2, \ldots, T\}$. 
%We denote by $\mathcal{Y}$ the linear space of all bounded measurable functions (i.e. random variables) on $(\Omega, \mathcal{F})$ and $\mathbb{F}=(\mathcal{F}_t, t=0,1,2,\cdots)$ the natural filtration.
Let $\mathcal{S}$ and $\mathcal{A}$ denote the state and the action spaces, respectively.
\begin{comment}
and let $\mathcal{C}\in \R$ be a cost space. Moreover, a cost function is denoted by $c: \mathcal{S} \times \mathcal{A} \times \mathcal{S} \to \mathbb{R}$, where a realized cost is given by $c(s, a, s')$.
\end{comment}
The gain of the agent at time $  t+1  $  satisfies the following dynamics:
\begin{equation*}
    W_{t+1}=\sum_{\tau=0}^tR(S_\tau, A_\tau, S_{\tau+1}; W_0), \quad 0<t\le T-1,
\end{equation*}
where $S_\tau\in \mathcal{S}$, $A_\tau\in \mathcal{A}$, $W_0$ denotes the initial gain, and $R:\mathcal{S}\times \mathcal{A}\times \mathcal{S}\times \R\rightarrow{\R}$ is the deterministic measurable function.

We are interested in the RL problem where a conditional distortion riskmetric is applied to the accumulated costs.   Throughout the RL section, we use the  conditional
distortion riskmetric in Definition \ref{cond-distortion}. Since the condition variable \(Y\) is discrete,
for a fixed scenario \(y_0\) with \(\mathbb P(Y=y_0)>0\), we take
$C=\{Y=y_0\}$.
For simplicity, we write
$\rho_{h|y_0}(X):=\rho_{h|C}(X)$. We use $\mathbf{s}_t$ to denote the collection of variables at time $t$, which includes the state variable $S_t$, time-to-maturity $\tau_t=T-t$, condition variable $y_t$, and the gain $W_t^\pi$: $\mathbf{s}_t=(S_t, \tau_t, y_t, W_t^\pi)$.
%$\mathbf{s}_t:=(S_t,\tau_t,Y_t,W_t^\pi)$, $t=0,1,\cdots,T.$
We denote by $ \pi(a|\mathbf{s}_t) $ a stochastic feedback policy, which is a probability density function on the action space $  \mathcal{A}  $ for any fixed $\mathbf{s}_t\in \mathcal{S}\times \mathcal{L}\times \X\times \R$.  Specifically,  we aim to solve the following optimization problem: 
\begin{equation}\begin{aligned}\label{prob:SRP}
 &  \inf_{\pi \in \Pi} \int \left(W_0-W_T^\pi \right) \d h \circ \p_{Y = y_0}   \\
 =&\inf_{\pi\in\Pi} \int_{-\infty}^{0}\left(h\left(\P\left(W_0-W_T^\pi\geq x \bigg|Y= y_0\right)\right)-h(1)\right)\d x 
  +\int_0^\infty h \left(\p \left(W_0-W_T^\pi \geq x \bigg| Y = y_0 \right)\right) \d x \\
=&\inf_{\pi\in\Pi} \rho_{h|y_0}\left(W_0-W_T^\pi \right),
 \end{aligned}
\end{equation}
where $\Pi$ is the set of admissible feedback control distributions and $W_t^\pi$ is the gain at time $t$ under policy $\pi$. 
Across different fields, $W_t^\pi$ and the condition variable $Y$ may take a wide range of meanings. For example, in investment, $W_t^\pi$ may represent the total wealth of an investor and $Y$ may represent macroeconomic conditions or market regimes. 
%In systemic risk analysis, $W_t^\pi$ may be the profit of a particular financial institution and $Y$ may be the distress or capital shortfall of another financial institution. 
In operations management, $W_t^\pi$ may describe order revenues and $Y$ may describe order cancellation rate or port congestion.

According to Lemma \ref{lemma:representation-of-conditional-distortion-riskmetrics}, the objective \eqref{prob:SRP} can be represented by
\begin{equation*}
    \sup_{\pi \in \Pi} -\int^1_0F^{-1+,\pi}_{X|Y= y}(1-z)\d h(z), 
\end{equation*}
where 
    $F^{-1+,\pi}_{X|Y=y}(1-z)=\inf\left\{x\in\R:\P\left(W_0-W_T^\pi\le x|Y=y_0\right)>1- z\right\},$
or
\begin{equation*}
    \sup_{\pi \in \Pi}-\int^1_0F^{-1,\pi}_{X|Y=y}(1-z)\d h(z),
\end{equation*}
where $F^{-1,\pi}_{X|Y=y}(1-z)=\inf\left\{x\in\R:  \P\left(W_0-W_T^\pi\le x|Y=y_0\right)\ge 1-z\right\}.$
Hence, the reward received by the agent at time $T$ can be defined as $R_T=-\int^1_0F^{-1+,\pi}_{X|Y=y}(1-z)\d h(z)$ or $R_T=-\int^1_0F^{-1,\pi}_{X|Y=y}(1-z)\d h(z)$. If $t+1<T$, the reward can be defined as
\begin{equation}\label{eq:reward}
R_{t+1}=-\left|W_{t+1}^\pi-W_0\right|\mathbf{1}_{\left\{{W_{t+1}^\pi-W_0<0}\right\}}.
\end{equation}
The formulation of the reward in Equation \eqref{eq:reward} indicates that only positive costs are penalized, which is used as a loss-oriented shaping term to guide the policy toward optimizing the final conditional distortion-riskmetric objective.
Starting from the initial state $\mathbf{s}_0$, the agent takes actions following a policy $\pi$ until maturity, which results in a trajectory
$\mathcal{T}=(\mathbf{s}_0,a_0,R_1, \mathbf{s}_1,a_1,R_2,\cdots, \mathbf{s}_{T-1},  a_{T-1},R_T, \mathbf{s}_{T}).$
Under the policy $\pi$, the value function at the state $\mathbf{s}_t$ is
\begin{equation*}
    V^\pi(\mathbf{s}_t)=\E^\pi\left[\sum_{k=0}^{T-t-1}\gamma^k R_{t+1+k}\Bigg| \mathbf{s}_t \right],
\end{equation*}
where $\gamma$ is the discount rate. 

\subsection{Policy Network and Value Network}
Both the value function and the policy are represented by fully connected, multilayer perceptrons (MLP), with their parameters denoted by $\psi\in \Psi$ and $\theta\in \Theta$, where $\Psi$ and $\Theta$ are two compact domains. The two networks are denoted by $V^{\pi,\psi}(\mathbf{s}_t)$ and $\pi^{\theta}(\mathbf{s}_t)$, respectively. The policy network $\pi^\theta$
parameterizes a Gaussian distribution over the continuous action space:
\begin{equation*}
  \mu_\theta(\mathbf{s}_t)
    = \mathrm{MLP}_\theta(\mathbf{s}_t)\in\R,
   \
  \sigma_\theta
    = \exp(\hat{\sigma}_\theta),
    \
  \pi^\theta(a\mid\mathbf{s}_t)
    = \mathcal{N}\!\left(a;\,\mu_\theta(\mathbf{s}_t),\,\sigma_\theta^2\right),
\end{equation*}
where $\hat{\sigma}_\theta$ is a learnable scalar parameter. The value network $V^{\pi,\psi}$ estimates the expected
discounted return of the policy at the current state:
\begin{equation}
  V^{\pi,\psi}(\mathbf{s}_t) = \mathrm{MLP}_\psi(\mathbf{s}_t)\in\R.
  \label{eq:value_net}
\end{equation}
The 1-step TD residual of the value function is defined as
\begin{equation*}
    \delta_t^{\pi^\theta}= R_{t+1} + \gamma\,V^{\pi,\psi}(\mathbf{s}_{t+1}) - V^{\pi,\psi}(\mathbf{s}_t),
\end{equation*}
where $\gamma=1$ is the discount factor. Then the advantage function and target returns
are computed using Generalized Advantage Estimation (GAE):
\begin{equation}\label{eq:GAE}
  \hat{A}_t^{\pi^\theta}
    = \sum_{l=0}^{T-t-1}(\gamma\lambda_{\mathrm{GAE}})^l\,\delta_{t+l}^{\pi^\theta},
    \
  \hat{G}_t
    = \hat{A}_t^{\pi^\theta} + V^{\pi,\psi}(\mathbf{s}_t),
\end{equation}
where $\lambda_{\mathrm{GAE}}$ is the GAE decay coefficient. Hence, the policy network updates are carried out using the clipped surrogate
objective of the PPO algorithm:
\begin{equation}
  \mathcal{L}_{\mathrm{PPO}}(\theta)
  = -
      \min\!\left(
        r_t(\theta)\,\hat{A}_t^{\pi^{\theta_{\mathrm{old}}}},\;
        g\left(\epsilon,\hat{A}_t^{\pi^{\theta_{\mathrm{old}}}}\right)
      \right)
    ,
  \label{eq:ppo}
\end{equation}
where the policy probability ratio $r_t(\theta)$ and the function $g(\epsilon, A)$ that prevents the policy from changing too aggressively in one update are defined as
\begin{equation}
  r_t(\theta)
  = \frac{\pi^\theta(a_t\mid\mathbf{s}_t)}
         {\pi^{\theta_{\mathrm{old}}}(a_t\mid\mathbf{s}_t)}
  = \exp\!\left(
      \log\pi^\theta(a_t\mid\mathbf{s}_t)
      - \log\pi^{\theta_{\mathrm{old}}}(a_t\mid\mathbf{s}_t)
    \right),
  \label{eq:ratio}
\end{equation}
and
\begin{equation*}
\begin{aligned}
    g(\epsilon, A)=\begin{cases} 
(1 + \epsilon)A, & \text{if } A \geq 0, \\
(1 - \epsilon)A, & \text{otherwise}.
\end{cases}
\end{aligned}
\end{equation*}
with clipping threshold $\epsilon=0.2$ and $  \theta_{\text{old}}  $ representing the parameters of the policy network obtained from the previous update. The value network is trained using a clipped mean squared error loss:
\begin{equation}
  \mathcal{L}_V(\psi)
  =  \max\!\left(
        \left(V^{\pi,\psi}(\mathbf{s}_t) - \hat{G}_t\right)^2,\;
        \left(V^{\pi,\psi}_{\mathrm{clip}}(\mathbf{s}_t) - \hat{G}_t\right)^2
      \right)
    ,
  \label{eq:value_loss}
\end{equation}
where the clipped value estimate is given by
\begin{equation*}
  V^{\pi,\psi}_{\mathrm{clip}}(\mathbf{s}_t)
  = \max\left(\min \left(V^{\pi,\psi}(\mathbf{s}_t), V^{\pi,\psi_{\mathrm{old}}}(\mathbf{s}_t) + \epsilon_V\right),
      V^{\pi,\psi_{\mathrm{old}}}(\mathbf{s}_t) - \epsilon_V\right),
\end{equation*}
and $\epsilon_V=0.2$ is the value clipping threshold and $\psi_{\mathrm{old}}$ represents the parameters of the value network obtained from the previous update. To encourage sufficient exploration, an analytical entropy
regularization term for the Gaussian policy is included:
\begin{equation}
  \mathcal{L}_{\mathrm{ent}}(\theta)
  = -\mathcal{H}[\pi_\theta(\cdot\mid\mathbf{s}_t)]
  = -\tfrac{1}{2}\bigl(1 + \log(2\pi) + 2\log\sigma_\theta\bigr).
  \label{eq:entropy_loss}
\end{equation}
Overall, we minimize the following optimization objective for the policy and value networks:
\begin{equation}
  \mathcal{L}_{\mathrm{total}}(\theta,\psi)
  = \E\left[\mathcal{L}_{\mathrm{PPO}}(\theta)
  + c_0\,\mathcal{L}_{\mathrm{ent}}(\theta)
  + c_1\,\mathcal{L}_V(\psi)\right].
  \label{eq:total_loss}
\end{equation}
\subsection{Quantile Network}
In order to calculate the objective \eqref{prob:SRP}, we approximate the $\alpha$-quantile $F^{-1+,\pi}_{X|Y= y}(\alpha)$ and $F^{-1,\pi}_{X|Y= y}(\alpha)$ by a quantile network $\omega^\phi$, which is also represented by a multilayer feedforward network. The quantile network takes the initial state $\mathbf{s}_0=(S_0,\tau_0,Y_0,W_0^\pi)$ and a confidence level $\alpha$ as inputs. According to \cite{peng2024risk}, we can set the loss function of the quantile network $\omega^\phi(\boldsymbol{s}_0,\alpha)$ as
\begin{equation*}
\mathcal{L}_{\mathrm{QR}}(\phi) = 
\begin{cases} 
\alpha \left| \omega^\phi(\boldsymbol{s}_0,\alpha) - \left(W_0-W_T^\pi \right) \right|, & W_0-W_T^\pi \geq \omega^\phi(\boldsymbol{s}_0,\alpha), \\
(1 - \alpha) \left| \omega^\phi(\boldsymbol{s}_0,\alpha) -\left(W_0-W_T^\pi \right) \right|, & W_0-W_T^\pi  \leq \omega^\phi(\boldsymbol{s}_0,\alpha).
\end{cases}
\end{equation*}
To enforce the structural constraint that higher confidence levels
correspond to larger loss quantiles, we introduce a monotonicity
regularization term:
\begin{equation}
  \mathcal{L}_{\mathrm{mono}}(\phi)
  = l\bigl(
      \omega_\phi(\mathbf{s}_0,\alpha)
      - \omega_\phi(\mathbf{s}_0,\alpha+\delta)
    \bigr),
  \label{eq:mono_loss}
\end{equation}
where $\delta=0.05$ is the finite-difference step size and $l(x)=\max (x,0)$ is the ReLU term.
For $\alpha_2=\alpha+\delta>\alpha$, the ReLU term produces a positive
penalty whenever the network output violates
$\omega^\phi(\cdot,\alpha_2)\geq\omega^\phi(\cdot,\alpha)$. Hence, the overall training objective for the quantile network is
\begin{equation}
  \mathcal{L}_{\mathrm{quantile}}(\phi)
  = \E\left[c_2\,\mathcal{L}_{\mathrm{QR}}(\phi)
  + c_3\,\mathcal{L}_{\mathrm{mono}}(\phi)\right].
  \label{eq:var_total_loss}
\end{equation}

With the quantile network, the objective can be approximated numerically using the midpoint Riemann sum. Suppose that the distortion function $h$ has finitely many right jumps at
$\mathcal J_h=\{\xi_1,\ldots,\xi_J\}$.
Partitioning
$[0,1]$ into $n$ equal subintervals $\{[t_{i-1},t_i]\}_{i=1}^n$ with
midpoints $t_i^{\mathrm{mid}}=(t_{i-1}+t_i)/2$, the estimate for the objective  is
\begin{equation}
  \hat{\mathcal{C}}_h
  = -\sum_{i=1}^n
      \omega^\phi\left(\mathbf{s}_0,\;1-t_i^{\mathrm{mid}}\right)\,
      \left(h^c(t_i) - h^c(t_{i-1}) \right)-
\sum_{j=1}^{J}
\Delta h(\xi_j)
\omega^\phi
\left(\mathbf s_0,1-\xi_j
\right),
  \label{eq:midpoint}
\end{equation}
where $h^{c}$ denotes the continuous component of $h$. For a  left-continuous distortion function, we have $\Delta h(\xi_j)=h(\xi_j+)-h(\xi_j)$.
For a right-continuous distortion function, we have $\Delta h(\xi_j)=h(\xi_j)-h(\xi_j-)$.

%To further reduce the low-order truncation error of the numerical integration \eqref{eq:midpoint}, we construct a resolution sequence $n_k = n_0\cdot 2^k$ ($k=0,1,\ldots,L_R$) from an initial step count $n_0$, compute the estimates $\hat{\mathcal{C}}^{(n_k)}$ at each resolution level, and then apply the Richardson extrapolation recursion:\begin{equation}I_k^{(j+1)}= 2\,I_{k+1}^{(j)} - I_k^{(j)},\quad k = 0,1,\ldots,L_R-j-1,\label{eq:richardson}\end{equation}with initial values $I_k^{(0)} = \hat{\mathcal{C}}^{(n_k)}$. After $L$ levels of extrapolation, the high-accuracy estimate $I_0^{(L_R)}$ is obtained. If the distortion function is not continuous, we directly estimate \eqref{eq:midpoint}.

\begin{comment}
{\color{red}\begin{remark}
    We may consider this objective:
\begin{equation*}
\begin{aligned}
\inf_{\pi \in \Pi}\left( \E \left[  \sum_{t = 0}^{T-1} c(S_t, A_t,S_{t+1}) \bigg| Y_t=y \right]+\int \left[\sum_{t = 0}^{T-1} c(S_t,A_t,S_{t+1}) \right] \d h \circ \p_{Y_t = y}\right)% \\
%&= \min_{d \in \D} \min_{y \in \R} \E \left[ y + \frac{1}{1-\alpha} \left( \frac{1}{T} \sum_{t = 0}^{T-1} c(X_t, A_t) - y \right)^+ \right]\\
%&= \min_{y \in \R} \left\{ y + \frac{1}{1-\alpha} \min_{d \in \D} \E \left[ \left( \frac{1}{T} \sum_{t = 0}^{T-1} c(X_t, A_t) - y \right)^+ \right] \right\}. 
\end{aligned}
\end{equation*}
\end{remark}}
\end{comment}

The details of the RL algorithm are provided in Algorithm \ref{alg:RL}.

\begin{breakablealgorithm}
\caption{Risk-Sensitive Reinforcement Learning}
\label{alg:RL}
\begin{algorithmic}[1]
\STATE \textbf{Initialize} policy network $\pi^\theta$, value network $V^{\pi, \psi}$, and quantile network
       $\omega^\phi(s_0,\alpha)$ with parameters $\theta_0$, $\psi_0$, $\phi_0$, experience replay buffer $\mathcal{R}\leftarrow\emptyset$
       with capacity $C_{\mathcal{R}}$, initial gain $W_0$, distortion function $h$, discount factor $\gamma$, GAE decay coefficient $\lambda_{\mathrm{GAE}}$
\FOR{$k = 0,\,1,\,\ldots,\,K-1$}
    \STATE Initialize buffers $\mathcal{D}_k\leftarrow\emptyset$ and $\mathcal{H}_k\leftarrow\emptyset$.
    \STATE Randomly sample $N$ trajectories from the training set. 
    \STATE Draw $N$ confidence levels $\alpha^{(n)}$ with $  n=1,\ldots,N.$
    \FOR{$t = 0,\,1,\,\ldots,\,T-1$}
        \STATE Observe current states $\{\mathbf{s}_t^{(n)}\}_{n=1}^{N}$ from the vectorized environment.
        \STATE Sample actions $a_t^{(n)}\!\sim\!\pi^{\theta_k}(\cdot\mid \mathbf{s}_t^{(n)})$;
               record $\log\pi^{\theta_k}(a_t^{(n)}\mid \mathbf{s}_t^{(n)})$ and $V^{\pi, \psi}(\mathbf{s}_t^{(n)})$.
        \STATE Execute $a_t^{(n)}$; receive next states $\mathbf{s}_{t+1}^{(n)}$,
               intermediate rewards $R_{t+1}^{(n)}$, and gain $W_{t+1}^{(n)}$.
    \ENDFOR
    \STATE Record initial states $\mathbf{s}_0^{(n)}$ and normalized terminal gain
           $\tilde{W}_T^{(n)}=W_T^{(n)}/W_0$ for $n=1,\ldots,N$.
    \FOR{$n = 1,\,\ldots,\,N$}
        \STATE Approximate the objective value by \eqref{eq:midpoint}: 
               \begin{equation*}
                   \hat{\mathcal{C}}_h=-\sum_{i=1}^n
      \omega^\phi\left(\mathbf{s}_0^{(n)},\;1-t_i^{\mathrm{mid}}\right)\,
      \left(h^c(t_i) - h^c(t_{i-1}) \right)-
\sum_{j=1}^{J}
\Delta h(\xi_j)
\omega^\phi
\left(\mathbf s_0^{(n)},1-\xi_j
\right).
               \end{equation*}
        \STATE Set the terminal reward:
               $R_T^{(n)}\leftarrow
               \hat{\mathcal{C}}_h$.
    \ENDFOR
    \FOR{$t=T{-}2,\,\ldots,\,0$}
    \STATE Compute GAE advantages and target returns:
           \begin{equation*}
           \begin{aligned}
               \hat{A}_{T-1}^{(n)} &= R_T^{(n)}-V^{\pi, \psi}(\mathbf{s}_{T-1}^{(n)}),
               \qquad G_{T-1}^{(n)} = R_T^{(n)};\\
               \delta_t^{(n)} &= R_{t+1}^{(n)}+\gamma\,V^{\pi, \psi}(\mathbf{s}_{t+1}^{(n)})-V^{\pi, \psi}(\mathbf{s}_t^{(n)}),\\
               \hat{A}_t^{(n)} &= \delta_t^{(n)}+\gamma\lambda_{\mathrm{GAE}}\,\hat{A}_{t+1}^{(n)},
               \qquad G_t^{(n)} = \hat{A}_t^{(n)}+V^{\pi, \psi}(\mathbf{s}_t^{(n)}).
               \end{aligned}
           \end{equation*}
    \ENDFOR
    \STATE Add tuples
           $\bigl\{\bigl(\mathbf{s}_t^{(n)},\,a_t^{(n)},\,\pi^{\theta_k}(a_t^{(n)}\mid s_t^{(n)}),\,      G_t^{(n)},\,\hat{A}_t^{(n)}\bigr)\bigr\}_{t=0}^{T-1}$
           to $\mathcal{D}_k$, for each $n$.
    \STATE Append $\bigl(\mathbf{s}_0^{(n)},\,\tilde{W}_T^{(n)},\,\alpha^{(n)}\bigr)$ to $\mathcal{H}_k$
           and to replay buffer $\mathcal{R}$, for each $n$.
    \STATE Set $\theta\leftarrow\theta_k$,\; $\psi\leftarrow\psi_k$,\; $\phi\leftarrow\phi_k$.
    \FOR{$m = 0,\,1,\,\ldots,\,M-1$} 
        \STATE Shuffle $\mathcal{D}_k$ and partition into mini-batches of size $n_{\mathcal{B}}$.
        \FOR{each mini-batch
             $\mathcal{B}=\{(\mathbf{s}_j,a_j,\pi_j^{\theta_\mathrm{old}},G_j,\hat{A}_j)\}_{j=1}^{n_{\mathcal{B}}}$}
            \STATE Normalize advantages $\hat{A}_j$.
            \STATE Update policy parameter:
                   \begin{equation*}
                       \theta\leftarrow\theta
                       -\eta_k\!\left[\frac{1}{n_{\mathcal{B}}}\sum_{j=1}^{n_{\mathcal{B}}}\left(
                       \nabla_{\theta}\mathcal{L}_{\mathrm{PPO}}(\theta)
                       +c_0\nabla_{\theta}\mathcal{L}_{\mathrm{ent}}(\theta)\right)\right]
                       ;
                   \end{equation*}
            \STATE Update value parameter:
                   \begin{equation*}
                       \psi\leftarrow\psi
                       -c_1\eta_k\!\left[\frac{1}{n_{\mathcal{B}}}\sum_{j=1}^{n_{\mathcal{B}}}
                       \nabla_{\psi}\mathcal{L}_V(\psi)\right]
                       ;
                   \end{equation*}
        \ENDFOR
    \ENDFOR
        \FOR{$p = 1,\,\ldots,\,P$}
            \STATE Sample mini-batch
                   $\mathcal{H}'=\bigl\{\bigl(\mathbf{s}_0^{(j)},1-\tilde{W}_T^{(j)},\alpha^{(j)}\bigr)
                   \bigr\}_{j=1}^{n_\phi}$
                   uniformly from $\mathcal{R}$.
            \STATE Update quantile network parameter:
                   \begin{equation*}
                       \phi\leftarrow\phi
                       -\eta_k^{\phi}\,\nabla_{\phi}\!\left[\frac{1}{n^\phi}
                       \sum_{j=1}^{n^\phi}\left( c_2L_{\mathrm{QR}}(\phi)
                       +c_3 L_{\mathrm{mono}}(\phi)\right)\right]
                       ;
                   \end{equation*}
        \ENDFOR
    \STATE $\theta_{k+1}\leftarrow\theta$;\quad
           $\psi_{k+1}\leftarrow\psi$;\quad
           $\phi_{k+1}\leftarrow\phi$.
\ENDFOR
\end{algorithmic}
\end{breakablealgorithm}

\section{Numerical Experiment}\label{sec:Numerical Experiment}
In our numerical experiments, we conduct the relevant studies in a financial investment setting. We evaluate the IRL and RL components of our framework separately in order to provide a detailed and transparent assessment of the effectiveness of each module.

\subsection{Estimation of the Distortion Function}
We set 
\begin{equation*}
\begin{aligned}
    \hat{\mathcal{H}}=&\bigg\{\mathbf{1}_{\{t>1-\alpha_1\}}, \frac{t}{1-\alpha_1}\wedge 1,   \mathbf{1}_{\{\alpha_1\ge t\ge 1-\alpha_1\}}, \frac{t}{1-\alpha_1} \wedge 1 + \frac{\alpha_1 - t}{1-\alpha_1} \wedge 0, \mathbf{1}_{\{0<t<1\}}, \\&\quad \omega_1 \left( \frac{t}{1-\alpha_2} \wedge 1 \right) + \omega_2 \left( \frac{t}{1-\beta_1} \wedge 1 \right) + \omega_3 \mathbf{1}_{\{t > 1 - \alpha_2\}}, \left( \frac{\max(t-1+\beta_2, 0)}{\beta_2 - \alpha_3} \right) \wedge 1\bigg\},
\end{aligned}
\end{equation*}
where $\alpha_1\in \{0.7, 0.8, 0.9\}$, $\alpha_2=0.7$, $\beta_1=0.9$, $(\alpha_3,\beta_2)\in \{(0.7, 0.9), (0.7, 0.8), (0.8, 0.9)\}$, and $(\omega_1, \omega_2, \omega_3) = \bigl\{(0.4, 0.4, 0.2)\allowbreak, (0.3, 0.5, 0.2)\allowbreak, (0.5, 0.3, 0.2)\bigr\}$.
 Each distortion function in $\hat{ \mathcal{H}}$ corresponds to a specific riskmetric: Value-at-Risk ($\text{VaR}_\alpha$), Expected Shortfall ($\text{ES}_\alpha$, $\text{CVaR}_\alpha$),  Inter-Quantile Range ($\text{IQR}_\alpha$), Inter-ES Range ($\text{IER}_\alpha$), Range, Range Value-at-Risk ($\text{RVaR}_{\alpha,\beta}$) and GlueVaR (see \cite{WWW20}). We select 100 stocks from the S$\&$P 500 constituents to form the basis of our choice set $G$, utilizing their historical data to compute the risk thresholds at the 20 quantile points. We test our estimation algorithm under the two decision-making models: (1) The agent’s probabilities of choosing actions are given by \eqref{eq:probability-agent-a1} and \eqref{eq:probability-agent-a2}; (2) The agent selects the choice consistent with his or her risk preference at a fixed probability $  \hat{p} > 0.5  $ and the other choice is selected with probability $  1 - \hat{p}  $.  We set the learning rate $\kappa=0.2$ and exploration rate $\epsilon=0.01$. Figure \ref{fig:random-or-adaptive} shows that our algorithm accurately identifies the agent’s risk preference under both adaptive questioning strategy ($\epsilon=0.01$) and random questioning strategy ($\epsilon=1$). When the adaptive questioning method is used, the algorithm converges faster and exhibits narrower uncertainty bands.
\begin{figure}[htbp]   
    \centering
    \includegraphics[width=0.5\textwidth]{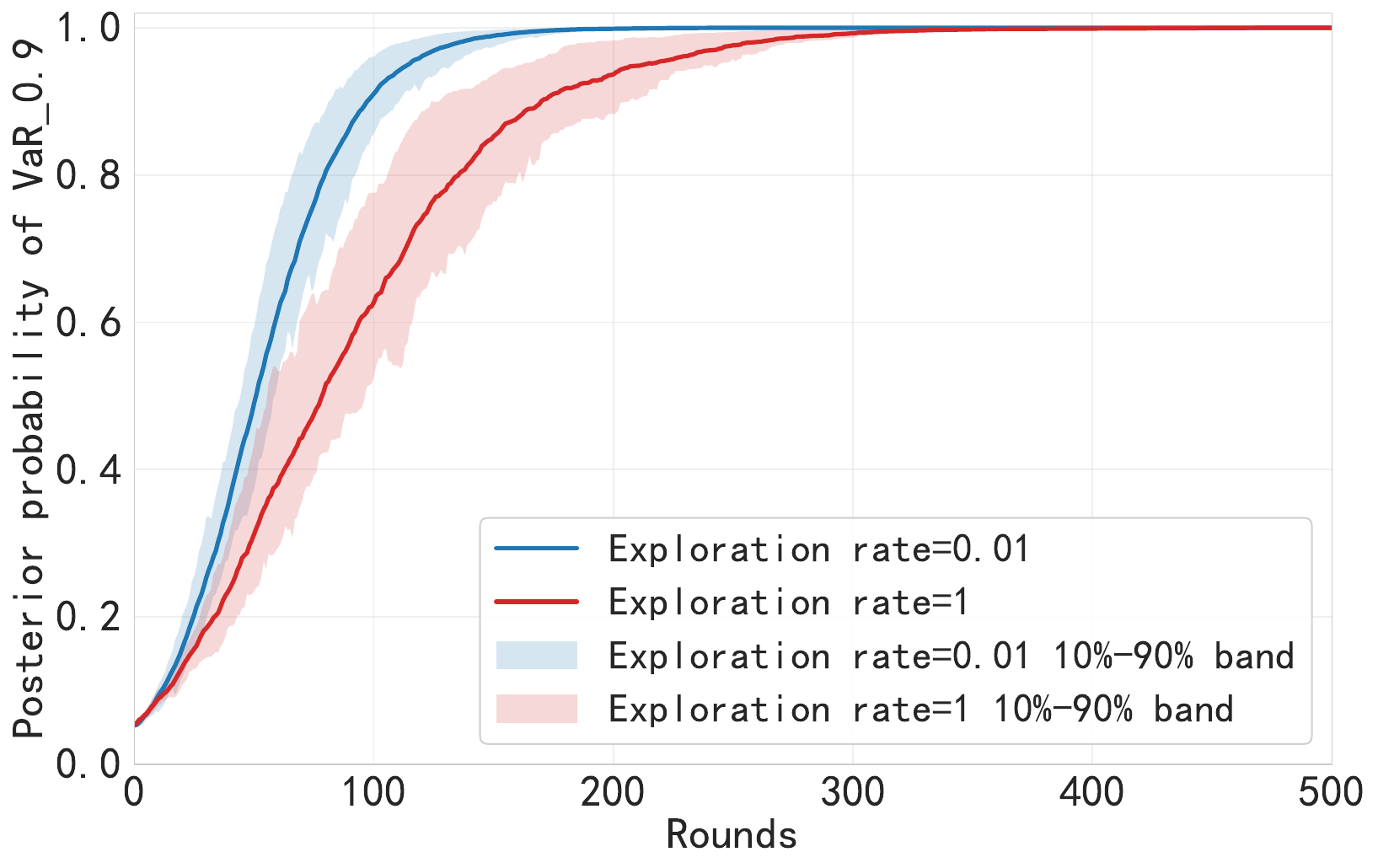} 
    \caption{Posterior probabilities of the agent’s true risk preference (VaR$_{0.9}$) at each round of the IRL algorithm when adopting an adaptive questioning method with exploration ($\epsilon=0.01$) or selecting the two distortion functions to be distinguished completely at random ($\epsilon=1$). The probability that the agent chooses the option consistent with his or her risk preference is fixed at 0.9.}
    \label{fig:random-or-adaptive}
\end{figure}

\subsubsection{Specific Decision-Making Model with Different $\beta$}
Figures \ref{fig:VaR_0.8}-\ref{fig:RVaR_0.7_0.8} show the posterior probabilities of distortion functions in $\hat{\mathcal{H}}$ at each round under different $\beta$. We can observe that the posterior probability of the true risk preference always converges to one upon algorithm termination. When $\beta=8$, the algorithm shows fast convergence and exhibits few fluctuations, because the agent behaves in a highly rational manner in the decision-making process. When $\beta=0.7$, the probability of the true risk preference converges relatively slowly, and its convergence curve displays noticeable oscillations. 
%Furthermore, the probabilities associated with other risk preferences do not quickly decay to zero. 
This slow and volatile convergence is due to the high stochasticity in the agent’s decision-making process. Even under this high level of randomness, the algorithm still converges to the true risk preference within a relatively small number of rounds.  Our algorithm not only identifies the agent’s risk preference effectively but also provides
visual evidence of the agent’s degree of decision-making rationality.

\begin{figure}[htbp]
	\small
	\centering
	\subfigure[$\beta=0.7$]{
		\begin{minipage}[t]{0.5\textwidth}
			\centering
			\includegraphics[width=\linewidth]{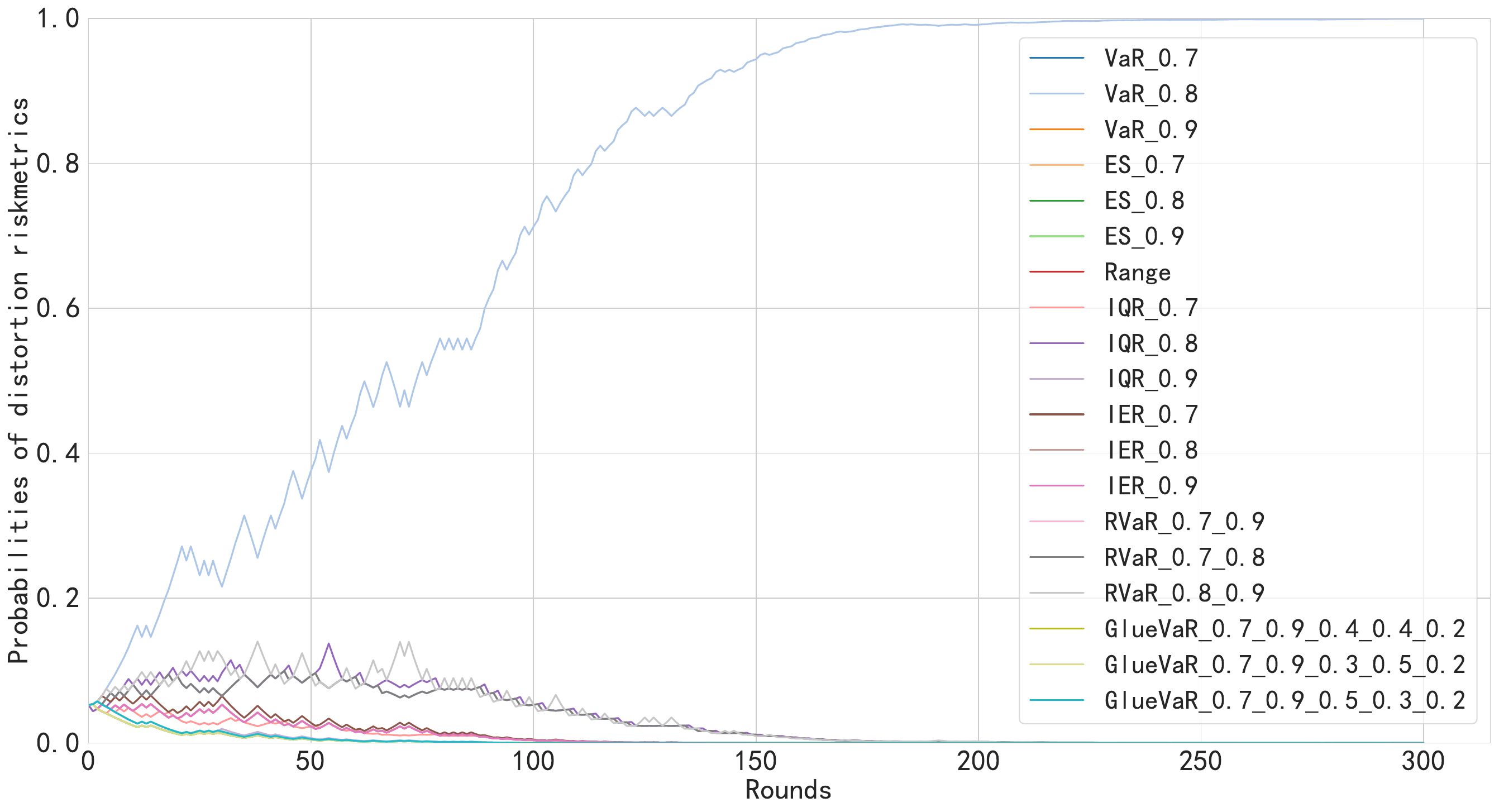}
			%\caption{$\beta=0.6$}
		\end{minipage}%
	}%
	\subfigure[$\beta=8$]{
		\begin{minipage}[t]{0.5\textwidth}
			\centering
			\includegraphics[width=\linewidth]{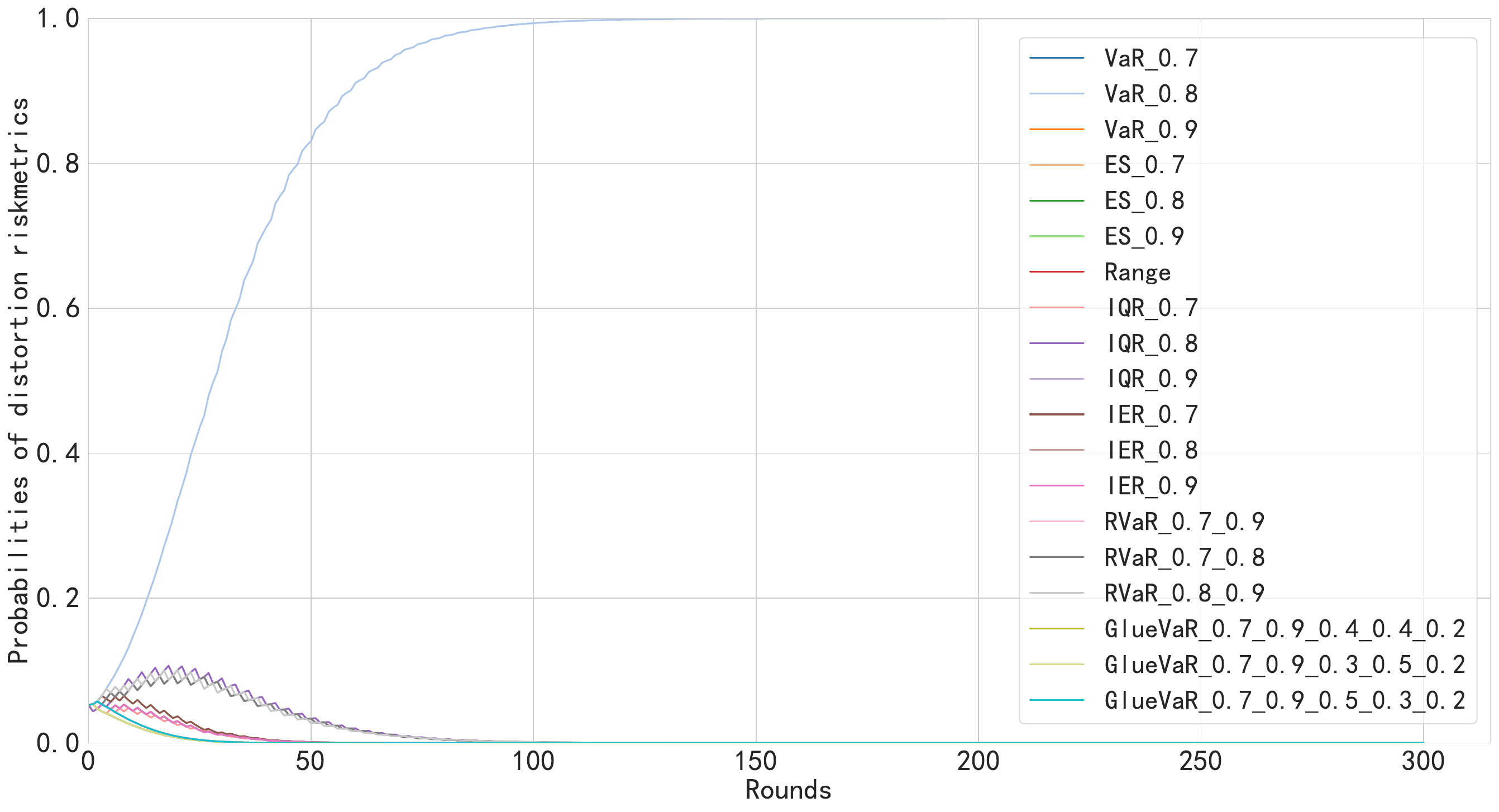}
			%\caption{$\beta=8$}
		\end{minipage}%
	}%   
	\centering
	\caption{Posterior probabilities of the distortion riskmetrics at each round. The agent's true risk preference is $\text{VaR}_{0.8}$ that belongs to $\hat{\H}$. The posterior probability of $\text{VaR}_{0.8}$ converges to 1.}
	\label{fig:VaR_0.8}
\end{figure}

\begin{figure}[htbp]
	\small
	\centering
	\subfigure[$\beta=0.7$]{
		\begin{minipage}[t]{0.5\textwidth}
			\centering
			\includegraphics[width=\linewidth]{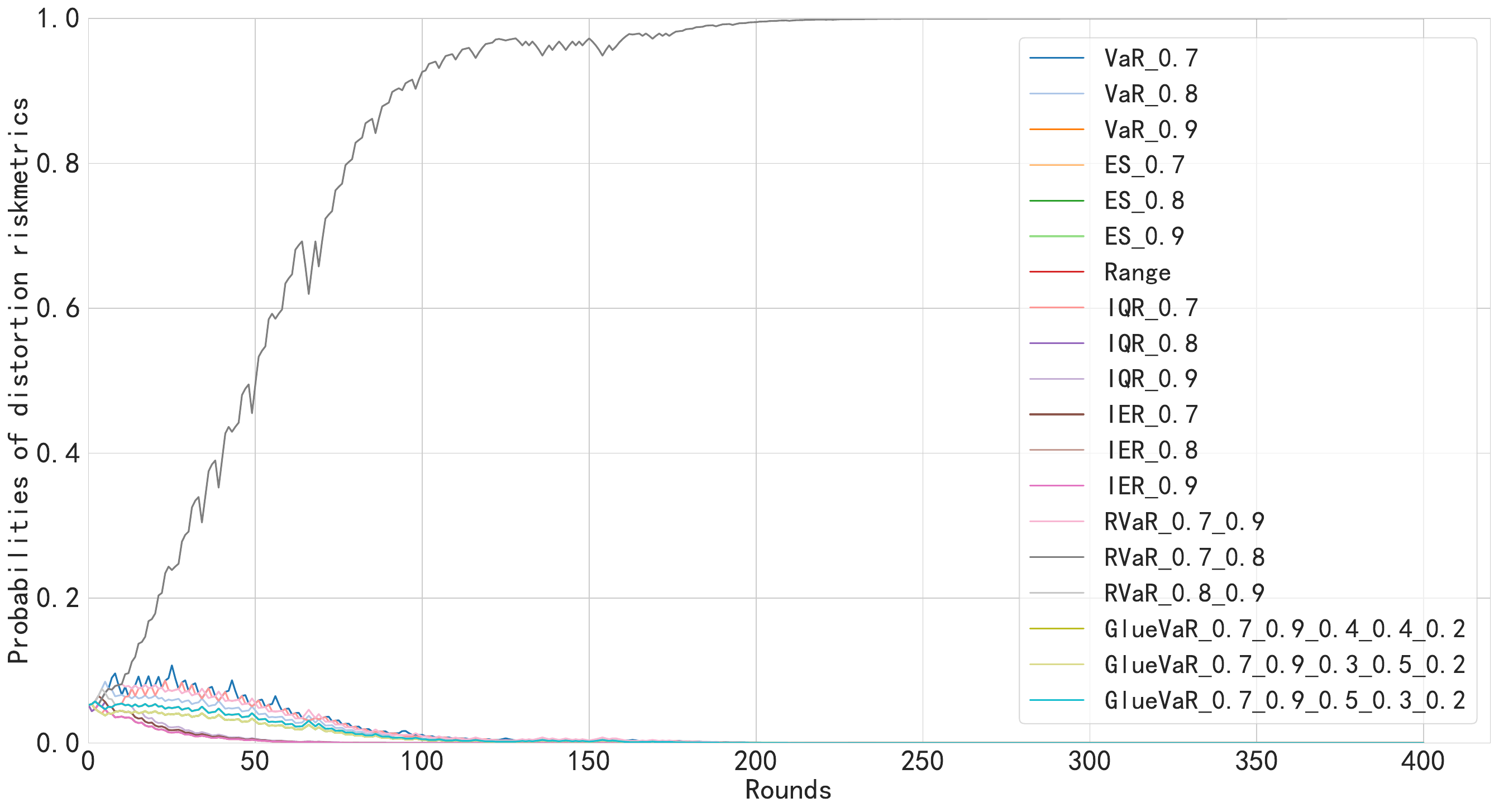}
			%\caption{$\beta=0.6$}
		\end{minipage}%
	}%
	\subfigure[$\beta=8$]{
		\begin{minipage}[t]{0.5\textwidth}
			\centering
			\includegraphics[width=\linewidth]{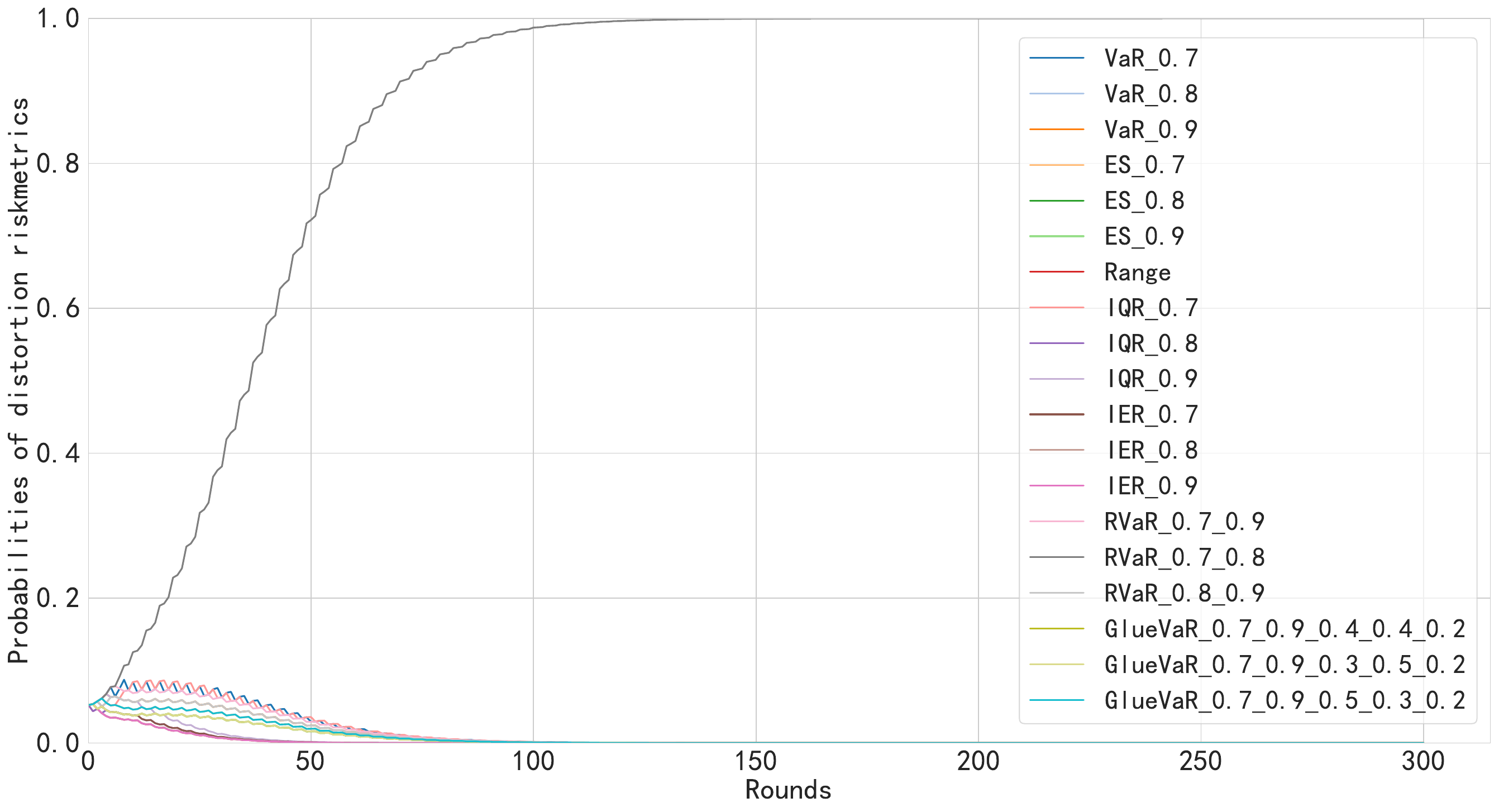}
			%\caption{$\beta=8$}
		\end{minipage}%
	}%   
	\centering
	\caption{Posterior probabilities of the distortion riskmetrics at each round. The agent's true risk preference is $\text{RVaR}_{0.7,0.8}$ that belongs to $\hat{\H}$. The posterior probability of $\text{RVaR}_{0.7,0.8}$ converges to 1.}
	\label{fig:RVaR_0.7_0.8}
\end{figure}

We also investigate scenarios in which the model is misspecified: the agent's true distortion function is not in the defined set $\hat{\mathcal{H}}$. To illustrate this, we consider the following distortion functions that are not in $\hat{\H}$: 
\begin{equation*}
    \left( \frac{\max(t-0.1, 0)}{0.3} \right) \wedge 1, \quad 0.3 \left( \frac{t}{0.2} \wedge 1 \right) + 0.5 \left( \frac{t}{0.1} \wedge 1 \right) + 0.2 \mathbf{1}_{\{t > 0.2\}},
\end{equation*}
which are $\text{RVaR}_{0.6,0.9}$ and $0.3\text{ES}_{0.8}+0.5\text{ES}_{0.9}+0.2\text{VaR}_{0.8}$, respectively.
Figures \ref{fig:RVaR_0.6_0.9}-\ref{fig:GlueVaR_0.8_0.9_0.3_0.5_0.2} present the evolution of the probabilities of the distortion functions in $\hat{\mathcal{H}}$ when the true distortion function is not in the set $\hat{\mathcal{H}}$. We can find that our algorithm will finally converge to a risk preference in $\hat{\mathcal{H}}$ that is close to the true risk preference of the agent for both small and large values of $\beta$  within a
relatively small number of rounds. In the case of the true risk preferences $\text{RVaR}_{0.6,0.9}$ and $0.3\text{ES}_{0.8}+0.5\text{ES}_{0.9}+0.2\text{VaR}_{0.8}$, the algorithm converges to $\text{RVaR}_{0.7,0.9}$ and $0.3\text{ES}_{0.7}+0.5\text{ES}_{0.9}+0.2\text{VaR}_{0.7}$ respectively, which are close to the true risk preferences. Our algorithm can effectively address the challenges arising from both the model misspecification and the uncertainty of the agent in making decisions. 

In all the above examples, we can preliminarily determine the agent’s preferred distortion function within 100 rounds, even when its structure is complex or the agent behaves randomly.

\begin{figure}[htbp]
	\small
	\centering
	\subfigure[$\beta=0.7$]{
		\begin{minipage}[t]{0.5\textwidth}
			\centering
			\includegraphics[width=\linewidth]{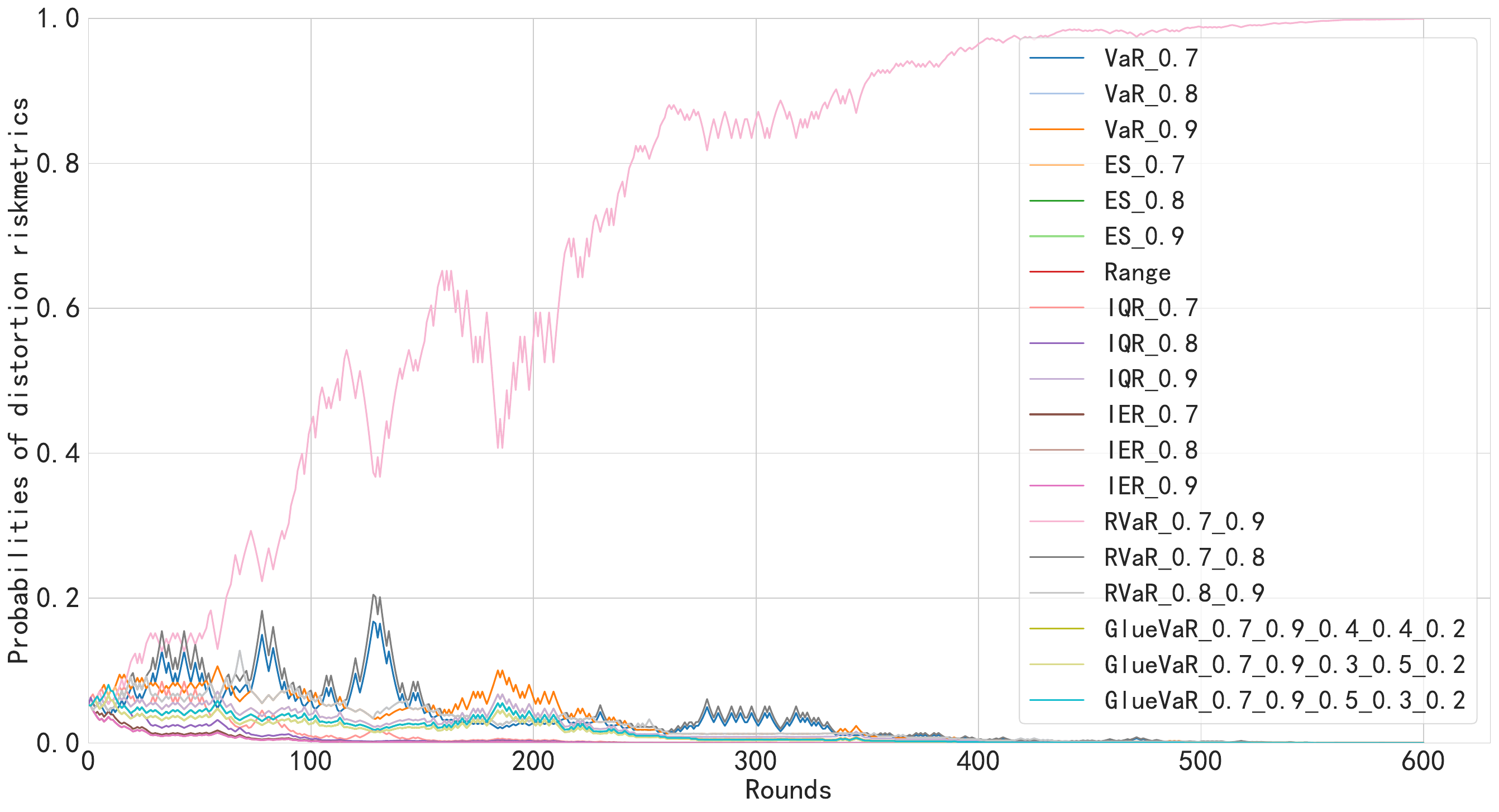}
			%\caption{$\beta=0.6$}
		\end{minipage}%
	}%
	\subfigure[$\beta=8$]{
		\begin{minipage}[t]{0.5\textwidth}
			\centering
			\includegraphics[width=\linewidth]{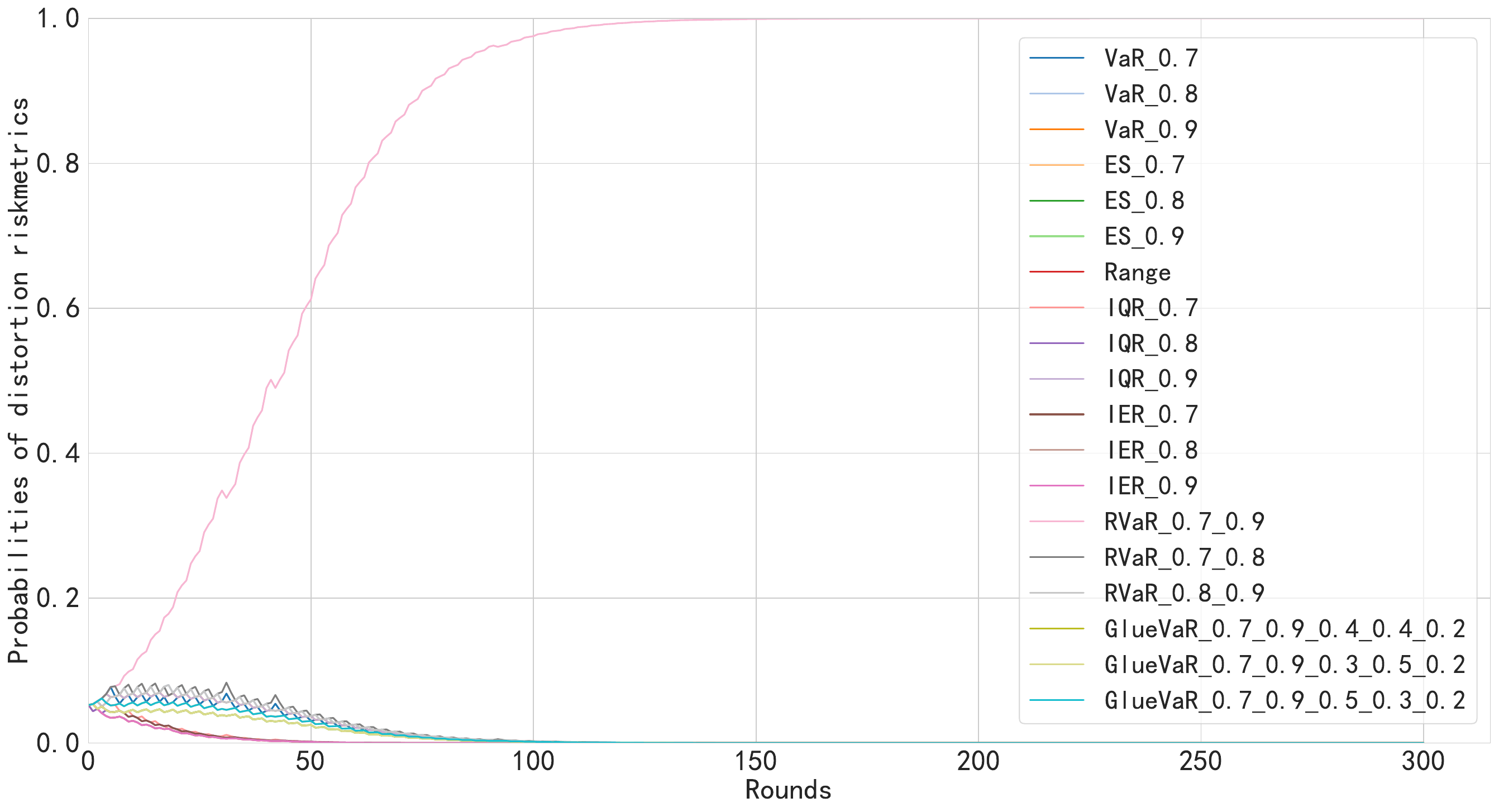}
			%\caption{$\beta=8$}
		\end{minipage}%
	}%   
	\centering
	\caption{Posterior probabilities of the distortion riskmetrics at each round. The agent's true risk preference is $\text{RVaR}_{0.6,0.9}$ that is not included in $\hat{\H}$. The posterior probability of $\text{RVaR}_{0.7,0.9}$ converges to 1.}
	\label{fig:RVaR_0.6_0.9}
\end{figure}

\begin{figure}[htbp]
	\small
	\centering
	\subfigure[$\beta=0.7$]{
		\begin{minipage}[t]{0.5\textwidth}
			\centering
			\includegraphics[width=\linewidth]{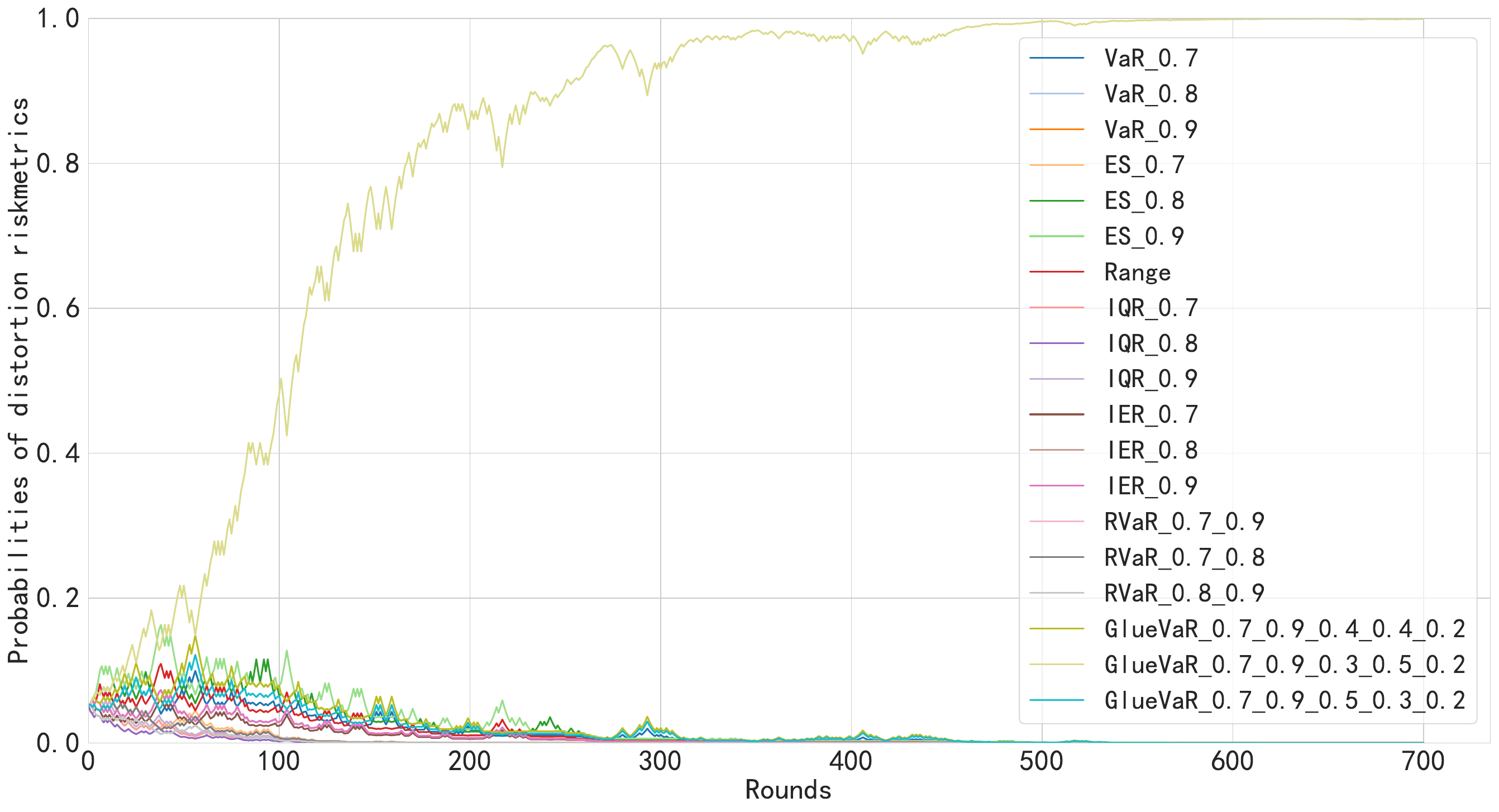}
			%\caption{$\beta=0.6$}
		\end{minipage}%
	}%
	\subfigure[$\beta=8$]{
		\begin{minipage}[t]{0.5\textwidth}
			\centering
			\includegraphics[width=\linewidth]{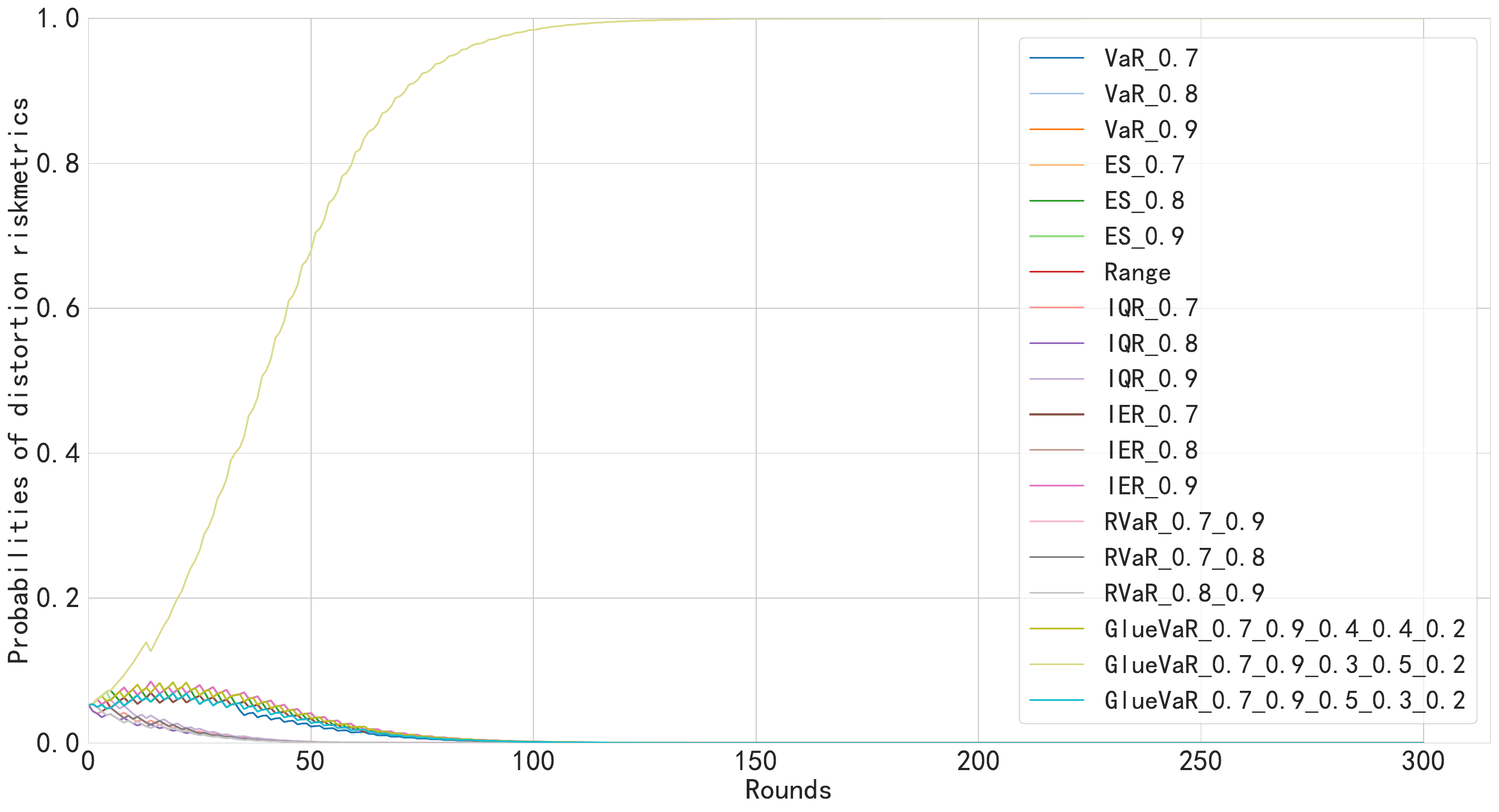}
			%\caption{$\beta=8$}
		\end{minipage}%
	}%   
	\centering
	\caption{Posterior probabilities of the distortion riskmetrics at each round. The agent's true risk preference is $0.3\text{ES}_{0.8}+0.5\text{ES}_{0.9}+0.2\text{VaR}_{0.8}$ that is not included in $\hat{\H}$. The posterior probability of $0.3\text{ES}_{0.7}+0.5\text{ES}_{0.9}+0.2\text{VaR}_{0.7}$ converges to 1.} 
	\label{fig:GlueVaR_0.8_0.9_0.3_0.5_0.2}
\end{figure}

\subsubsection{General Decision-Making Model with Different $\hat{p}$}
Under the general decision-making model \eqref{eq:fixed-probability-agent-a1} and \eqref{eq:fixed-probability-agent-a2}, we examine a setting where the agent, driven by his or her risk preference, makes an optimal choice with a fixed probability $\hat{p}>0.5$ and a suboptimal choice with probability $1-\hat{p}$ at each round.  Figures \ref{fig:ES_0.9-new-user-model}-\ref{fig:IQR_0.9-new-user-model} show the performances of our algorithm under this setting for $\hat{p}=0.6$ and $\hat{p}=0.9$ respectively.  We find that our algorithm still performs well in this setting. When $\hat{p}=0.6$, the agent's choices are already close to being completely random, but our algorithm can still preliminarily identify the agent’s true risk preference after approximately 100 rounds. When $\hat{p}=0.9$, the algorithm converges quickly with fewer oscillations. In the case of model misspecification, we consider two distortion functions that are not in the set $  \hat{\mathcal{H}}  $: $\frac{t}{0.05}\wedge 1$ ($\text{ES}_{0.95}$) and $\mathbf{1}_{\{0.6\ge t\ge 0.4\}}$ ($\text{IQR}_{0.6}$). Figures \ref{fig:ES_0.95-new-user-model}-\ref{fig:IER_0.6-new-user-model} show that the algorithm converges to $\text{ES}_{0.9}$ and $\text{IQR}_{0.7}$ respectively, which are close to the true risk preferences that are not in the set $  \hat{\mathcal{H}}  $. The results of this experiment demonstrate the robustness, flexibility, and stability of our algorithm under different settings. 

\begin{figure}[htbp]
	\small
	\centering
	\subfigure[$\hat{p}=0.6$]{
		\begin{minipage}[t]{0.5\textwidth}
			\centering
			\includegraphics[width=\linewidth]{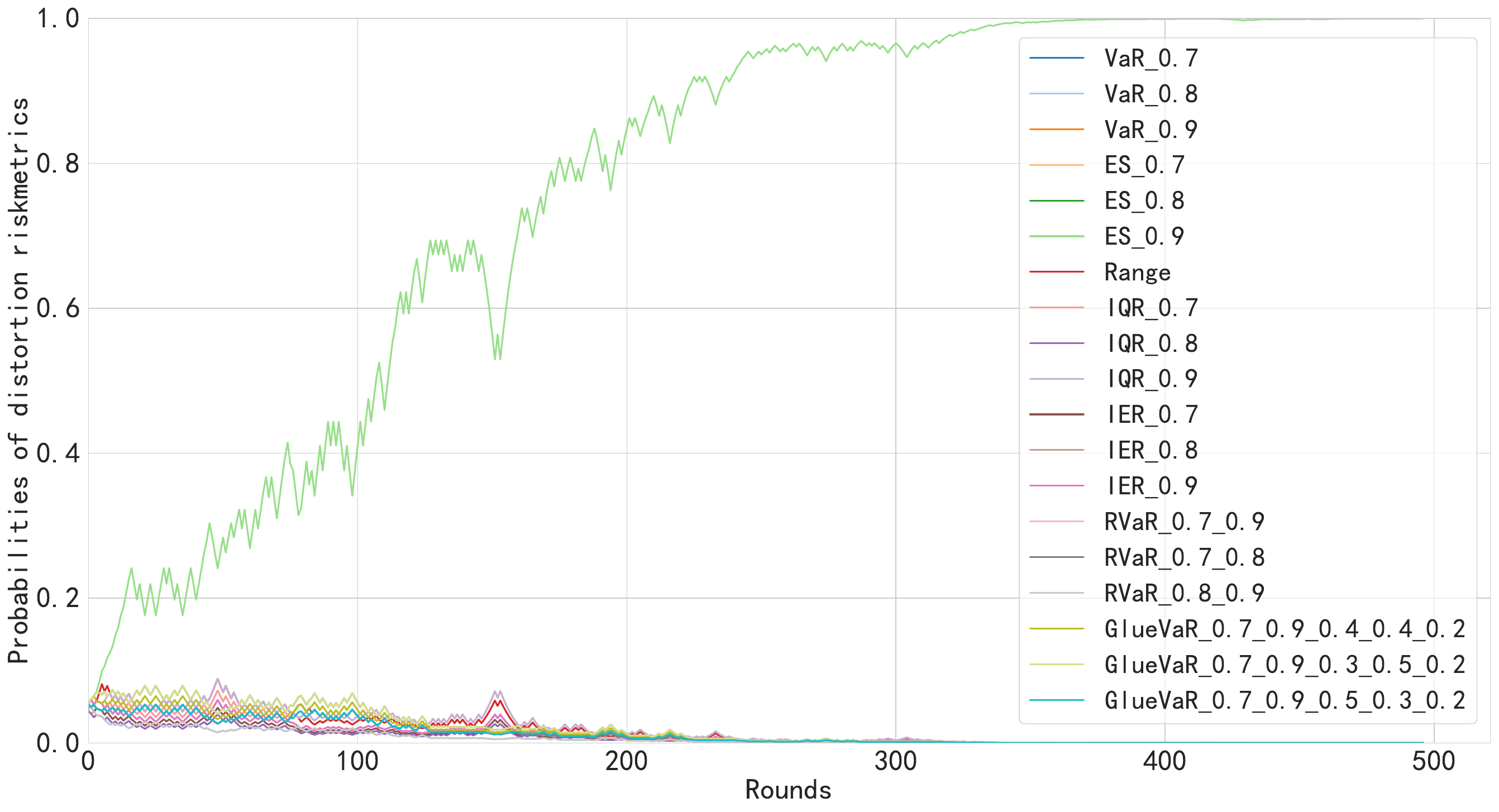}
			%\caption{$\hat{p}=0.75$}
		\end{minipage}%
	}%
	\subfigure[$\hat{p}=0.9$]{
		\begin{minipage}[t]{0.5\textwidth}
			\centering
			\includegraphics[width=\linewidth]{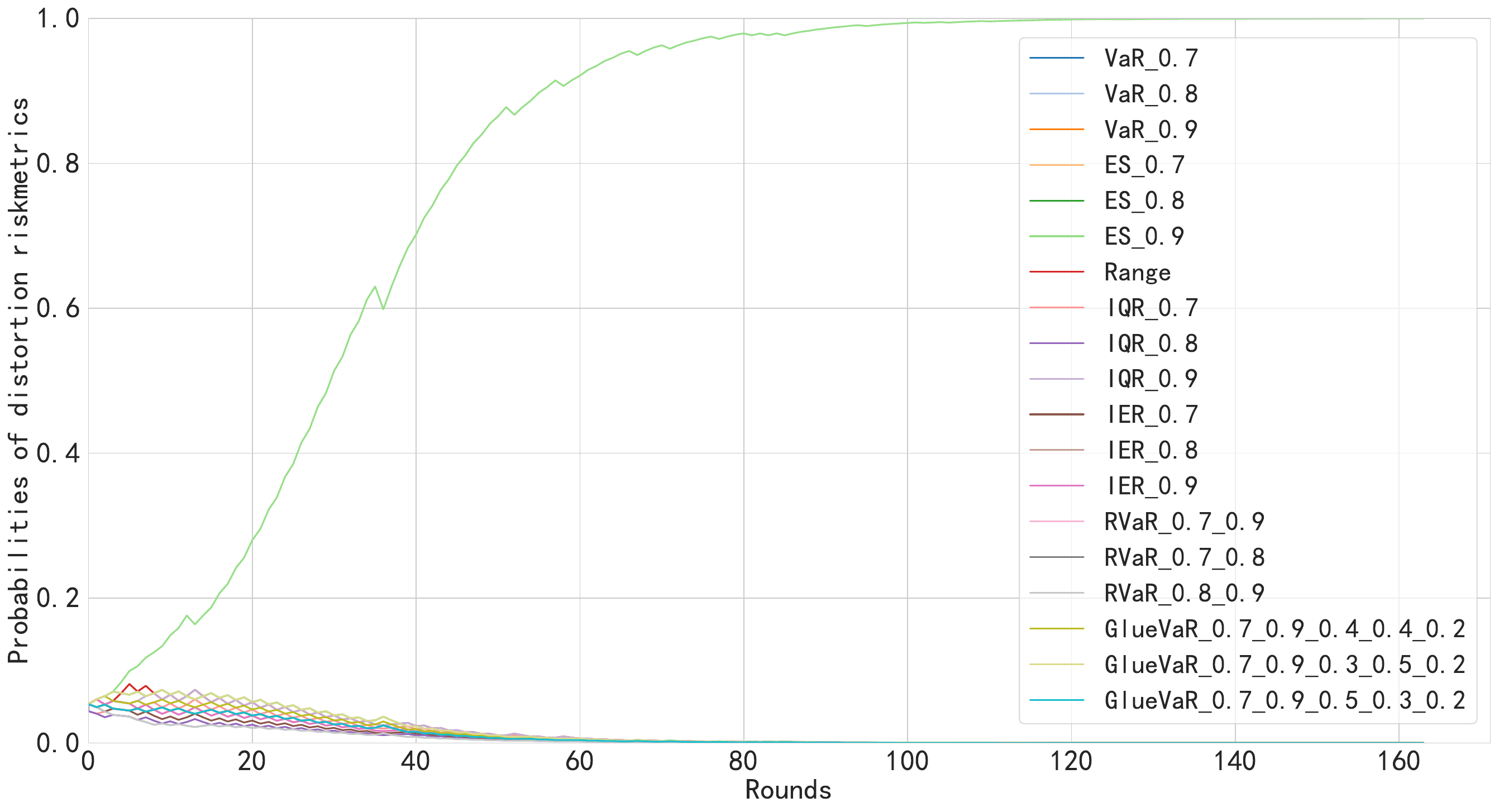}
			%\caption{$\hat{p}=0.95$}
		\end{minipage}%
	}%   
	\centering
	\caption{Posterior probabilities of the distortion riskmetrics at each round. The agent's true risk preference is $\text{ES}_{0.9}$ that belongs to $\hat{\H}$. The posterior probability of $\text{ES}_{0.9}$ converges to 1.}
	\label{fig:ES_0.9-new-user-model}
\end{figure}

\begin{figure}[htbp]
	\small
	\centering
	\subfigure[$\hat{p}=0.6$]{
		\begin{minipage}[t]{0.5\textwidth}
			\centering
			\includegraphics[width=\linewidth]{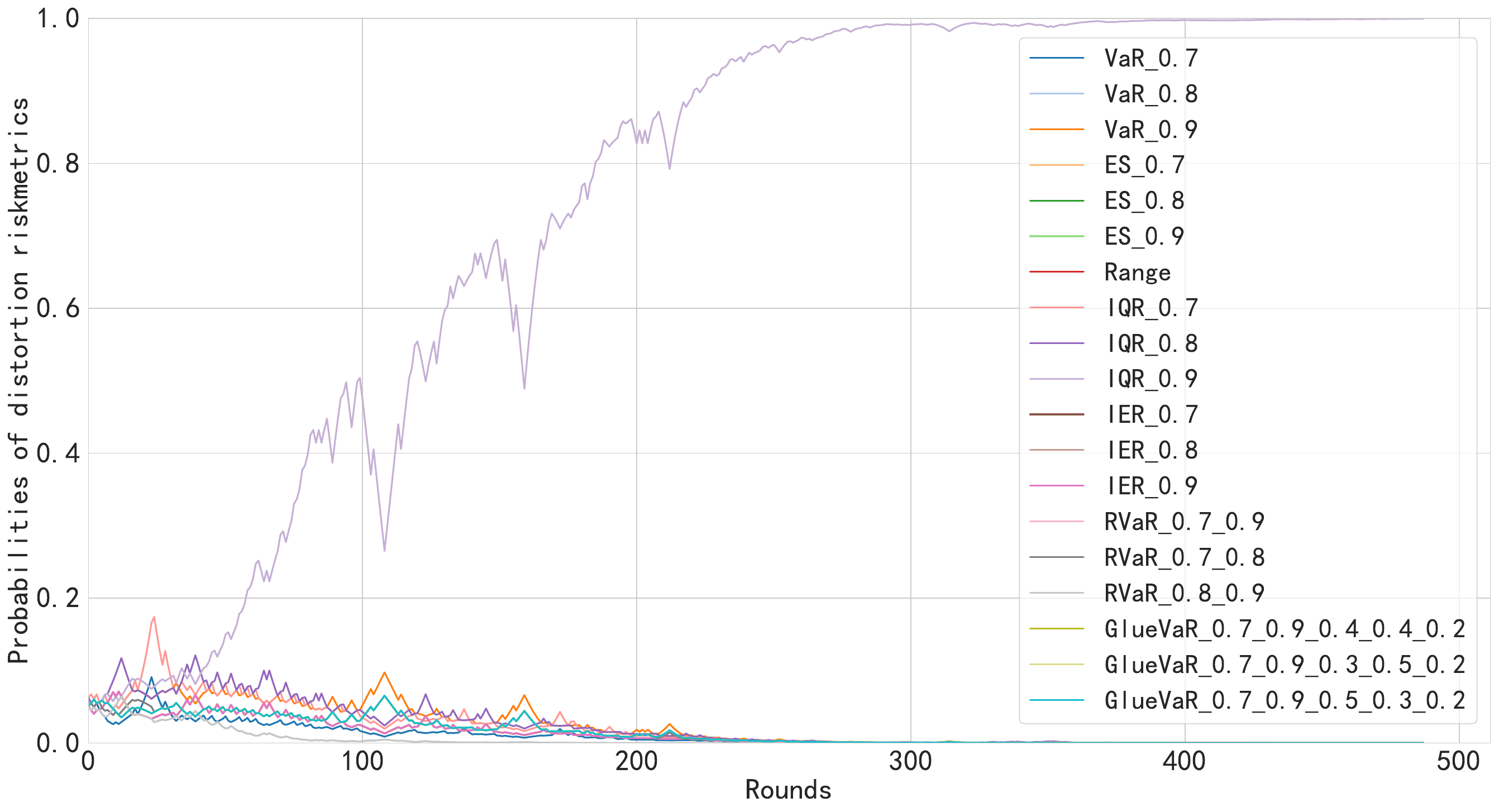}
			%\caption{$\hat{p}=0.75$}
		\end{minipage}%
	}%
	\subfigure[$\hat{p}=0.9$]{
		\begin{minipage}[t]{0.5\textwidth}
			\centering
			\includegraphics[width=\linewidth]{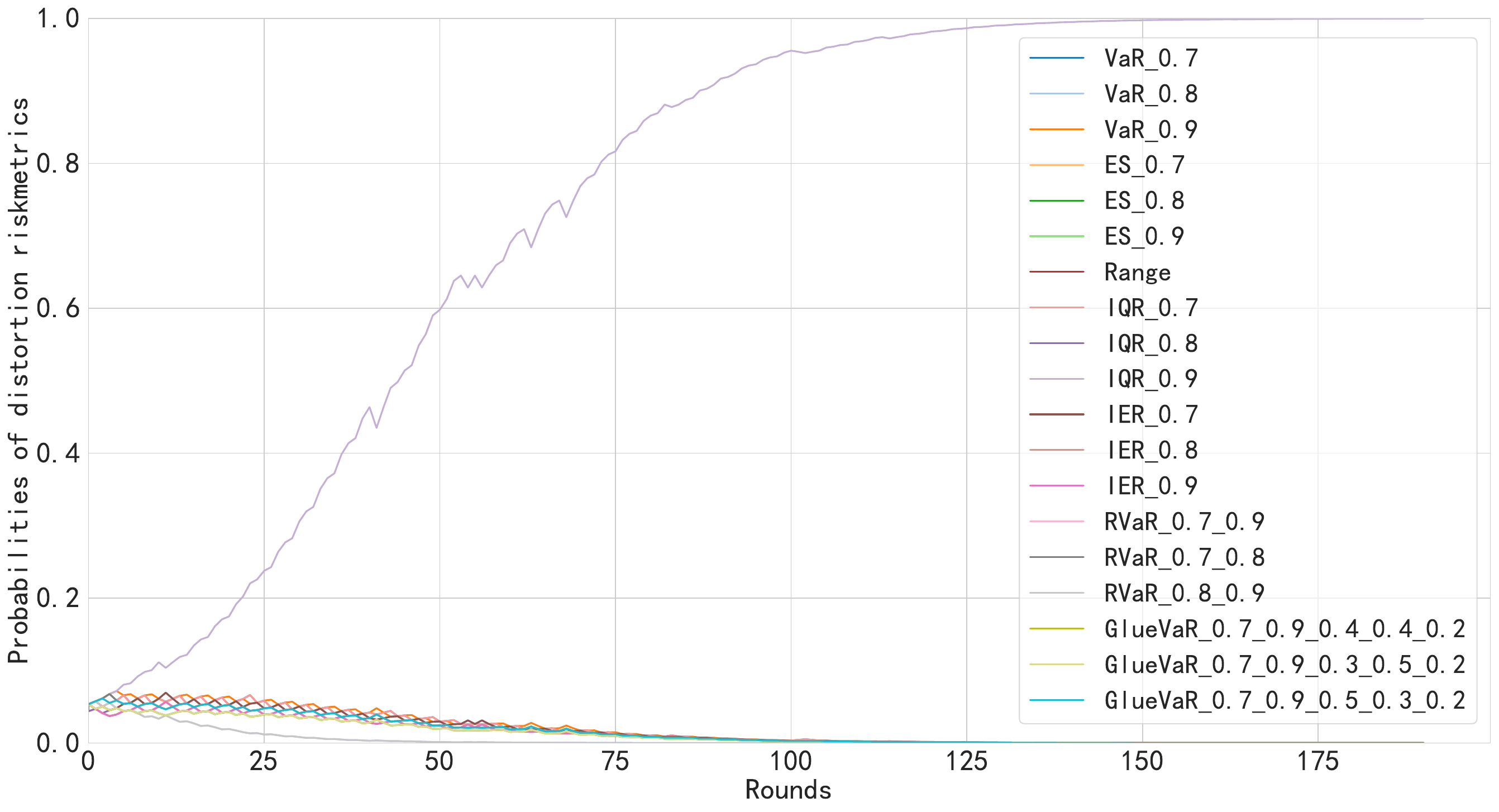}
			%\caption{$\hat{p}=0.95$}
		\end{minipage}%
	}%   
	\centering
	\caption{Posterior probabilities of the distortion riskmetrics at each round. The agent's true risk preference is $\text{IQR}_{0.9}$ that belongs to $\hat{\H}$. The posterior probability of $\text{IQR}_{0.9}$ converges to 1.}
	\label{fig:IQR_0.9-new-user-model}
\end{figure}

\begin{figure}[htbp]
	\small
	\centering
	\subfigure[$\hat{p}=0.6$]{
		\begin{minipage}[t]{0.5\textwidth}
			\centering
			\includegraphics[width=\linewidth]{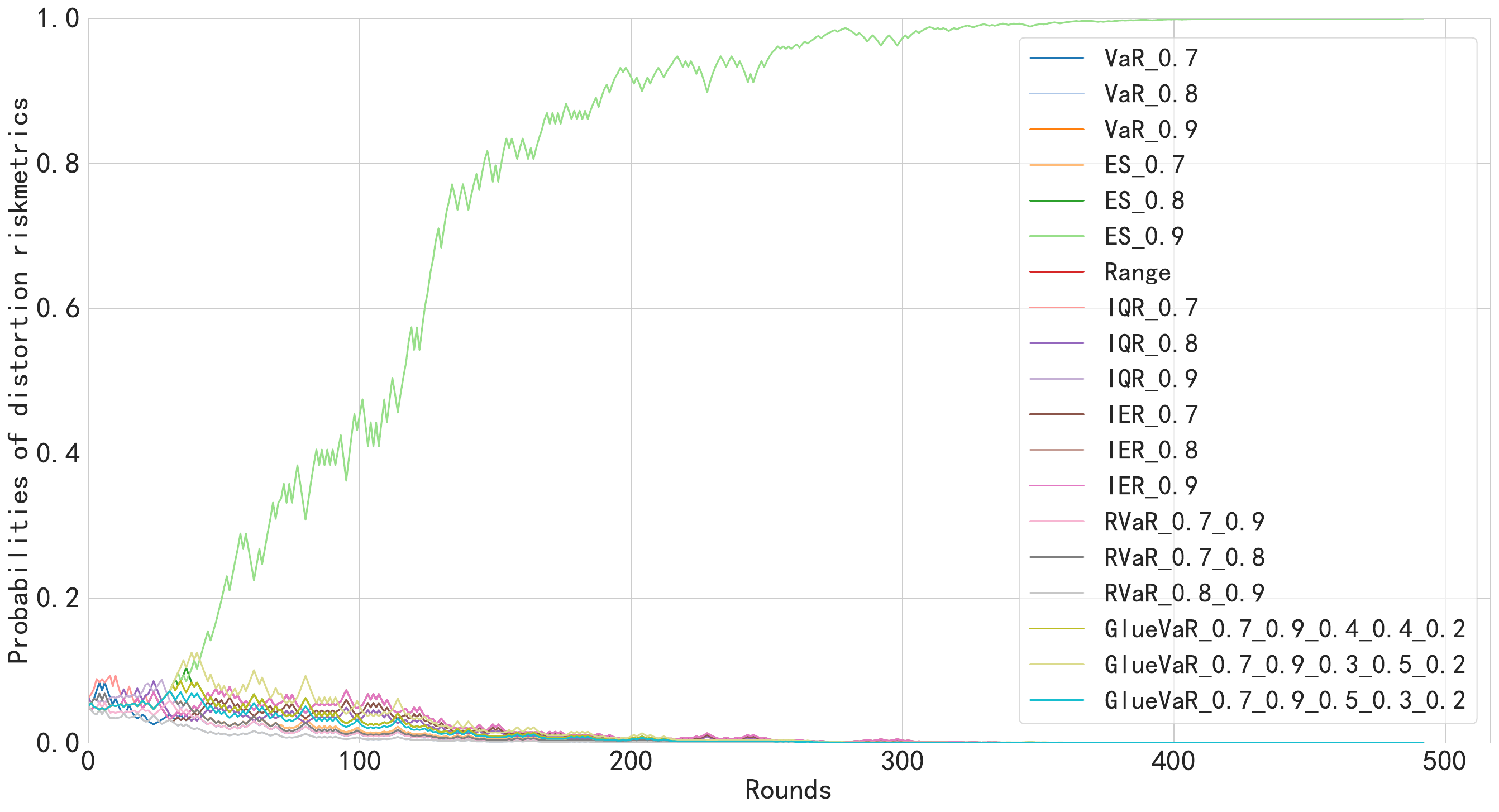}
			%\caption{$\hat{p}=0.75$}
		\end{minipage}%
	}%
	\subfigure[$\hat{p}=0.9$]{
		\begin{minipage}[t]{0.5\textwidth}
			\centering
			\includegraphics[width=\linewidth]{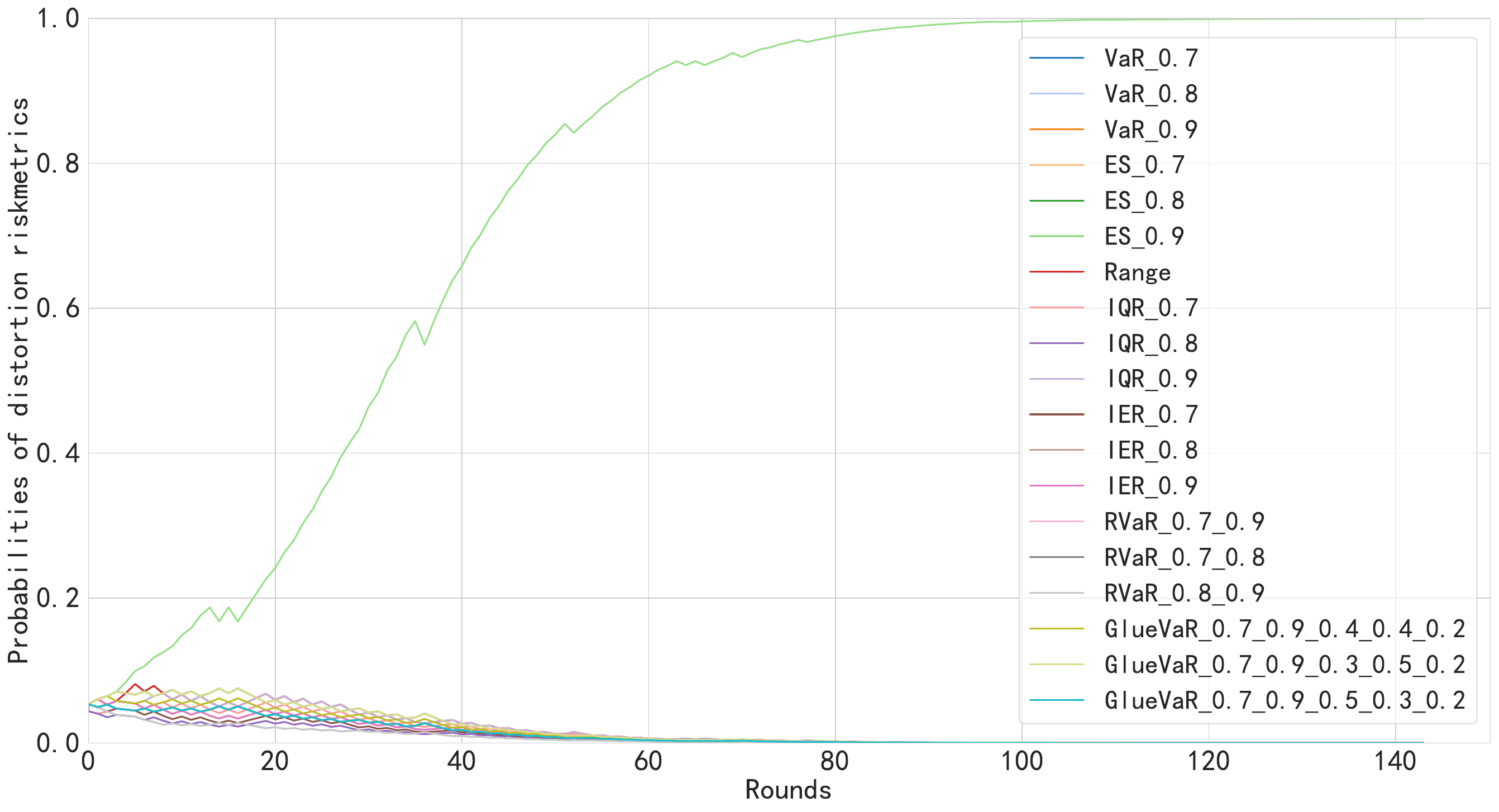}
			%\caption{$\hat{p}=0.95$}
		\end{minipage}%
	}%   
	\centering
	\caption{Posterior probabilities of the distortion riskmetrics at each round. The agent's true risk preference is $\text{ES}_{0.95}$ that is not included in $\hat{\H}$. The posterior  probability of $\text{ES}_{0.9}$ converges to 1.}
	\label{fig:ES_0.95-new-user-model}
\end{figure}

\begin{figure}[htbp]
	\small
	\centering
	\subfigure[$\hat{p}=0.6$]{
		\begin{minipage}[t]{0.5\textwidth}
			\centering
			\includegraphics[width=\linewidth]{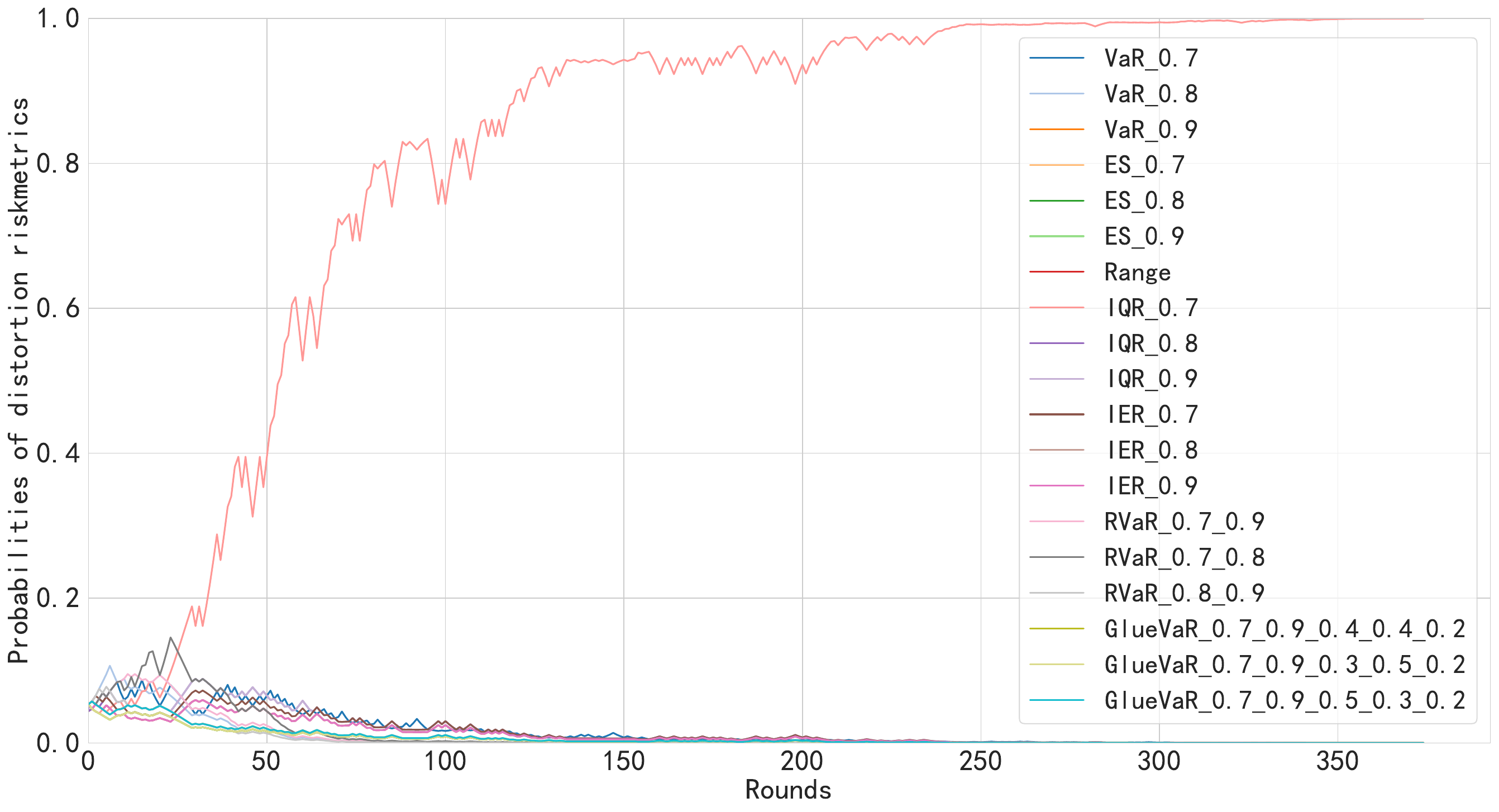}
			%\caption{$\hat{p}=0.75$}
		\end{minipage}%
	}%
	\subfigure[$\hat{p}=0.9$]{
		\begin{minipage}[t]{0.5\textwidth}
			\centering
			\includegraphics[width=\linewidth]{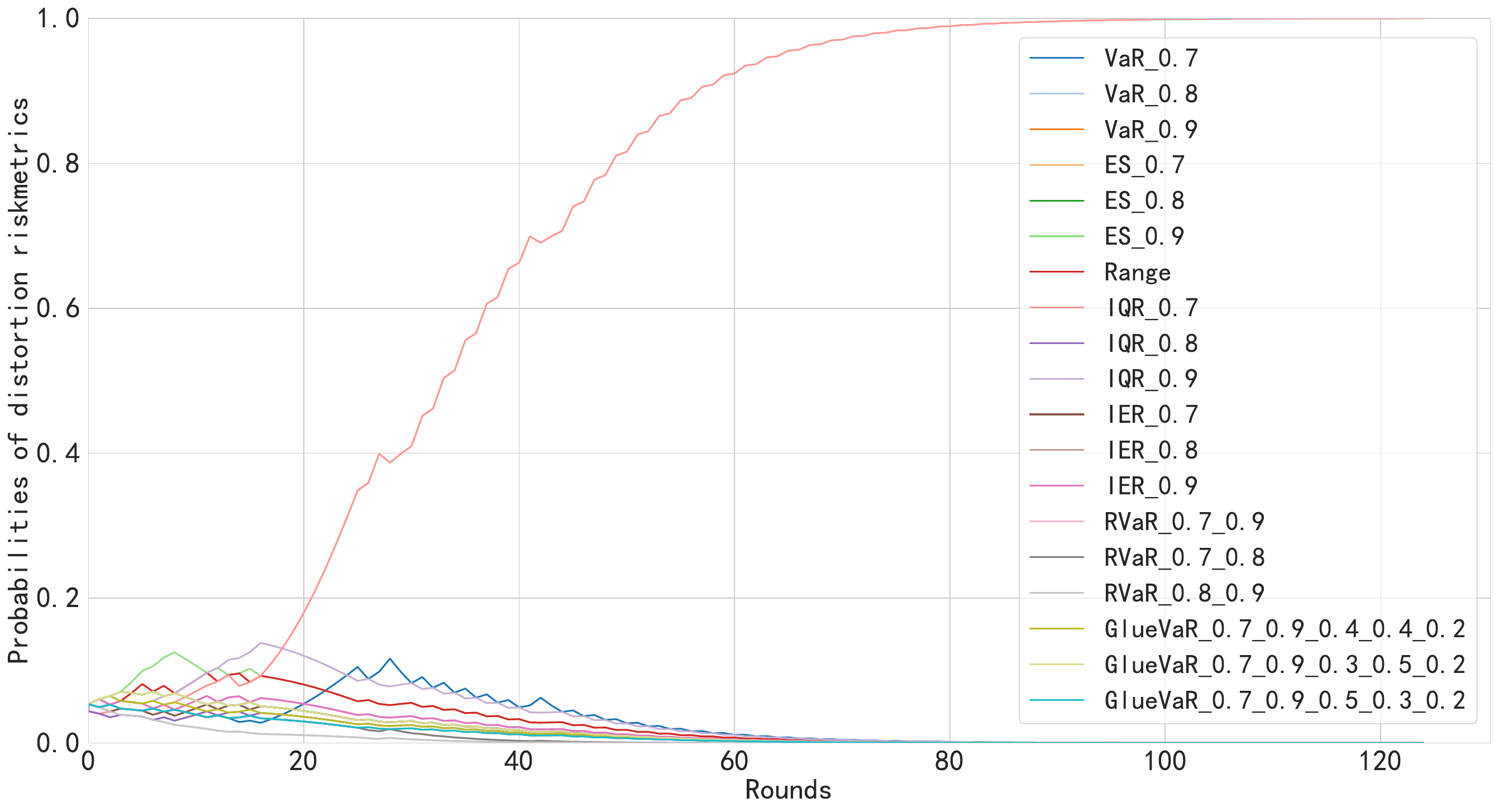}
			%\caption{$\hat{p}=0.95$}
		\end{minipage}%
	}%   
	\centering
	\caption{Posterior probabilities of the distortion riskmetrics at each round. The agent's true risk preference is $\text{IQR}_{0.6}$ that is not included in $\hat{\H}$. The posterior  probability of $\text{IQR}_{0.7}$ converges to 1.}
	\label{fig:IER_0.6-new-user-model}
\end{figure}

\subsection{Decision-Making under Distortion Riskmetrics Optimization Objectives}
After identifying the agent’s risk preference, we construct a portfolio that trades the S$\&$P 500 index and our objective is to solve problem \eqref{prob:SRP}. We assume that all trading actions occur at market close and therefore use the S$\&$P 500 daily closing prices for our tests. We collect 21-year daily data of the S$\&$P 500 index from 2005/01/03 to 2026/03/27  from Wharton Research Data Services (WRDS). The dataset consists of 5342 daily observations in total. We use the first 4333 observations as the training set and the subsequent 1009 observations as the test set. Rolling-window tests are conducted on the test set, resulting in a total of 1000 test paths. The input variables of the policy network at time $t$ include the index value $S_t$, the time-to-maturity $\tau_t$, the current wealth $W_t$ and the regime $Y_t$, while the output $\delta_t$ is the quantity of shares to buy or sell. The wealth process follows the dynamic 
\begin{equation*}
    W_{t+1}=W_{t}+\delta_{t}(S_{t+1}-S_t).
\end{equation*}
The initial wealth is $W_0=300000$ and the largest time to maturity $\tau_0$ is $\frac{10}{252}$. The loss is defined as $W_0-W_T$. We identify bull and bear market regimes from the historical data using a method similar to that of \cite{dai2010trend} and \cite{dai2016optimal}.  The market regime is classified as a bear market if the current index price falls by 19\% from its most recent peak. Conversely, it is classified as a bull market if the current index price rises by 24\% from its most recent trough. We choose the following distortion-riskmetric objectives: $  \mathrm{RVaR}_{0.6,0.9}  $, $  \mathrm{RVaR}_{0.5,0.9}  $, $  \mathrm{IQR}_{0.95}  $, $  \mathrm{IQR}_{0.9}  $, $  \mathrm{CVaR}_{0.9}  $, $  \mathrm{CVaR}_{0.95}  $, $  \mathrm{VaR}_{0.85}  $, $  \mathrm{VaR}_{0.95}  $, $  \mathrm{IER}_{0.9}  $, and GlueVaR ($0.6 \mathrm{CVaR}_{0.8}(X) + 0.25 \mathrm{CVaR}_{0.9}(X) + 0.15 \mathrm{VaR}_{0.8}(X)$). In the algorithm implementation, the coefficients in \eqref{eq:GAE} are $\gamma=1$ and $\lambda_{\text{GAE}}=0.95$;  the coefficients in \eqref{eq:total_loss} are $c_0=0$ and $c_1=0.04$; the coefficients in \eqref{eq:var_total_loss} are $c_2=0.08$ and $c_3=0.05$. 
%The parameter in the Richardson extrapolation recursion is $L_R=5$. 
As for the neural networks, there are 3 hidden layers, with each hidden layer comprising 64 neurons.  

Figures \ref{fig:Learned-policy-in-the-bull-market} and \ref{fig:Learned-policy-in-the-bear-market} compare the learned policies under different distortion-riskmetric objectives in bull and bear markets. Overall, for a fixed wealth level, the agent tends to sell at high prices and buy at low prices. As wealth increases, the agent can tolerate higher asset prices, so the buy--sell boundary generally slopes upward with diminishing marginal effects. The boundary often exhibits a hump-shaped pattern or an arch-shaped pattern: low-wealth agents buy at lower prices to recover capital, whereas high-wealth agents trade less to satisfy the risk-optimization objective. In the bear market, the buy region expands markedly relative to the bull market, and the buy--sell boundary shifts upward and rightward. This is plausible because depressed prices imply greater upside potential, while short positions are exposed to large losses from price rebounds. Since the policy patterns are broadly similar across regimes aside from the sizes of the buy and sell regions, we focus on differences among distortion-riskmetric objectives in the bull market.

\begin{itemize}
    \item Under $\mathrm{CVaR}_{0.90}$, the buy region is mainly concentrated in low-price and low-to-intermediate-wealth states. Relative to $\mathrm{CVaR}_{0.90}$, $\mathrm{CVaR}_{0.95}$ yields a smaller buy region and a larger short region, reflecting greater conservatism. A similar pattern holds for $\mathrm{VaR}_{0.85}$ and $\mathrm{VaR}_{0.95}$, with the buy-sell boundary under $\mathrm{VaR}_{0.95}$ shifting downward.
    
    \item The GlueVaR policy, combining $\mathrm{VaR}_{0.8}$, $\mathrm{CVaR}_{0.8}$, and $\mathrm{CVaR}_{0.9}$, is similar to the $\mathrm{CVaR}_{0.9}$ policy but has a wider buy region. This is because GlueVaR also incorporates the relatively less conservative risk measures $\mathrm{VaR}_{0.8}$ and $\mathrm{CVaR}_{0.8}$.
    
    \item $\mathrm{RVaR}_{0.5,0.9}$ produces the broadest buy region, because it averages quantiles only over an intermediate range and is therefore more tolerant of downside risk while emphasizing wealth recovery. Compared with $\mathrm{RVaR}_{0.5,0.9}$, $\mathrm{RVaR}_{0.6,0.9}$ shifts the buy--sell boundary uniformly downward.
    
    \item The transition from $\mathrm{IQR}_{0.9}$ to $\mathrm{IQR}_{0.95}$ is non-monotone. Under $\mathrm{IQR}_{0.95}$, the boundary is slightly higher at very low wealth levels but clearly lower at medium and high wealth levels. This reflects the fact that $\mathrm{IQR}_{\alpha}$ compresses the gap between favorable and unfavorable outcomes: loss compression dominates at very low wealth, making buying more attractive, whereas profit compression becomes more important as wealth increases, leading to more conservative policies.
    
    \item The $\mathrm{IER}_{0.90}$ policy has a geometry similar to that of $\mathrm{CVaR}_{0.90}$, but it penalizes both upper- and lower-tail averages of the loss distribution. Consequently, when the wealth is high and the price is low, the policy tends to hold zero positions to avoid extreme profits; when the wealth is low and the price is low, it tends to buy more to reduce extreme losses.
\end{itemize}

\begin{figure}[htbp]
	\small
	\centering
	\subfigure[$\text{CVaR}_{0.90}$]{
		\begin{minipage}[t]{0.24\textwidth}
			\centering
			\includegraphics[width=\linewidth]{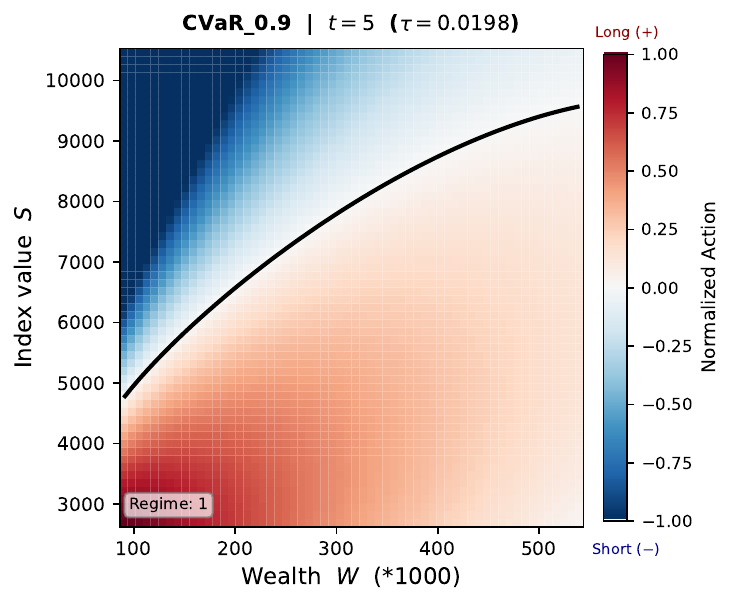}
			%\caption{$\hat{p}=0.75$}
		\end{minipage}%
	}%
	\subfigure[$\text{CVaR}_{0.95}$]{
		\begin{minipage}[t]{0.24\textwidth}
			\centering
			\includegraphics[width=\linewidth]{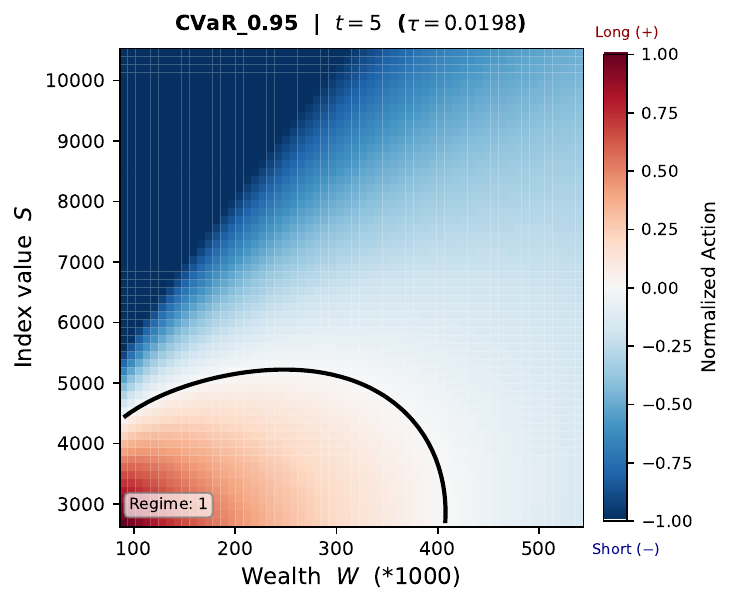}
			%\caption{$\hat{p}=0.95$}
		\end{minipage}%
	}%   
    \subfigure[GlueVaR]{
		\begin{minipage}[t]{0.24\textwidth}
			\centering
			\includegraphics[width=\linewidth]{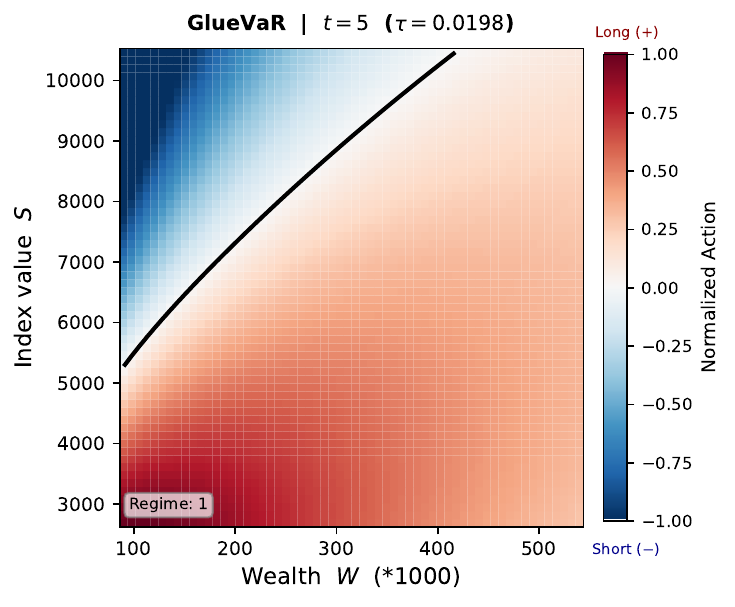}
			%\caption{$\hat{p}=0.75$}
		\end{minipage}%
	}%
	\subfigure[$\text{IER}_{0.90}$]{
		\begin{minipage}[t]{0.24\textwidth}
			\centering
			\includegraphics[width=\linewidth]{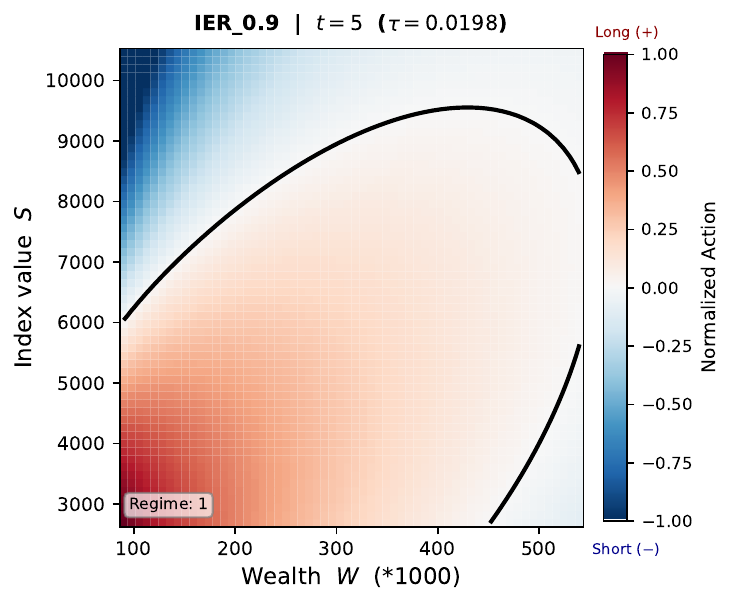}
			%\caption{$\hat{p}=0.95$}
		\end{minipage}%
	}%   

    \subfigure[$\text{IQR}_{0.90}$]{
		\begin{minipage}[t]{0.24\textwidth}
			\centering
			\includegraphics[width=\linewidth]{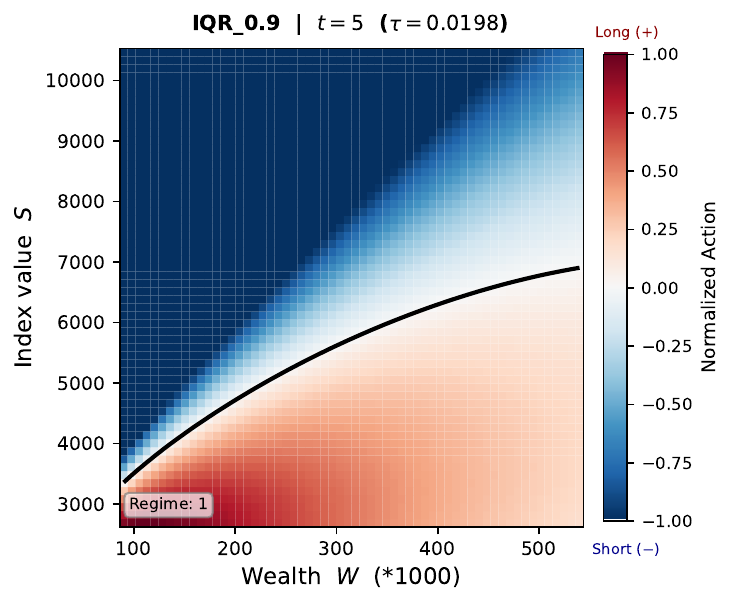}
			%\caption{$\hat{p}=0.95$}
		\end{minipage}%
	}%   
    \subfigure[$\text{IQR}_{0.95}$]{
		\begin{minipage}[t]{0.24\textwidth}
			\centering
			\includegraphics[width=\linewidth]{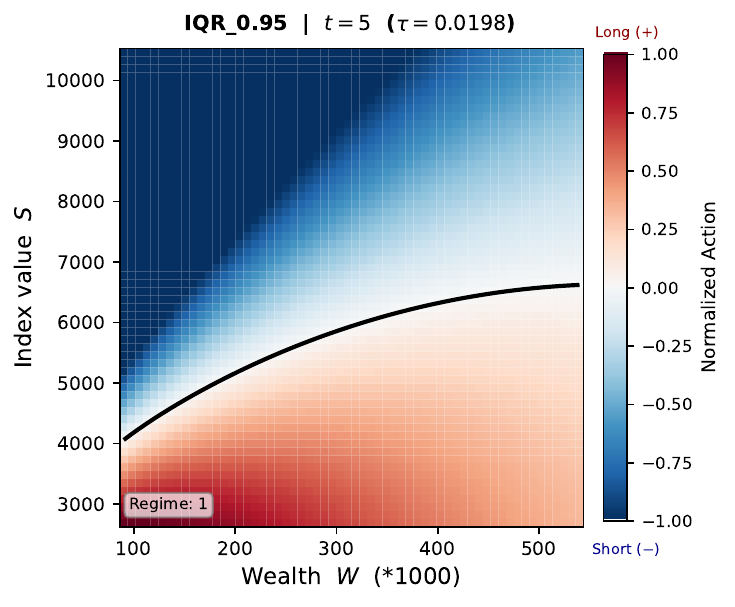}
			%\caption{$\hat{p}=0.95$}
		\end{minipage}%
	}%  
    \subfigure[$\text{RVaR}_{0.50,0.90}$]{
		\begin{minipage}[t]{0.24\textwidth}
			\centering
			\includegraphics[width=\linewidth]{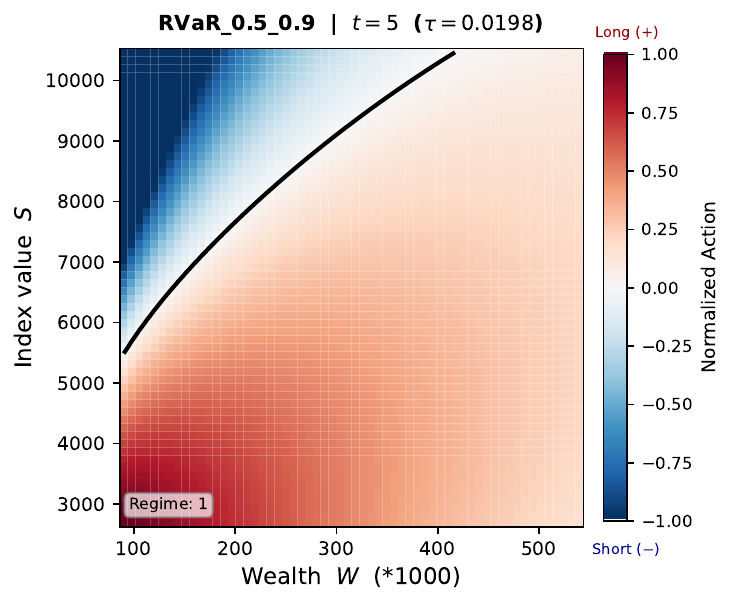}
			%\caption{$\hat{p}=0.95$}
		\end{minipage}%
	}%  
    \subfigure[$\text{RVaR}_{0.60,0.90}$]{
		\begin{minipage}[t]{0.24\textwidth}
			\centering
			\includegraphics[width=\linewidth]{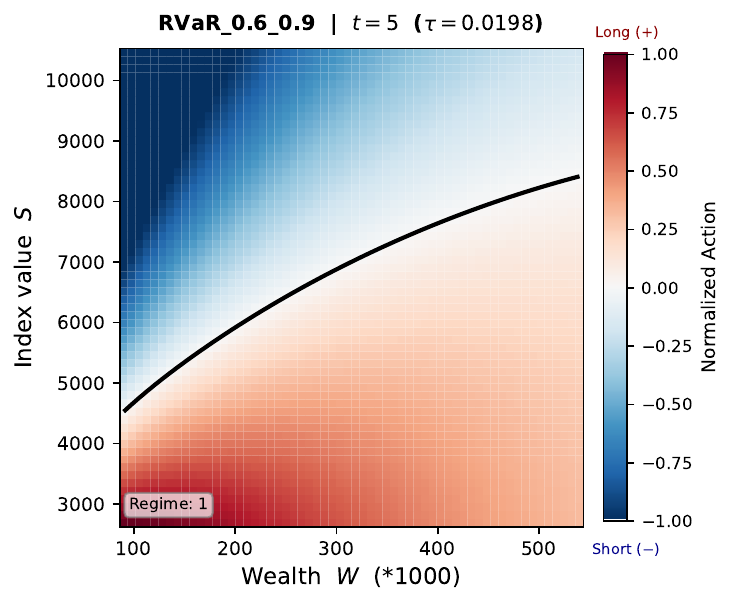}
			%\caption{$\hat{p}=0.95$}
		\end{minipage}%
	}%  

    \subfigure[$\text{VaR}_{0.85}$]{
		\begin{minipage}[t]{0.24\textwidth}
			\centering
			\includegraphics[width=\linewidth]{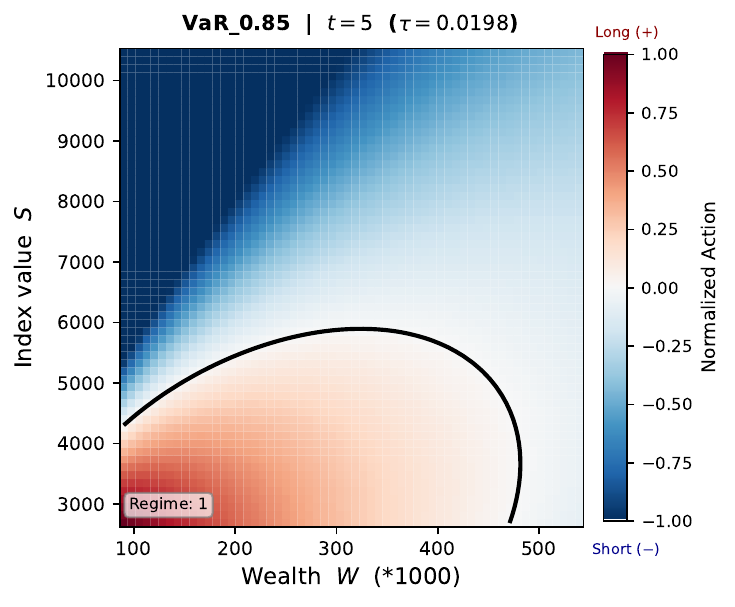}
			%\caption{$\hat{p}=0.95$}
		\end{minipage}%
	}% 
    \subfigure[$\text{VaR}_{0.95}$]{
		\begin{minipage}[t]{0.24\textwidth}
			\centering
			\includegraphics[width=\linewidth]{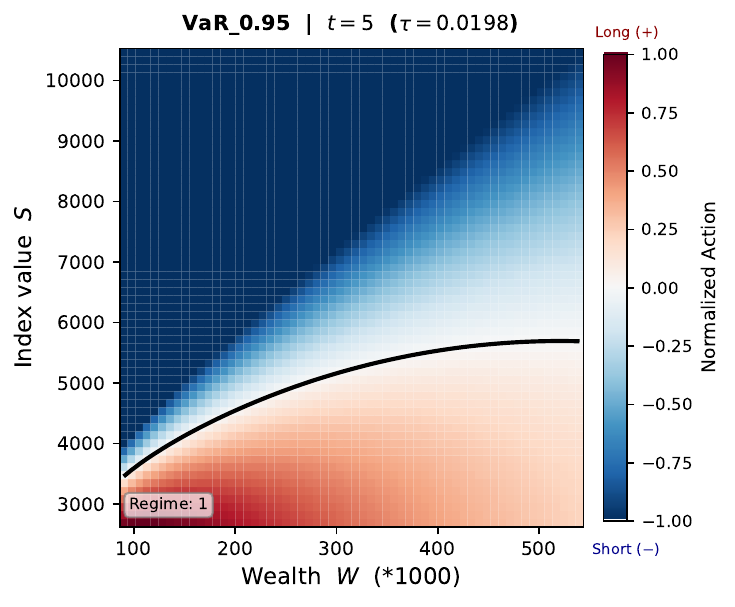}
			%\caption{$\hat{p}=0.95$}
		\end{minipage}%
	}% 
	\centering
	\caption{Learned policy in the bull market (regime $=1$). Colors indicate normalized actions in $[-1,1]$, with positive and negative values representing long and short positions, respectively.}
	\label{fig:Learned-policy-in-the-bull-market}
\end{figure}

\begin{figure}[htbp]
	\small
	\centering
	\subfigure[$\text{CVaR}_{0.90}$]{
		\begin{minipage}[t]{0.24\textwidth}
			\centering
			\includegraphics[width=\linewidth]{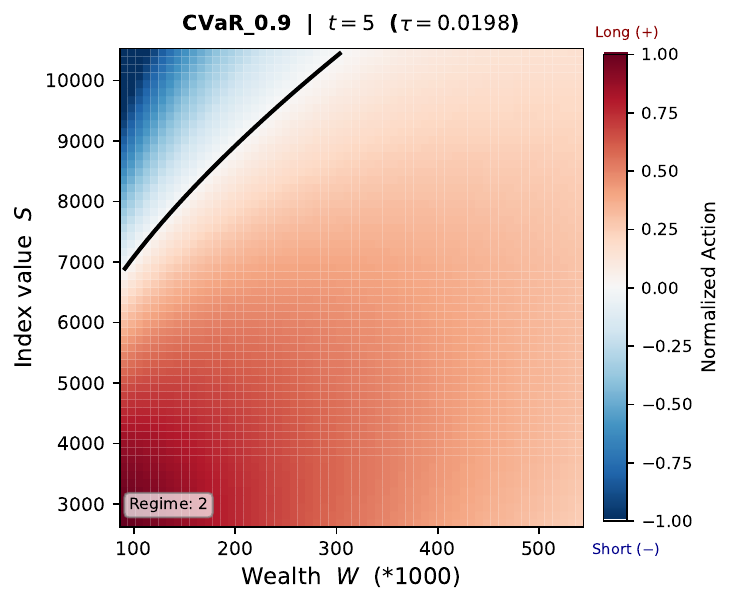}
			%\caption{$\hat{p}=0.75$}
		\end{minipage}%
	}%
	\subfigure[$\text{CVaR}_{0.95}$]{
		\begin{minipage}[t]{0.24\textwidth}
			\centering
			\includegraphics[width=\linewidth]{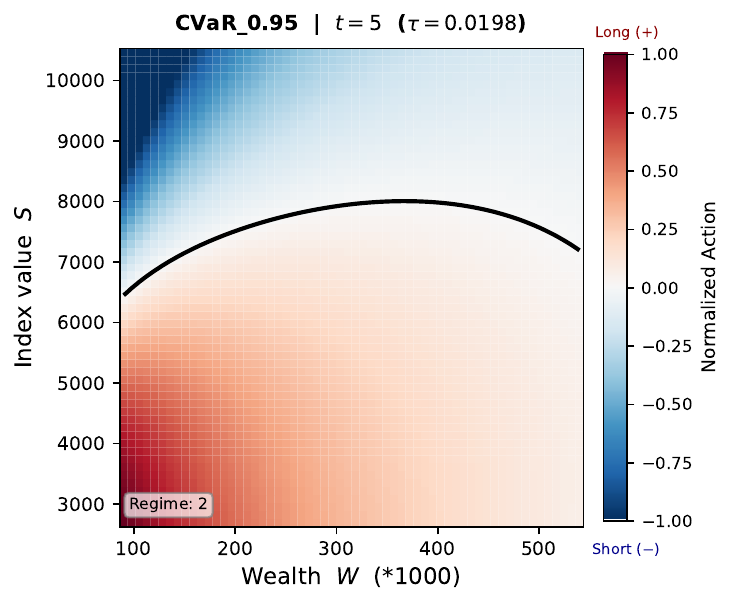}
			%\caption{$\hat{p}=0.95$}
		\end{minipage}%
	}%   
    \subfigure[GlueVaR]{
		\begin{minipage}[t]{0.24\textwidth}
			\centering
			\includegraphics[width=\linewidth]{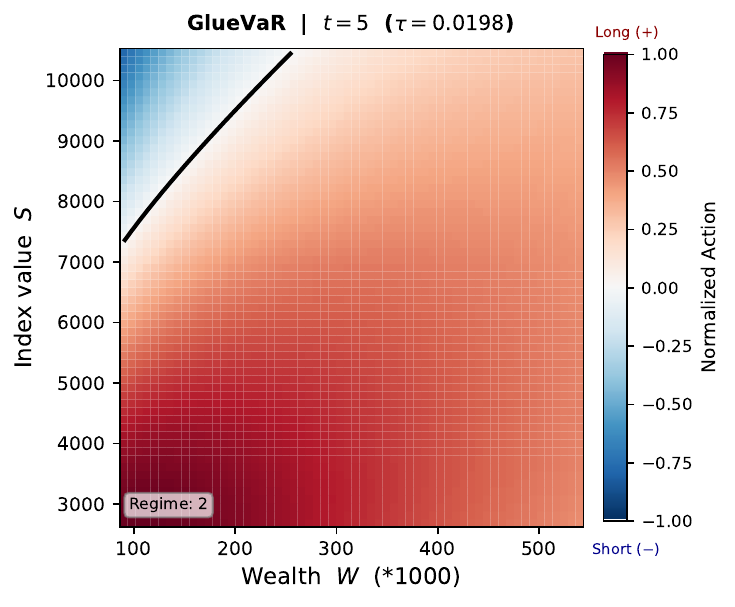}
			%\caption{$\hat{p}=0.75$}
		\end{minipage}%
	}%
	\subfigure[$\text{IER}_{0.90}$]{
		\begin{minipage}[t]{0.24\textwidth}
			\centering
			\includegraphics[width=\linewidth]{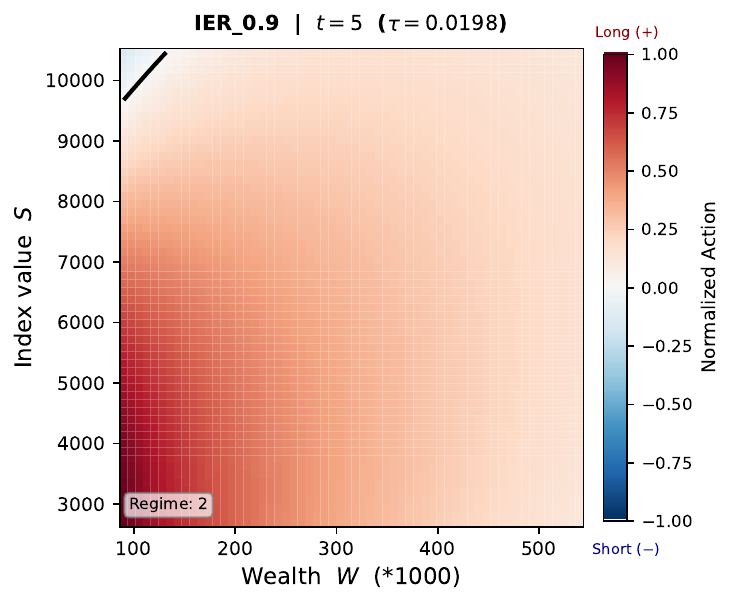}
			%\caption{$\hat{p}=0.95$}
		\end{minipage}%
	}%   

    \subfigure[$\text{IQR}_{0.90}$]{
		\begin{minipage}[t]{0.24\textwidth}
			\centering
			\includegraphics[width=\linewidth]{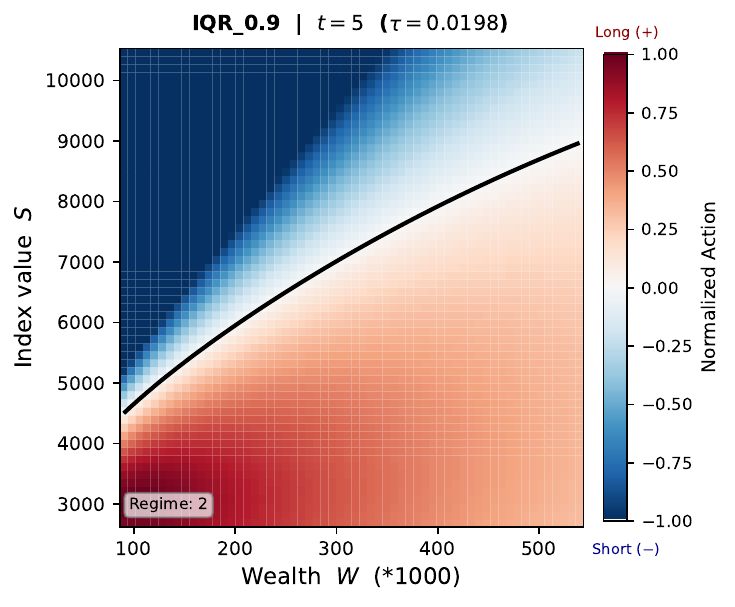}
			%\caption{$\hat{p}=0.95$}
		\end{minipage}%
	}%   
    \subfigure[$\text{IQR}_{0.95}$]{
		\begin{minipage}[t]{0.24\textwidth}
			\centering
			\includegraphics[width=\linewidth]{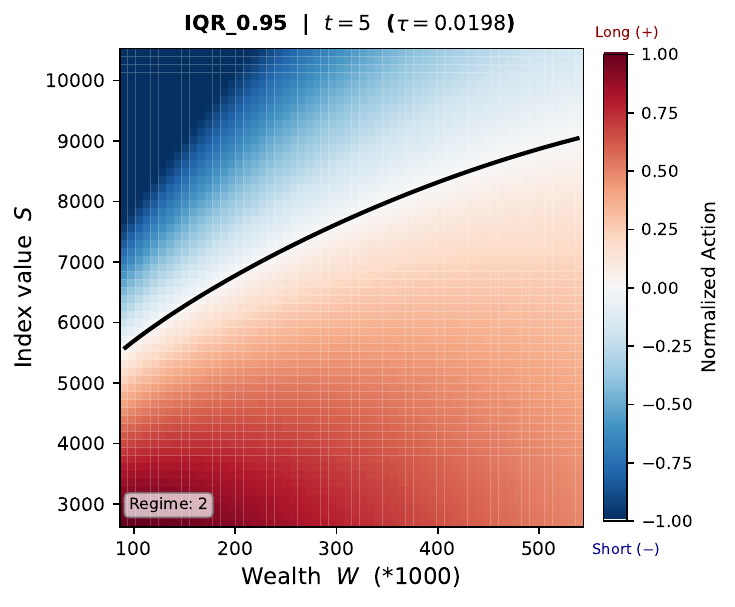}
			%\caption{$\hat{p}=0.95$}
		\end{minipage}%
	}%  
    \subfigure[$\text{RVaR}_{0.50,0.90}$]{
		\begin{minipage}[t]{0.24\textwidth}
			\centering
			\includegraphics[width=\linewidth]{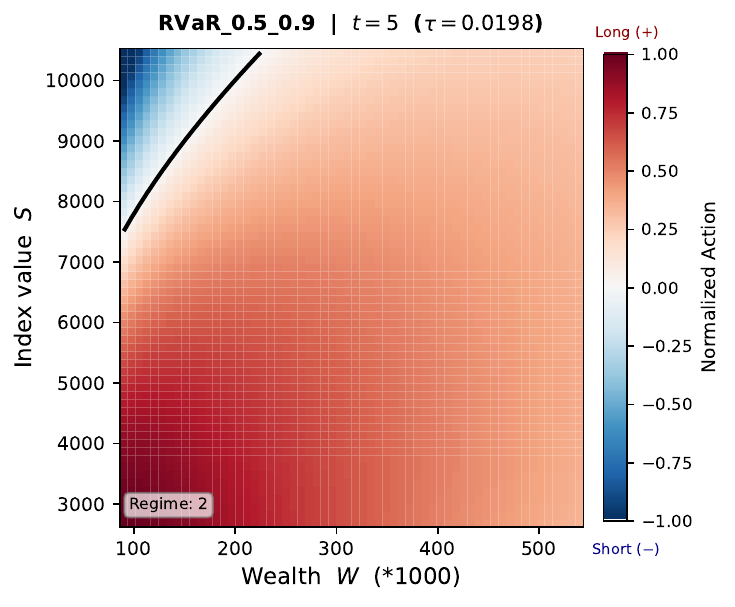}
			%\caption{$\hat{p}=0.95$}
		\end{minipage}%
	}%  
    \subfigure[$\text{RVaR}_{0.60,0.90}$]{
		\begin{minipage}[t]{0.24\textwidth}
			\centering
			\includegraphics[width=\linewidth]{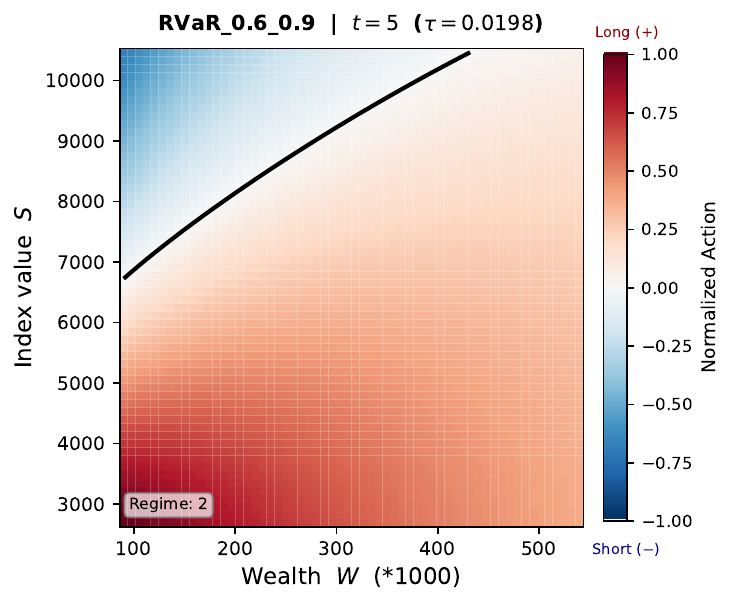}
			%\caption{$\hat{p}=0.95$}
		\end{minipage}%
	}%  

    \subfigure[$\text{VaR}_{0.85}$]{
		\begin{minipage}[t]{0.24\textwidth}
			\centering
			\includegraphics[width=\linewidth]{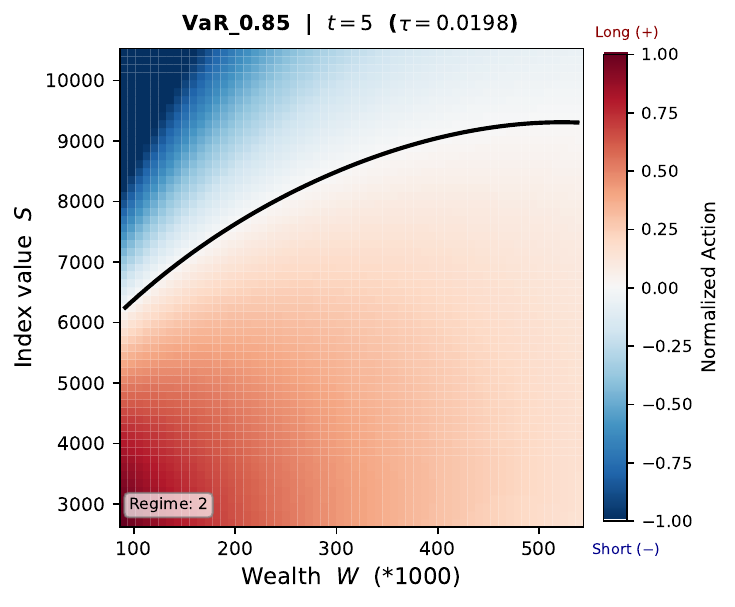}
			%\caption{$\hat{p}=0.95$}
		\end{minipage}%
	}% 
    \subfigure[$\text{VaR}_{0.95}$]{
		\begin{minipage}[t]{0.24\textwidth}
			\centering
			\includegraphics[width=\linewidth]{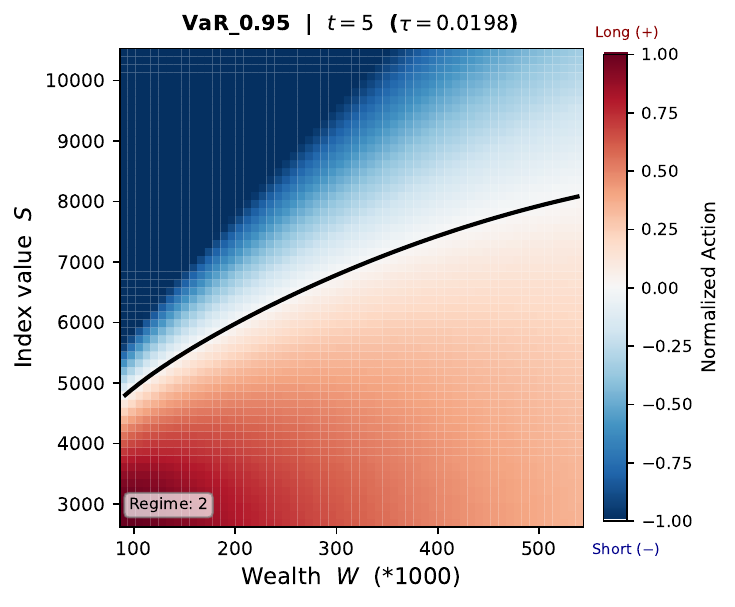}
			%\caption{$\hat{p}=0.95$}
		\end{minipage}%
	}% 
	\centering
	\caption{Learned policy in the bear market (regime $=2$). Colors indicate normalized actions in $[-1,1]$, with positive and negative values representing long and short positions, respectively.}
	\label{fig:Learned-policy-in-the-bear-market}
\end{figure}

\section{Conclusion}\label{sec:conslusion}
In this paper, we develop an IRL-RL framework for eliciting agents' risk preferences and optimizing decisions under the elicited preferences. Our framework applies to a broad class of distortion riskmetrics, including non-convex and non-coherent cases. We collect agents' choices through iterative questioning and employ a Bayesian IRL framework to identify their risk preferences, which remains effective despite agents' stochastic choices. We establish the existence of a finite set of distinguishing questions and prove the convergence rate of our IRL algorithm. In the decision-making stage, we extend the PPO algorithm to the conditional distortion riskmetric objective by representing the objective as an integral of the cost quantile function over $[0,1]$ with respect to the distortion function, and approximate it using a quantile neural network and the midpoint Riemann sum. Numerical experiments demonstrate the  effectiveness of our IRL-RL framework.

%\THEEndNotes
%\begingroup \parindent 0pt \parskip 0.0ex \def\enotesize{\normalsize} \theendnotes \endgroup

% Appendix here
% Options are (1) APPENDIX (with or without general title) or
%             (2) APPENDICES (if it has more than one unrelated sections)
% Outcomment the appropriate case if necessary
%
% \begin{APPENDIX}{<Title of the Appendix>}
% \end{APPENDIX}
%
%   or
%
% \begin{APPENDICES}
% \section{<Title of Section A>}
% \section{<Title of Section B>}
% etc
% \end{APPENDICES}

% Acknowledgments here
\ACKNOWLEDGMENT{The authors are grateful to members of the research group on financial mathematics and risk management at The Chinese University of Hong Kong (Shenzhen) for their useful feedback and conversations. 
Yang Liu acknowledges financial support from the National Natural Science Foundation of China (Grant No. 12401624), The Chinese University of Hong Kong (Shenzhen) University Development Fund (Grant No. UDF01003336), Guangdong Science and Technology Program (Grant No. 2024QN11X076) and Shenzhen Science and Technology Program (Grant No. RCBS20231211090814028, JCYJ20250604141203005, 2025TC0010) and is partly supported by the Guangdong Provincial Key Laboratory of Mathematical Foundations for Artificial Intelligence (Grant No. 2023B1212010001). Yunran Wei acknowledges financial support from the Natural Sciences and Engineering Research Council of Canada (RGPIN-2023-04674, DGECR-2023-00454), and the start-up fund at Carleton University and thanks %the School of Science and Engineering at 
The Chinese University of Hong Kong (Shenzhen) for the kind hospitality during her visit in 2025. }

% References here (outcomment the appropriate case)

% CASE 1: BiBTeX used to constantly update the references
%   (while the paper is being written).
%\bibliographystyle{informs2014} % outcomment this and next line in Case 1
%\bibliography{<your bib file(s)>} % if more than one, comma separated

\bibliographystyle{informs2014} % outcomment this and next line in Case 1
\bibliography{references} % if more than one, comma separated

% CASE 2: BiBTeX used to generate mypaper.bbl (to be further fine tuned)
%\input{mypaper.bbl} % outcomment this line in Case 2

%If you don't use BiBTex, you can manually itemize references as shown below.

%\bibliographystyle{nonumber}

%%%%%%%%%%%%%%%%%
\end{document}